\def\eqref#1{equation~\ref{#1}}
\def\1{\bm{1}}
\DeclareMathAlphabet{\mathsfit}{\encodingdefault}{\sfdefault}{m}{sl}
\SetMathAlphabet{\mathsfit}{bold}{\encodingdefault}{\sfdefault}{bx}{n}
\newcommand{\R}{\mathbb{R}}
\DeclareMathOperator*{\argmax}{arg\,max}
\DeclareMathOperator*{\argmin}{arg\,min}
\author{Duc Anh Nguyen$^{*}$\\
LMU Munich\\
\And
Ron Levie$^{*}$\\
Technion \\
Israel Institute of Technology\\
\And
Julian Lienen$^{*}$\\
Paderborn University\\
\And
Gitta Kutyniok \\
LMU Munich\\
University of Tromsø\\
\And
Eyke Hüllermeier \\
LMU Munich
}
\newcommand{\N}{\mathbb{N}}
\newcommand{\calS}{\mathcal{S}}
\newcommand{\calP}{\mathcal{P}}
\newcommand{\calF}{\mathcal{F}}
\newcommand{\calL}{\mathcal{L}}
\newcommand{\norm}[1]{\left\Vert #1 \right\Vert}
\newcommand{\set}[1]{\left\lbrace #1\right\rbrace}
\newcommand{\sprod}[1]{\left\langle #1 \right\rangle}
\renewcommand{\vec}[1]{\boldsymbol{#1}}
\renewcommand{\hbar}{\bar{\vec{h}}}
\newcommand{\SigmaW}{\vec{\Sigma}_W}
\newcommand{\SigmaB}{\vec{\Sigma}_B}
\newcommand{\trace}{\text{trace}}
\newcommand{\mem}{\text{mem}}
\theoremstyle{plain}
\newtheorem{theorem}{Theorem}[section]
\newtheorem{lemma}[theorem]{Lemma}
\theoremstyle{definition}
\newtheorem{definition}[theorem]{Definition}
\newtheorem{assumption}[theorem]{Assumption}
\theoremstyle{remark}
\newtheorem{remark}{Remark}
\title{Memorization-Dilation: \\ Modeling Neural Collapse Under Label Noise}
\begin{document}

\maketitle
{\def \thefootnote{*} \footnotetext{These authors contributed equally to this work.}}

\begin{abstract}
    The notion of neural collapse refers to several emergent phenomena that have been empirically observed across various canonical classification problems.  During the terminal phase of training a deep neural network, the feature embedding of all examples of the same class tend to collapse to a single representation, and the features of different classes tend to separate as much as possible. Neural collapse is often studied through a simplified model, called the layer-peeled model, in which the network is assumed to have ``infinite expressivity'' and can map each data point to any arbitrary representation. In this work we study a more realistic variant of the layer-peeled model, which takes the positivity of the features into account. Furthermore, we extend this model to also incorporate the limited expressivity of the network. Empirical evidence suggests that the memorization of noisy data points leads to a degradation (dilation) of the neural collapse. Using a model of the memorization-dilation (M-D) phenomenon, we show one mechanism by which different losses lead to different performances of the trained network on noisy data. Our proofs reveal why label smoothing, a modification of cross-entropy empirically observed to produce a regularization effect, leads to improved generalization in classification tasks.
\end{abstract}

\section{Introduction}\label{Sec:intro}

The empirical success of deep neural networks has accelerated the introduction of new learning algorithms and triggered new applications, with a pace that makes it hard to keep up with profound theoretical foundations and insightful explanations.  
 As one of the few yet particularly appealing theoretical characterizations of overparameterized models trained for canonical classification tasks, \textit{Neural Collapse} (NC) provides a mathematically elegant formalization of learned feature representations \citet{DBLP:journals/corr/abs-2008-08186}.  

To explain NC, consider the following setting. Suppose we are given a \emph{balanced} dataset $\mathcal{D} = \set{(\vec{x}_n^{(k)}, y_n)}_{k \in [K], n \in [N]} \subset \mathcal{X} \times \mathcal{Y}$ in the instance space $\mathcal{X} = \mathbb{R}^d$ and label space $\mathcal{Y} = [N] := \{1, \ldots, N\}$, i.e.\  each class $n \in [N]$ has exactly $K$ samples $\vec{x}_{n}^{(1)}, \hdots, \vec{x}_{n}^{(K)}$. We consider network architectures commonly used in classification tasks that are composed of a \emph{feature engineering} part $g \colon \mathcal{X} \to \R^M$ (which maps an input signal $ \vec{x} \in \mathcal{X}$ to its feature representation $g(\vec{x}) \in \R^M$) and a \emph{linear classifier} $\vec{W}(\cdot) + \vec{b}$ given by a weight matrix $\vec{W}\in \R^{N \times M}$ as well as a bias vector $\vec{b}\in \R^N$. Let $\vec{w}_n$  denote the row vector of $\vec{W}$ associated with class $n\in[N]$. During training, both classifier components are simultaneously optimized by minimizing the cross-entropy loss.

Denoting the \emph{feature representations} $g(\vec{x}_n^{(k)})$ of the sample $\vec{x}_n^{(k)}$ by $\vec{h}_n^{(k)}$, the \emph{class means} and the \emph{global mean} of the features by 
$$
\vec{h}_n := \frac{1}{K}\sum_{i=1}^{K} \vec{h}_n^{(k)}, \qquad
\vec{h} := \frac{1}{N}\sum_{n=1}^N \vec{h}_n,
$$
NC consists of the following interconnected phenomena (where the limits take place as training progresses):  
\begin{enumerate}
    \item[\textbf{(NC1)}] \textbf{Variability collapse.} For each class $n\in [N]$, we have $\frac{1}{K}\sum_{k=1}^K \norm{\vec{h}_n^{(k)} - \vec{h}_n}^2 \to 0 \, .$
    
    \item[\textbf{(NC2)}] \textbf{Convergence to simplex equiangular tight frame (ETF) structure.} For any $m,n \in [N]$ with $m \neq n$, we have
    \begin{align*}
        \quad \norm{\vec{h}_n-\vec{h}}_2 - \norm{\vec{h}_m-\vec{h}}_2 &\to 0, \text{ and} \\
        \quad \sprod{\frac{\vec{h}_n-\vec{h}}{\norm{\vec{h}_n-\vec{h}}_2}, \frac{\vec{h}_m-\vec{h}}{\norm{\vec{h}_m-\vec{h}}_2}} &\to - \frac{1}{N-1} \, .
    \end{align*}
    
    \item[\textbf{(NC3)}] \textbf{Convergence to self-duality.} For any $n \in [N]$, it holds 
    \begin{align*}
        \frac{\vec{h}_n-\vec{h}}{\norm{\vec{h}_n-\vec{h}}_2} - \frac{\vec{w}_n}{\norm{\vec{w}_n}_2} \to 0 \, .
    \end{align*}
    
    \item[\textbf{(NC4)}] \textbf{Simplification to nearest class center behavior.} For any feature representation $\vec{u} \in \R^M$, it holds
    \begin{align*}
        \argmax_{n \in [N]} \sprod{\vec{w}_n, \vec{u}} +\vec{b}_n \to \argmin_{n \in [N]} \norm{\vec{u} -\vec{h}_n}_2 \, .
    \end{align*}
\end{enumerate}

In this paper, we consider a well known simplified model, in which the features $\vec{h}_n^{(k)}$ are not parameterized by the feature engineering network $g$ but are rather free variables. This model is often referred to as \emph{layer-peeled} model or \emph{unconstrained features} model, see e.g. \cite{DBLP:journals/corr/abs-2012-08465, Fang2021ExploringDN, DBLP:journals/corr/abs-2105-02375}. However, as opposed to those contributions, in which the features $\vec{h}_n^{(k)}$ can take any value in $\R^M$, we consider here the case $\vec{h}_n^{(k)} \geq 0$ (understood component-wise). This is motivated by the fact that features are typically the outcome of some non-negative activation function, like the Rectified Linear Unit (ReLU) or sigmoid.  
 Moreover, by incorporating the limited expressivity of the network to the layer-peeled model, we propose a new model, called \emph{memorization-dilation} (MD).  Given such model assumptions, we formally prove advantageous effects of the so-called label smoothing (LS) technique \cite{DBLP:journals/corr/SzegedyVISW15} (training with a modification of cross-entropy (CE) loss), 
 in terms of generalization performance. This is further confirmed empirically.
 

\section{Related Work}
\label{sec:relatedwork}



Studying the nature of neural network optimization is challenging. In the past, a plethora of theoretical models has been proposed to do so \cite{Sun2020OptimizationFD}. These range from analyzing simple linear \cite{pmlr-v97-kunin19a,DBLP:journals/jmiv/ZhuSEW20,DBLP:conf/icml/LaurentB18} to non-linear deep neural networks \cite{DBLP:journals/corr/SaxeMG13,DBLP:conf/iclr/YunSJ18}. As one prominent framework among others, Neural Tangent Kernels \cite{DBLP:conf/nips/JacotHG18,DBLP:journals/corr/abs-2106-10165}, where neural networks are considered as linear models on top of randomized features, have been broadly leveraged for studying deep neural networks and their learning properties. 

Many of the theoretical properties of deep neural networks in the regime of overparameterization are still unexplained. Nevertheless, certain peculiarities have emerged recently. Among those, so-called ``benign overfitting'' \cite{DBLP:journals/corr/abs-1906-11300,DBLP:journals/corr/abs-2106-03212}, where deep models are capable of perfectly fitting potentially noisy data by retaining accurate predictions, has recently attracted attention. Memorization has been identified as one significant factor contributing to this effect \cite{DBLP:conf/icml/ArpitJBKBKMFCBL17,DBLP:conf/iclr/SanyalDKT21}, which also relates to our studies. Not less interesting, the learning risk of highly-overparameterized models shows a \textit{double-descent} behavior when varying the model complexity \cite{DBLP:conf/iclr/NakkiranKBYBS20} as yet another phenomenon. Lastly, the concept of NC \cite{DBLP:journals/corr/abs-2008-08186} has recently shed light on symmetries in learned representations of overparameterized models.


After laying the foundation of a rigorous mathematical characterization of the NC phenomenon by \citet{DBLP:journals/corr/abs-2008-08186}, several follow-up works have broadened the picture. As the former proceeds from studying CE loss, the collapsing behavior has been investigated for alternative loss functions. For instance, squared losses have shown similar collapsing characteristics \cite{Poggio2020GeneralizationID, DBLP:journals/corr/abs-2101-00072}, and have paved the way for more opportunities in its mathematical analysis, e.g., by an NC-interpretable decomposition \cite{DBLP:journals/corr/abs-2106-02073}. More recently, \citet{kornblith2021better} provide an exhaustive overview over several commonly used loss functions for training deep neural networks regarding their feature collapses. 


Besides varying the loss function, different theoretical models have been proposed to analyze NC. Most prominently, \textit{unconstrained feature models} have been considered, which characterize the penultimate layer activations as free optimization variables \cite{DBLP:journals/corr/abs-2011-11619,DBLP:journals/corr/abs-2012-08465,e2021emergence}. This stems from the assumption that highly overparameterized models can approximate any patterns in the feature space. While unconstrained features models typically only look at the last feature encoder layer, \textit{layer-peeling} allows for ``white-boxing'' further layers before the last one for a more comprehensive theoretical analysis \citet{Fang2021ExploringDN}. Indeed, this approach has been applied in \citet{Extended_unconstrained_features_model}, which namely extends the unconstrained features model by one layer as well as the ReLU nonlinearity. On the other hand, \citet{DBLP:journals/corr/abs-2105-02375}, \citet{Unconstrained_Layer_Peeled_Perspective} and \citet{https://doi.org/10.48550/arxiv.2203.01238} extend the unconstrained features model analysis by studying the landscape of the loss function therein and the related training dynamics. Beyond unconstrained features models, \citet{DBLP:conf/icml/ErgenP21a} introduce a convex analytical framework to characterize the encoder layers for a more profound understanding of the NC phenomenon. Referring to the implications of NC on our understanding of neural networks, \citet{Limitation_NC_transfer_learning} and \citet{DBLP:journals/corr/abs-2112-15121} discuss the impact of NC on test data in the sense of generalization and transfer learning. Finally, \citet{NC_review_paper} provides a multifaceted survey of recent works related to NC. 

\section{Layer-peeled model with positive features}\label{Section:standard_model}

As a prerequisite to the MD model, in this section we introduce a slightly modified version of the layer-peeled (or unconstrained features) model (see e.g. \cite{DBLP:journals/corr/abs-2105-02375, Fang2021ExploringDN}), in which the features have to be positive.  
 Accordingly, we will show that the global minimizers of the modified layer-peeled model correspond to an NC configuration, which differs from the global minimizers specified in other works and captures more closely the NC phenomenon in practice. 

For conciseness, we denote by $\vec{H}$ the matrix formed by the features $\vec{h}_n^{(k)}$, $n \in [N]$, $k\in[K]$ as columns, and define $\norm{\vec{W}}$ and $\norm{\vec{H}}$ to be the Frobenius norm of the respective matrices, i.e.\  $\norm{\vec{W}}^2 = \sum_{n=1}^N\norm{\vec{w}_n}^2$ and $\norm{\vec{H}}^2 = \sum_{k=1}^K \sum_{n=1}^N \norm{\vec{h}_n^{(k)}}^2$. We consider the regularized version of the model (instead of the norm constraint one as in e.g. \cite{Fang2021ExploringDN}) \footnote[1]{Note that for simplicity we assume that the last layer does not have bias terms, i.e. $b=0$. The result can be however easily extended to the more general case when the biases do not vanish. Namely, in presence of bias terms, the statement of Theorem \ref{theorem:standard_model} and also its proof remain unchanged. }

\begin{align}\label{Problem:standard}\tag{$\calP_\alpha$}
    \begin{split}
        &\min_{\vec{W},\vec{H}} \quad  \calL_\alpha(\vec{W},\vec{H}) := L_\alpha(\vec{W},\vec{H}) + \lambda_W \norm{\vec{W}}^2 + \frac{\lambda_H}{K} \norm{\vec{H}}^2 \\
    &\text{s.t. } \quad \vec{H} \geq 0,
    \end{split}
\end{align}
where $\lambda_W, \lambda_H >0$ are the penalty parameters for the weight decays. By $L_\alpha$ we denote empirical risk with respect to the LS loss with parameter $\alpha \in [0,1)$, where $\alpha = 0$ corresponds to the conventional CE loss. More precisely, given a value of $\alpha$, the LS technique then defines the label assigned to class $n \in [N]$ as the following probability vector:
\begin{align*}
    \vec{y}_n^{(\alpha)} = (1-\alpha) \vec{e}_n + \frac{\alpha}{n} \vec{1}_N \in [0,1]^N,
\end{align*}
where $\vec{e}_n \in \R^N$ denotes the $n$-th standard basis vector and $\vec{1}_N\in \R^N$ denotes the vector consisting of only ones. Let $p: \R^M \to \R^N$ be the function that assigns  to each feature representation $\vec{z}\in \R^M$ the probability scores of the classes (as a probability vector in $\R^N$),
\begin{align*}
    p_{\vec{W}}(\vec{z}) := \operatorname{softmax}(\vec{W}\vec{z}) := \Bigg[ \frac{e^{\sprod{\vec{w}_m,\vec{z}}}}{\sum_{i=1}^N e^{\sprod{\vec{w}_i,\vec{z}}}}\Bigg]_{m=1}^N \in [0,1]^N.
\end{align*}
Then the LS loss corresponding to a sample in class $n \in [N]$ is given by
\begin{align}
\label{LS_loss}
    \ell_{\alpha}(\vec{W},\vec{z},\vec{y}_n^{(\alpha)}) := \sprod{-\vec{y}_{n}^{(\alpha)}, \log p_{\vec{W}}(\vec{z})} := \sum_{m=1}^N -\vec{y}_{nm}^{(\alpha)} \log \big(p_{\vec{W}}(\vec{z})_m\big)
\end{align}
and the LS empirical risk $L_\alpha$ is defined as
\begin{align*}
    L_\alpha(\vec{W},\vec{H}) &= \frac{1}{NK} \sum_{k=1}^K \sum_{n=1}^N \ell_\alpha\Big(\vec{W}, \vec{h}_n^{(k)}, \vec{y}_n^{(\alpha)} \Big).
\end{align*}

We will show that in common settings, the minimizers of (\ref{Problem:standard} ) correspond to \emph{neural collapse (NC) configurations}, which we formalize in Def. \ref{def_NC} below. 

\begin{definition}[NC configurations]\label{def_NC}
Let $K,M,N\in \N$, $M\geq N$. A pair $(\vec{W},\vec{H})$ of a weight matrix formed by rows $\vec{w}_n \in \R^M$ and a feature matrix formed by columns $\vec{h}_n^{(k)} \in \R_+^M$ (with $n\in [N], k\in [K]$) is said to be a \emph{NC configuration} if 
\begin{enumerate}[label=(\roman*)]
    \item The feature representations $\vec{h}_n^{(k)}$ within every class $n \in [N]$ are equal for all $k\in [K]$, and thus equal to their class mean $\vec{h}_n := \frac{1}{K}\sum_{k=1}^K \vec{h}_n^{(k)}$.
    \item The class means $\{\vec{h}_n\}_{n=1}^N$ have equal norms and form an (entry-wise) non-negative orthogonal system.
    \item
    Let $P_{\vec{h}^{^\perp}}$ be the projection upon the subspace of $\R^M$ orthogonal to $\vec{h}=\frac{1}{N}\sum_{n=1}^N h_n$. Then for every $n\in [N]$, it holds $\vec{w}_n= CP_{\vec{h}^{^\perp}}\vec{h}_n\,$
    for some constant $C$ independent of $n$. 
\end{enumerate}

\end{definition}

Our main theorem in this section can be represented as follows.
\begin{theorem}\label{theorem:standard_model}
Let $M \geq N$, $\alpha \in [0,1)$. Assume that $\frac{N-1}{N} \alpha +2\sqrt{(N-1)\lambda_W\lambda_H}<1$. Then  any global minimizer of the problem (\ref{Problem:standard}) is a NC configuration.
\end{theorem}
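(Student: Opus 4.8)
The plan is to characterize the global minimizers in three stages: reduce from per-sample features to class means, bound $\calL_\alpha$ from below by a convex function of a single scalar, and then read off the NC structure from the equality cases.

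\textbf{Stage 1 (within-class collapse).} First I would fix $\vec{W}$ and note that $\calL_\alpha$ decouples over the index pairs $(n,k)$: the feature $\vec{h}_n^{(k)}$ enters only through $\frac{1}{NK}\ell_\alpha(\vec{W},\vec{h}_n^{(k)},\vec{y}_n^{(\alpha)}) + \frac{\lambda_H}{K}\norm{\vec{h}_n^{(k)}}^2$. The LS loss is convex in the logits $\vec{W}\vec{h}_n^{(k)}$, hence in $\vec{h}_n^{(k)}$, while the quadratic penalty is strictly convex, so the per-feature objective is strictly convex on the convex feasible set $\{\vec{h}\ge 0\}$ and has a unique minimizer. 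Since all $K$ features of a fixed class share the identical sub-objective, they coincide at any global minimizer, which gives property (i) and reduces the problem to the $N$ class means $\vec{h}_n$, with penalty $\lambda_W\sum_n\norm{\vec{w}_n}^2 + \lambda_H\sum_n\norm{\vec{h}_n}^2$. A further useful reduction is that at a minimizer the weights are centered, $\sum_n\vec{w}_n=0$: subtracting the mean weight leaves every logit difference (hence the loss) unchanged while strictly decreasing $\norm{\vec{W}}^2$.

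\textbf{Stage 2 (scalar lower bound).} Writing $z_{nm}=\sprod{\vec{w}_m,\vec{h}_n}$ and using $\sum_m z_{nm}=0$ with $\sum_m\vec{y}_{nm}^{(\alpha)}=1$, each loss term simplifies to $\ell_\alpha = -(1-\alpha)z_{nn} + \log\sum_i e^{z_{ni}}$. Bounding the off-diagonal exponentials by AM--GM, $\sum_{m\ne n}e^{z_{nm}}\ge (N-1)e^{-z_{nn}/(N-1)}$ (equality iff the interference logits are all equal), yields $\ell_\alpha\ge\phi(z_{nn})$ for an explicit convex $\phi$, and Jensen gives $L_\alpha\ge\phi(\bar z)$ with $\bar z=\frac1N\sum_n z_{nn}$. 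The crucial step bounds $\bar z$ by the norms: centering gives $\sum_n z_{nn}=\sum_n\sprod{\vec{w}_n,\vec{h}_n-\vec{h}}\le\norm{\vec{W}}\sqrt{\sum_n\norm{\vec{h}_n-\vec{h}}^2}$, and positivity enters decisively through $\norm{\sum_n\vec{h}_n}^2=\sum_n\norm{\vec{h}_n}^2+\sum_{n\ne m}\sprod{\vec{h}_n,\vec{h}_m}\ge\sum_n\norm{\vec{h}_n}^2$, so that $\sum_n\norm{\vec{h}_n-\vec{h}}^2=\sum_n\norm{\vec{h}_n}^2-N\norm{\vec{h}}^2\le\frac{N-1}{N}\norm{\vec{H}}^2$, with equality exactly when the non-negative means are mutually orthogonal. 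Combining this with AM--GM on the penalties, $\lambda_W\norm{\vec{W}}^2+\lambda_H\norm{\vec{H}}^2\ge 2\sqrt{\lambda_W\lambda_H}\,\norm{\vec{W}}\norm{\vec{H}}$, collapses the objective into a convex scalar bound $F(P)$ in the single variable $P=\norm{\vec{W}}\norm{\vec{H}}$.

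\textbf{Stage 3 (minimizing $F$ and extracting structure).} Here $F(0)=\log N$ is precisely the value of the degenerate configuration $\vec{W}=\vec{H}=0$, and the role of the hypothesis $\frac{N-1}{N}\alpha+2\sqrt{(N-1)\lambda_W\lambda_H}<1$ is to force $F'(0)<0$, so that the minimizer $P^\ast$ is strictly positive and the NC configuration strictly beats the all-zeros one; rearranged, it compares the marginal regularization cost $2\sqrt{(N-1)\lambda_W\lambda_H}$ against the correct-class target mass $1-\frac{N-1}{N}\alpha=\vec{y}_{nn}^{(\alpha)}$. Because $P^\ast>0$, every inequality of Stage 2 must hold with equality at a global minimizer: AM--GM equality fixes the ratio of the weight and feature norms; the two Cauchy--Schwarz equalities force $\vec{w}_n=C\,P_{\vec{h}^\perp}\vec{h}_n$ with a common constant $C$, i.e.\ self-duality (iii); equality in the positivity bound forces the class means to be mutually orthogonal of equal norm, i.e.\ (ii); and equality in the AM--GM/Jensen steps forces the symmetric equal-logit structure. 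Together with Stage 1 this shows every global minimizer is an NC configuration.

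\textbf{Main obstacle.} I expect the delicate part to be the positivity step in Stage 2 and its equality analysis: establishing that, under $\vec{h}_n\ge 0$, the loss-minimizing geometry is orthogonal rather than the usual simplex ETF (negative inner products being impossible for non-negative vectors), while showing that the separation needed to keep the log-sum-exp interference small is supplied \emph{entirely} by centering against $\vec{h}$, so that $P_{\vec{h}^\perp}\vec{h}_n$ can align with $\vec{w}_n$ even though the raw means are orthogonal. One must further verify that the orthogonality, equal-norm, and self-duality equality cases are mutually compatible and simultaneously attained at $P^\ast$, and thread the label-smoothing weights $\vec{y}_n^{(\alpha)}$ through the entire chain so that ruling out the trivial configuration yields exactly the stated threshold condition.
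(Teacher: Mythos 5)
Your high-level route is the paper's own: rewrite the LS risk using $\sum_m \vec{y}^{(\alpha)}_{nm}=1$, lower-bound it by a convex scalar function of the averaged correct-class logit (AM--GM on the off-diagonal exponentials plus Jensen), bound that scalar by $\norm{\vec{W}}\norm{\vec{H}}$ via Cauchy--Schwarz together with the positivity trick, apply AM--GM to the two weight decays, and read the NC structure off the equality cases --- this is precisely the content of the paper's Section \ref{Sec: Reformulation_loss}, Lemma \ref{Lemma:CauchySchwarz}, Lemma \ref{lemma:equality_conditions} and Steps 1--3 of Subsection \ref{subsection:proof_standard_model}, up to the affine change of variable between your $\bar z$ and the paper's $P$. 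Your Stage 1 is a genuine simplification the paper does not make: strict convexity of each per-sample subproblem and softmax shift-invariance give within-class collapse and $\sum_n \vec{w}_n=0$ upfront, whereas the paper recovers both only at the very end as equality conditions inside Lemma \ref{lemma:equality_conditions}.

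The gap is in Stage 3, and it is quantitative, not cosmetic: the stated hypothesis does \emph{not} force $F'(0)<0$. Track your own constants. After centering, $\ell_\alpha\ge\phi(z_{nn})$ with $\phi(t)=\alpha t+\log\bigl(1+(N-1)e^{-\frac{N}{N-1}t}\bigr)$, so $\phi'(0)=\alpha-1$; your Cauchy--Schwarz/positivity step gives $\bar z\le\theta\,\norm{\vec{W}}h$ with $\theta=\frac1N\sqrt{\frac{N-1}{N}}$ and $h^2=\sum_n\norm{\vec{h}_n}^2$. This $\theta$ is forced on you, because it is exactly the constant at which NC configurations attain equality (for class means of norm $a$ and $\vec{w}_n=C(\vec{h}_n-\vec{h})$ one has $\bar z=\frac{N-1}{N}Ca^2$, $\norm{\vec{W}}=Ca\sqrt{N-1}$, $h=a\sqrt{N}$), and your equality-extraction argument needs attainment. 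Hence $F'(0)=\theta(\alpha-1)+2\sqrt{\lambda_W\lambda_H}$, and
\begin{align*}
F'(0)<0 \iff \tfrac{N-1}{N}\alpha+2\sqrt{N(N-1)\lambda_W\lambda_H}<\tfrac{N-1}{N},
\end{align*}
which is strictly stronger than the assumed $\frac{N-1}{N}\alpha+2\sqrt{(N-1)\lambda_W\lambda_H}<1$: an extra $\sqrt{N}$ under the root and $\frac{N-1}{N}$ in place of $1$. Concretely, for $N=2$, $\alpha=0$, $\lambda_W=\lambda_H=0.3$, the theorem's hypothesis holds ($0.6<1$) while $F'(0)=-\frac{1}{2\sqrt2}+0.6>0$; your own tight, attained bound then shows every configuration with $\norm{\vec{W}}\norm{\vec{H}}>0$ scores strictly worse than $\vec{W}=\vec{H}=0$. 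So from the stated hypothesis your argument cannot conclude that the minimizer is a nondegenerate NC configuration; as written it proves the theorem only under the stronger threshold above.

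Two further remarks. First, a smaller, repairable omission: $F(P)$ equals $\phi(\theta P)+2\sqrt{\lambda_W\lambda_H}P$ only while $\theta P\le\argmin\phi$; beyond that point the valid lower bound is the floor $\min\phi$ plus the linear penalty term (this two-branch structure is what the paper's case split (b)/(c) handles). Since the flat branch has slope $2\sqrt{\lambda_W\lambda_H}>0$, the minimizer of $F$ always sits on the steep branch, so your argument survives --- but you must say this, because equality on the flat branch is never attained by any configuration. Second, the mismatch you would hit is informative about the paper itself: the paper reaches its stated threshold only by working with the loose constant $\frac{1}{\sqrt{K(N-1)}}$ (the last line of the proof of Lemma \ref{Lemma:CauchySchwarz} and the definition of $C$ in Step 2), which is inconsistent with the tight constant $\frac{1}{\sqrt{KN(N-1)}}$ displayed in (\ref{ineq:Cauchyschwarz}); NC configurations do not attain equality in the loose bound, so the paper's own equality analysis breaks at exactly the point where your tight computation produces a different threshold. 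In short: your structure is sound and in places cleaner than the paper's, but the claimed identification of the hypothesis with $F'(0)<0$ is false, and closing that gap is not a matter of bookkeeping --- it changes the constant in the theorem.
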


Note that the NC configurations defined in \Cref{def_NC} above differ significantly from the ones specified in other works, e.g. \cite{Fang2021ExploringDN,DBLP:journals/corr/abs-2105-02375, all_losses_equal_2022} or \cite{Extended_unconstrained_features_model}, see Appendix B.1 for more discussion.

\section{The Memorization-Dilation model}

\subsection{Experimental Motivation}
\label{sec:quantification_mem_test_collapse}

Previous studies of the NC phenomenon mainly focus on the collapsing variability of \textit{training} activations, and make rather cautious statements about its effects on generalization. For instance, \citet{DBLP:journals/corr/abs-2008-08186} report slightly improved test accuracies for training beyond zero training error. Going a step further, \citet{DBLP:journals/corr/abs-2105-02375} show that the NC phenomenon also happens for overparameterized models when labels are completely randomized. Here, the models seem to \textit{memorize} by overfitting the data points, however, a rigorous study how label corruption affects generalization in the regime of NC is still lacking.

To fill the gap, we advocate to analyze the effects of label corruption in the training data on the (previously unseen) \textit{test} instead of the training feature collapse. 
Eventually, tight test class clusters go hand in hand with easier separation of the instances and, thus, a smaller generalization error. Following \citet{DBLP:journals/corr/abs-2105-02375}, we measure the collapse of the penultimate layer activations by the $\mathcal{NC}_1$ metric. This metric depicts the relative magnitude of the within-class covariance $\SigmaW$ with respect to the between-class covariance $\SigmaB$ of the penultimate layer features and is defined as
\begin{equation}
    \mathcal{NC}_1 := \frac{1}{N} \trace(\SigmaW \SigmaB^\dagger),
    \label{metric:nc1}
\end{equation}
where 
\begin{equation*}
    \SigmaW := \frac{1}{NK} \sum_{n=1}^N \sum_{k=1}^{K} (\vec{h}^{(k)}_n - \vec{h}_n) (\vec{h}^{(k)}_n - \vec{h}_n)^\top \in \R^{M\times M}\, ,
\end{equation*}
\begin{equation*}
    \SigmaB := \frac{1}{N} \sum_{n=1}^N (\vec{h}_n - \vec{h}) (\vec{h}_n - \vec{h})^\top \in \R^{M \times M},
\end{equation*}
and $\SigmaB^\dagger$ denotes the pseudo-inverse of $\SigmaB$. Here, we adopt the notations from Section \ref{Sec:intro}: $\vec{h}_n^{(k)}\in \R^M$ denotes the feature representation of $k$-th sample in class $n$, $\vec{h}_n$ the class mean and $\vec{h}$ the global mean. Moreover, we distinguish $\mathcal{NC}_1^{\text{train}}$ and $\mathcal{NC}_1^{\text{test}}$ to be calculated on the training and test instances, respectively. We call $\mathcal{NC}_1^{\text{test}}$  \textit{dilation}.


Let us now turn to the notion of \textit{memorization}, which is not uniquely defined in deep learning literature. 
Here, we define memorization in the context of the NC setting and in a global manner, different from other works, e.g. \citet{DBLP:conf/nips/FeldmanZ20}. Formally, suppose that label noise is incorporated by (independently) corrupting the instance of each class label $n$ in the training data with probability $\eta \in (0,1)$, where corruption means drawing a label uniformly at random from the label space $\mathcal{Y}$. We denote the set of corrupted instances by $[\widetilde{K}]$.
For a given dataset $\mathcal{D}$ (with label noise $\eta$), we define \emph{memorization} as 
\begin{equation}
    \label{eq:mem}
    \mem := \sum_{n=1}^N \sum_{k\in [\widetilde{K}]} \| \vec{h}^{(k)}_{n} - \vec{h}^*_n \|_2 \, ,
\end{equation}
where $\vec{h}^*_n$ denotes the mean of (unseen) test instances belonging to class $n$.

We call the original ground truth label of a sample its \emph{true label}. We call the label after corruption, which may be the true label or not, the \emph{observed label}.
Since instances of the same true label tend to have similar input features in some sense, the network is biased to map them to similar feature representations. Instances are corrupted randomly, and hence, instances of the same true label but different observed labels do not have predictable characteristics that allow the network to separate them in a way that can be generalized. When the network nevertheless succeeds in separating such instances, we say that the network \emph{memorized} the feature representations of the corrupted instances in the training set. The metric $\text{mem}$ in (\ref{eq:mem}) thus measures memorization. 
The above memorization also affects dilation. Indeed, the network uses the feature engineering part to embed samples of similar features (that originally came from the same class), to far apart features, that encode different labels. Such process degrades the ability of the network to embed samples consistently, and leads to dilation. 

To quantify the interaction between $\text{mem}$ and $\mathcal{NC}_1^{\text{test}}$, we analyzed the learned representations $\vec{h}$ in the penultimate layer feature space for different noise configurations. One may wonder whether one can see a systematic trend in the test collapse given the memorization, and how this evolves over different loss functions.

To this end, we trained simple multi-layer neural networks for two classes ($N=2$), which we subsampled from the image classification datasets MNIST \citet{MNIST}, FashionMNIST \citet{FASHION_MNIST}, CIFAR-10 \citet{CIFAR} and SVHN \citet{svhn}. The labels are corrupted with noise degrees $\eta \in [0.025, 0.4]$. The network consists of $9$ hidden layers with $2048$ neurons each, thus, it represents a vastly overparameterized model. The feature dimension $M$ is set to the number of classes $N$. We trained these networks using the CE and LS loss with a smoothing factor $\alpha = 0.1$, as well as the mean-squared error (MSE). Moreover, we consider label relaxation (LR) \citet{DBLP:conf/aaai/LienenH21} as a generalization to LS with a relaxation degree $\alpha = 0.1$. The networks were trained until convergence in $200$ epochs (where the last $50$ epochs did not make any significant changes) using SGD with an initial learning rate of $0.1$ multiplied by $0.1$ each $40$ epochs and a small weight decay of $0.001$. Moreover, we considered ReLU as activation function throughout the network, as well as batch normalization in each hidden layer. A linear softmax classifier is composed on the encoder. We conducted each experiment ten times with different seeds.



\begin{figure}[t]
\centering
\begin{subfigure}{\columnwidth}
  \centering
  \includegraphics[width=\linewidth]{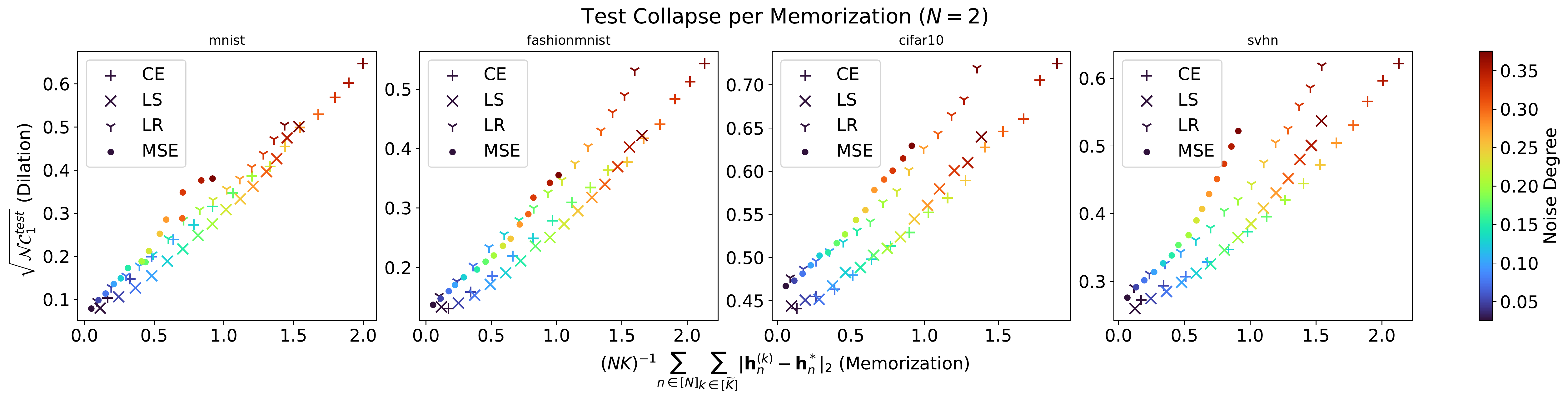}
\end{subfigure}%
\\
\begin{subfigure}{\columnwidth}
  \centering
  \includegraphics[width=\linewidth]{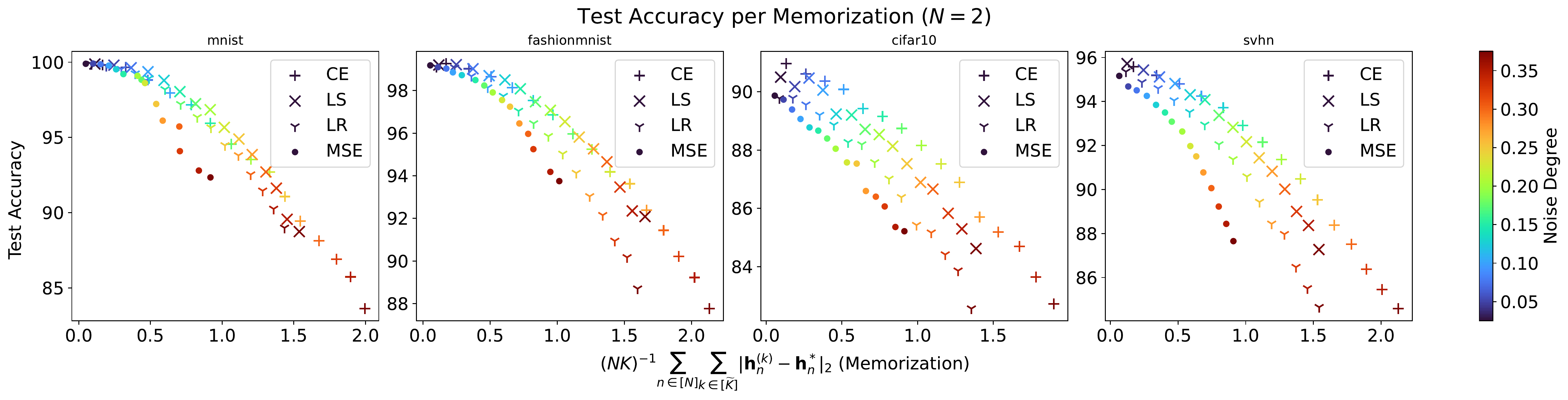}
\end{subfigure}
    \caption{Feature collapse of the test instances in terms of $\sqrt{\mathcal{NC}^{\text{test}}_1}$ per memorization (top row) and the resulting test accuracies (bottom row) averaged over ten random seeds. Comparing the markers of the same color, it can be observed that LS consistently performs better than CE across all datasets, with very few exceptions (the very low noise degrees in cifar10).}
    \label{fig:mem_exps_c2}
\end{figure}

The results for the above experimental setting are shown in Fig.\ \ref{fig:mem_exps_c2}, in which one can observe the trends of $\sqrt{\mathcal{NC}^{\text{test}}_1}$ per memorization for various configurations. As can be seen, the figure shows an approximately linear correspondence between $\sqrt{\mathcal{NC}^{\text{test}}_1}$ and $\text{mem}$ for the CE derivatives (CE and LS) on all datasets when $\text{mem}$ is not large. 

\begin{figure}[h]
\centering
\begin{subfigure}{0.45\columnwidth}
  \centering
  \includegraphics[width=\linewidth]{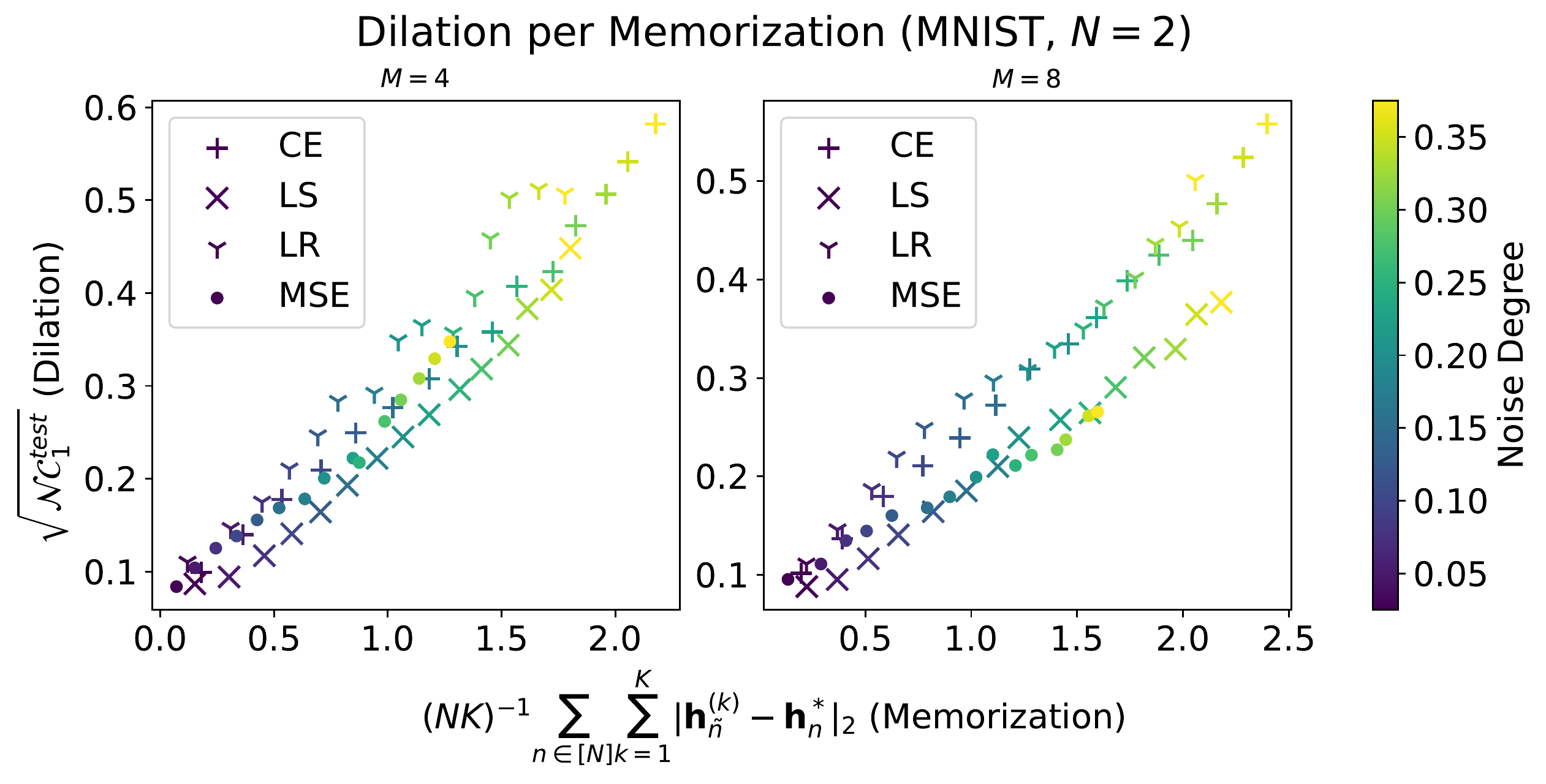}
\end{subfigure}%
\hspace{1cm}
\begin{subfigure}{0.45\columnwidth}
  \centering
  \includegraphics[width=\linewidth]{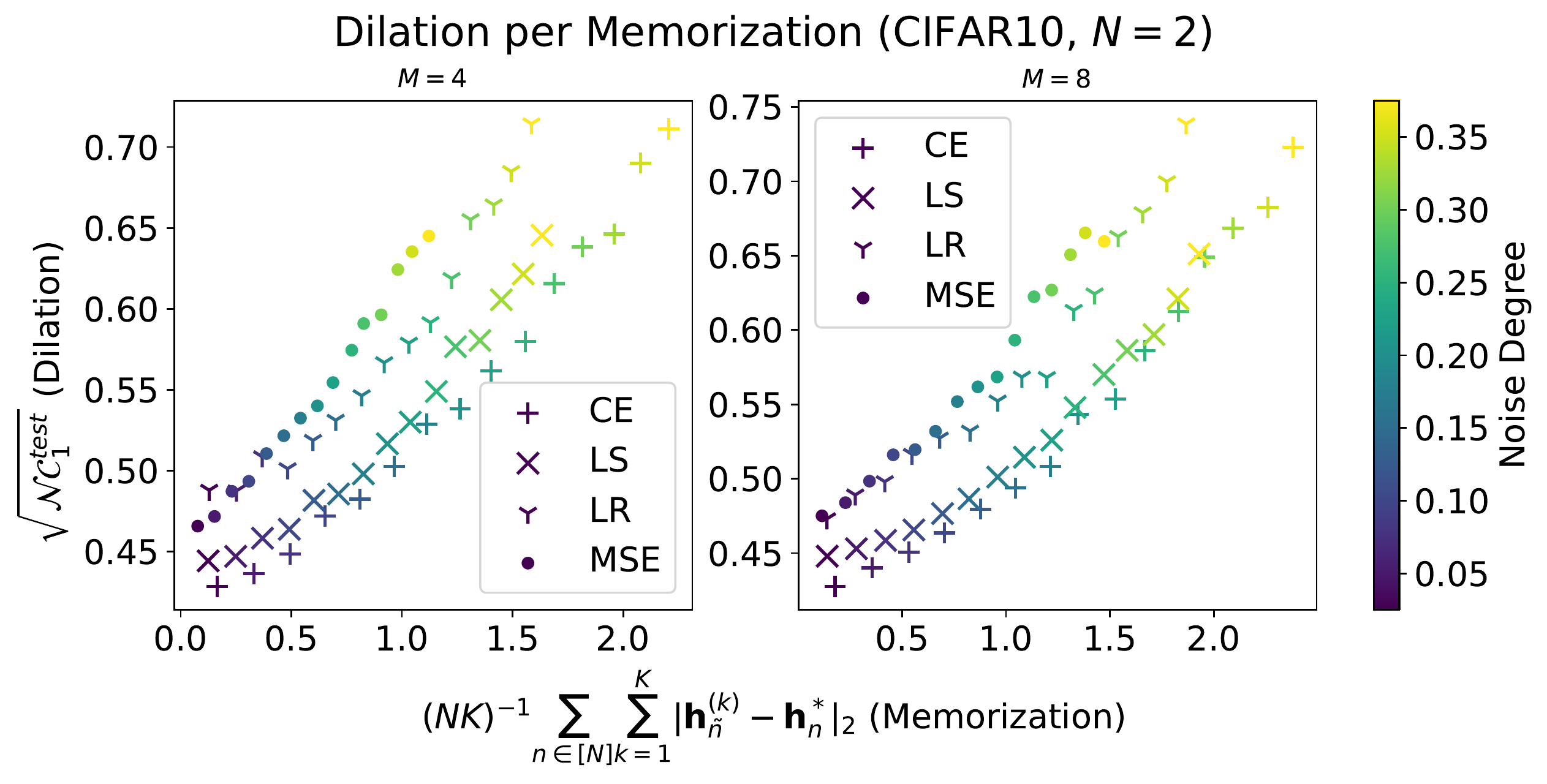}
\end{subfigure}
    \caption{Feature collapse of the test instances in terms of $\sqrt{\mathcal{NC}^{\text{test}}_1}$ per memorization as the feature dimension $M$ varies.}
    \label{fig:new_mem_exps_c2}
\end{figure}
Moreover, as CE and LS share the same slope, these results suggest that the degradation of the test collapse (aka dilation) is a function of memorization and the network expressitivity, and not of the choice of the loss. The loss only affects how the noise translates to memorization, but not how memorization translates to dilation. Even though the same amount of noise is mapped to different memorization values in CE and LS, the memorization-dilation curve is nevertheless shared between CE and LS. 
Hence, since LS leads the network to memorize less, it results in improved performance (cf. Fig. \ref{fig:mem_exps_c2}). We can further see that MSE and LR show a different memorization-dilation correspondence, which means that these losses affect the inductive bias in a different way than CE and LS. 

We repeated the experiments for different values of the feature dimension $M$ and show the example results in Fig. \ref{fig:new_mem_exps_c2}. Here, one can see the similar trends of dilation per memorization as before. In the appendix, we provide additional results showing the behavior in the multi-class case $N>2$ with different models for label noise. The results support our MD model, and show that the memorization-dilation curve is roughly independent of the noise model for low-to-mid noise levels. 

\subsection{The Memorization-Dilation Model}

Motivated by the observations of the previous experiments, we propose the so-called \textit{memorization-dilation (MD) model}, which extends the unconstrained feature model by incorporating the interaction between memorization and dilation as a model assumption. By this, we explicitly capture the limited expressivity of the network, thereby modeling the inductive bias of the underlying model.

This model shall provide a basis to mathematically characterize the difference in the learning behavior of CE and 
 LS. More specifically, we would like to know \emph{why}  LS shows improved generalization performance over conventional CE, as was observed in past works \citet{DBLP:conf/nips/MullerKH19}. The main idea can be explained as follows. We first note that dilation is directly linked to generalization (see also \citet{kornblith2021better}), since the more concentrated the feature representations of each class are, the easier it is to separate the different classes with a linear classifier without having outliers crossing the decision boundary. 
 The MD model asserts that dilation is a linear function of memorization. Hence, the only way that LS can lead to less dilation than CE, is if LS memorizes less than CE.
 Hence, the goal in our analysis is to show that, under the MD model,  LS indeed leads to less memorization than CE. 
 Note that this description is observed empirically in the experiments of Section \ref{sec:quantification_mem_test_collapse}. 
 
 Next we define the MD model in the binary classification setting.
\begin{definition}\label{def:MD_model}
We call the following minimization problem $\mathcal{MD}$.
Minimize the \emph{MD risk} 
\begin{align*}
    \mathcal{R}_{\lambda,\eta,\alpha}(\vec{U},r):=
    F_{\lambda, \alpha}(\vec{W},\vec{H},r) + \eta G_{\lambda,\alpha}(\vec{W},\vec{U},r),
\end{align*}
with respect to the \emph{noisy feature embedding} $\vec{U} = [\vec{u_1}, \vec{u_2}]\in \R_+^{2\times M}$  and the \emph{standard deviation} $r\geq 0$, under the constraints
\begin{align}
    \eta\norm{\vec{h}_1-\vec{u}_2} \leq \frac{C_{MD} r}{ \norm{\vec{h}_1-\vec{h}_2}} \label{eq:MD_condition1}\\
    \eta\norm{\vec{h}_2-\vec{u}_1} \leq \frac{C_{MD} r}{ \norm{\vec{h}_1-\vec{h}_2}} \label{eq:MD_condition2}.
\end{align}

Here,
\begin{itemize}
    \item $\vec{H} \in \R_+^{2\times M}$ and $\vec{W}\in \R^{M\times2}$ form an NC configuration (see \Cref{def_NC}). 

    \item $C_{MD}>0$ is called the \emph{memorization-dilation slope},  
    $0\leq\alpha<1$ is called the \emph{LS parameter}, $\eta>0$ the \emph{noise level}, and   $\lambda>0$ the \emph{regularization parameter}.
    
    \item $F_{\lambda, \alpha}$ is the component in the (regularized) risk that is associated with the correctly labeled samples, 
    \begin{align*}
        \begin{split}
            F_{\lambda,\alpha} (\vec{W}, \vec{H},r) :=  &\int \Bigg(\ell_\alpha \Big(\vec{W},\vec{h}_1 + \vec{v}, \vec{y}_1^{(\alpha)} \Big)+ \lambda \norm{\vec{h}_1+\vec{v}}^2 \Bigg) d\mu_r^1(\vec{v}) \\
            &+ \int  \Bigg(\ell_\alpha \Big(\vec{W},\vec{h}_2 + \vec{v}, \vec{y}_2^{(\alpha)} \Big) + \lambda \norm{\vec{h}_2+\vec{v}}^2 \Bigg)d\mu_r^2(\vec{v})
        \end{split}
    \end{align*}
    where $\mu_r^1$ and $\mu_r^2$ are some probability distributions with mean $0$ and standard deviation $r$, and $l_{\alpha}$ is the LS loss defined in (\ref{LS_loss}). 

    \item $G_{\lambda, \alpha}$ is the component in the (regularized) risk that is associated with the corrupted samples, defined as 
    \begin{align*}
        G_{\lambda,\alpha}(\vec{W},\vec{U},r) = \ell_\alpha\Big(\vec{W}, \vec{u_1}, \vec{y}_1^{(\alpha)} \Big) + \ell_\alpha\Big(\vec{W}, \vec{u_2}, \vec{y}_2^{(\alpha)} \Big) + \lambda \norm{\vec{u_1}}^2 + \lambda \norm{\vec{u_2}}^2.
    \end{align*}
    
    
\end{itemize}

\end{definition}

The MD model can be interpreted as follows. First we consider the feature representations of the correctly labeled samples in each class as samples from a distribution (namely $\mu^{1,2}_r$ in Def. \ref{def:MD_model}) with standard deviation $r$, a parameter that measures the dilation of the class cluster. In a natural way, the corresponding risk $F_{\lambda,\alpha}$ involves the loss average over all samples, i.e. the loss integral over the distribution. For simplicity, we assume that the class centers $\vec{h}_1, \vec{h}_2$ as well as the weight matrix $\vec{W}$ are fixed as described by the NC configuration. This is a reasonable simplification as it has been always observed in the experiments.

On the other hand, the feature representations of corrupted samples are $\vec{u}_1$ and $\vec{u}_2$.\footnote[2]{Certainly one can, instead of two single points $\vec{u}_1$ and $\vec{u}_2$, two distributions centered around $\vec{u}_1$ and $\vec{u}_2$, similarly as before for uncorrupted samples. However, it is quite straightforward to see that the minimization of the MD risk over the dilation of these two distributions is independent of other variables (not like $r$), and thus the minimum should be attained in the case of collapsing into two single points. Thus, for convenience we assume directly here that $G_{\lambda,\alpha}$ involves only two single points.} The amount of memorization in the first class is defined to be $\eta\|\vec{h}_2-\vec{u}_1\|$, since the more noise $\eta$ there is, the more examples we need to memorize. The amount of memorization in the second class is defined the same way. The (normalized) dilation is defined to be $\frac{ r}{ \norm{\vec{h}_1-\vec{h}_2}}$, which models a similar quantity to  (\ref{metric:nc1}). 
\begin{wrapfigure}{r}{4.4cm}
    \centering
    \includegraphics[width=\linewidth]{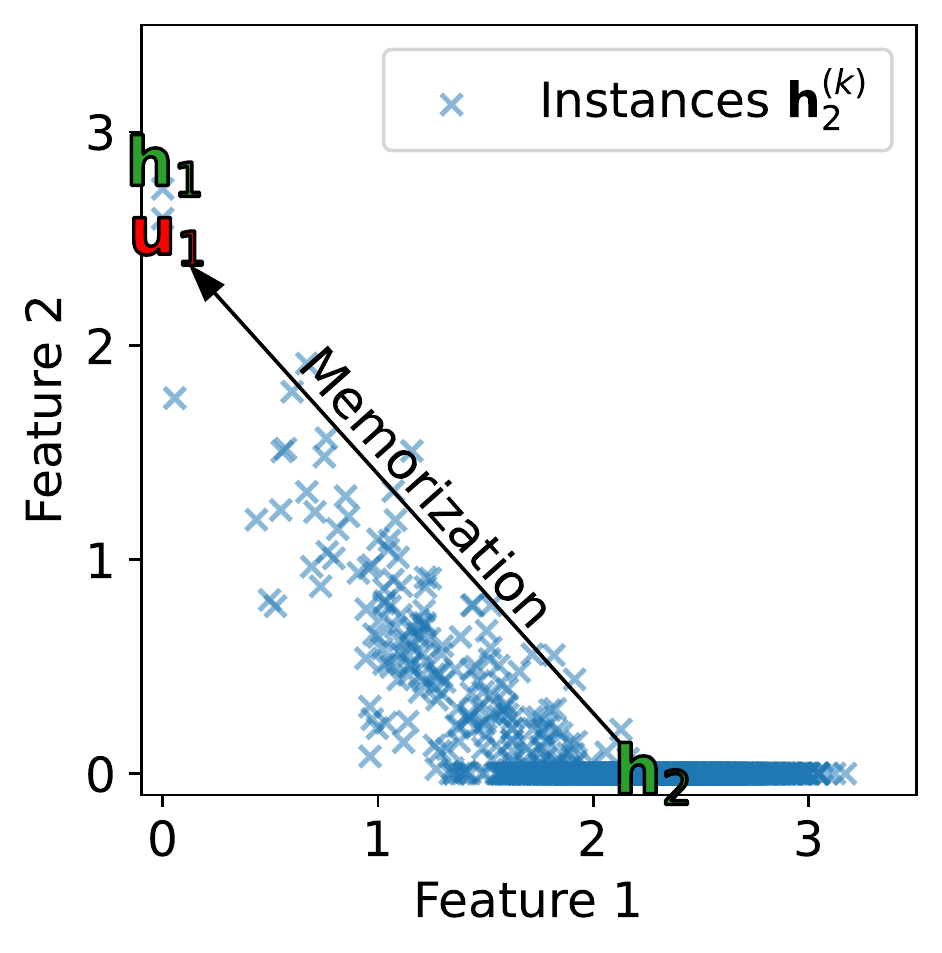}
    \caption{Exemplary illustration of the MD model for a MLP network trained on MNIST. The instances $\vec{h}_2^{(k)}$ are test images correctly labeled as $1$, with centroid $\vec{h}_2$. The centroid of the test images with correct label $0$ is $\vec{h}_1$. The centroid of training images which were originally labeled as $1$ but are mislabeled as $0$ is $\vec{u}_1$. The memorization of $\vec{u}_1$ moves it close to $\vec{h}_1$, and causes dilation of the instances $\vec{h}_2^{(k)}$.}
    \label{fig:mem_dil}
\end{wrapfigure}
The constraints (\ref{eq:MD_condition1}) and (\ref{eq:MD_condition2}) tell us that in order to map noisy samples $\vec{u}_1$ away from $\vec{h}_2$, we have to pay with dilation $r$. The larger $r$ is, the further away we can map $\vec{u}_1$ from $\vec{h}_2$.  The correspondence between memorization  and dilation  is linear with slope $C_{MD}$ by assumption. There are two main forces in the optimization problem: $\vec{u}_1$ would like to be as close as possible to its optimal position $\vec{h}_1$, and similarly $\vec{u}_2$ likes to be close to $\vec{h}_2$. In view of the constraints (\ref{eq:MD_condition1}) and (\ref{eq:MD_condition2}), to achieve this, $r$ has to be increased to $r_{\max} := \frac{\eta \norm{\vec{h}_1-\vec{h}_2}^2}{C_{MD}}$. On the other hand, the optimal $r$ for the term $F_{\lambda,\alpha}$ is $r=0$, namely, the layer-peeled NC configuration. An optimal solution hence balances between memorization and dilation. See Fig. \ref{fig:mem_dil} for a visualization of the MD model.

Our goal in this section is to compare the optimal value $r$ in case of LS and CE losses. We will distinguish between these two cases by setting the value of $\alpha$ in the MD model to $0$ for CE and to some $\alpha_0>0$ for LS. This will result in two different scales of the feature embeddings $\vec{H}$, denoted by $\vec{H}^{CE}$ and $\vec{H}^{LS}$ for CE and LS loss respectively, with the ratio
\begin{equation}
\label{eq:gamma}
    \gamma := \norm{\vec{H}^{CE}} / \norm{\vec{H}^{LS}} >1 ,
\end{equation}
which holds under the reasonable assumption that the LS technique is sufficiently effective, or more precisely $\alpha_0 > 2\sqrt{\lambda_W \lambda_H}$. 

The main result in this section will be Theorem \ref{theorem:MD_model}, which states informally that in the low noise regime, the optimal dilation in case of LS loss is smaller than that in case of CE loss. Before presenting this theorem, we will first establish several assumptions on the distributions $\mu_r^{1,2}$ and the noise $\eta$ in Assumption \ref{Assumption:MD_model}. Basically we allow a rich class of distributions and only require certain symmetry and bounded supports in terms of $r$, as well as require $\eta$ to be small in terms of the ratio $\gamma$. 


\begin{assumption}\label{Assumption:MD_model}
\text{ }
\begin{enumerate}
    \item Let $\alpha_0 >0$. We assume that the solution of
    \begin{align*}
        &\min_{\vec{W},\vec{H}} \quad \ell_\alpha \Big(\vec{W},\vec{h}_1, \vec{y}_1^{(\alpha)} \Big)+ \ell_\alpha \Big(\vec{W},\vec{h_2}, \vec{y}_2^{(\alpha)}\Big) + \lambda_W \norm{\vec{W}}^2 + \lambda_H \norm{\vec{H}}^2\\
        &\text{ s.t. } \quad \vec{H} \geq 0 \, .
    \end{align*}
    is given by $(\vec{W},\vec{H}) = (\vec{W}^{CE}, \vec{H}^{CE})$ for $\alpha = 0$ and $(\vec{W},\vec{H}) = (\vec{W}^{LS}, \vec{H}^{LS})$ for $\alpha = \alpha_0$.

    \item Assume that the distributions $\mu_r^1$ and $\mu_r^2$ are \emph{centered}, in the sense that
    \begin{align*}
        \int \sprod{\vec{w}_2-\vec{w}_1,\vec{v}} d\mu_r^1(\vec{v}) = \int \sprod{\vec{w}_1-\vec{w}_2,\vec{v}} d\mu_r^2(\vec{v}) = 0 \, ,\\
        \int \sprod{\vec{h}_1,\vec{v}} d\mu_r^1(\vec{v}) = \int \sprod{\vec{h}_2,\vec{v}} d\mu_r^2(\vec{v}) =0 \, .
    \end{align*}
    Furthermore, we assume that there exists a constant $A>0$ such that $\norm{\vec{v}}\leq Ar$ for any vector $\vec{v}$ that lies in the support of $\mu_r^1$ or in the support of $\mu_r^2$. 
    
    
    \item Assume that the noise level $\eta$ and the LS parameter $\alpha_0$ satisfy the following. We suppose $\alpha_0>4\sqrt{\lambda_W \lambda_H}$, which guarantees $\gamma := \norm{\vec{H}^{CE}} / \norm{\vec{H}^{LS}} >1$. We moreover suppose that   $\eta$ is sufficiently small to guarantee $\eta^{1/2} <\tilde{C} (1- \frac{1}{\gamma})$ where $\tilde{C}:= \frac{C_{MD}}{\sqrt{2}\norm{\vec{h}_1^{CE}-\vec{h}^{CE}_2}}$. 
\end{enumerate}
\end{assumption}

Now our main result in this section can be formally stated as below.
\begin{theorem}\label{theorem:MD_model}
Suppose that \Cref{Assumption:MD_model} holds true for $M\geq N=2$ and $\lambda := \lambda_H$. Let $r^{CE}_*$ and $r^{LS}_*$ be the optimal dilations, i.e.\ the optimum $r$ in the $\mathcal{MD}$ problem, corresponding to the CE and LS loss (accordingly $\alpha =0$ and $\alpha = \alpha_0$), respectively. Then it holds that
\begin{align*}
    \frac{ r_*^{CE}}{ \norm{\vec{h}_1^{CE}-\vec{h}_2^{CE}}} > \frac{ r_*^{LS}}{ \norm{\vec{h}^{LS}_1-\vec{h}^{LS}_2}} \, .
\end{align*}
\end{theorem}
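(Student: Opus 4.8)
The plan is to collapse the $\mathcal{MD}$ problem to a one–dimensional optimization and then reduce the claimed inequality to a comparison of two absolute ``memorization distances.'' Write $d_\alpha := \norm{\vec{h}_1-\vec{h}_2}$ for the NC configuration attached to $\alpha$; since that configuration is orthogonal with equal norms, $d_\alpha = \norm{\vec{H}}$, so Assumption~\ref{Assumption:MD_model}(3) reads $d^{CE}/d^{LS}=\gamma>1$. First I would solve the inner minimization over $\vec{U}$ for fixed $r$. Because $F_{\lambda,\alpha}$ does not depend on $\vec{U}$ and is increasing in $r$, the optimal $r$ is the smallest value meeting both constraints \cref{eq:MD_condition1,eq:MD_condition2}; a class-swap symmetry of $\mathcal{MD}$ yields an optimum with $\norm{\vec{h}_2-\vec{u}_1}=\norm{\vec{h}_1-\vec{u}_2}=:\xi$, whence the binding constraint forces $r=\eta d_\alpha\xi/C_{MD}$ and the normalized dilation becomes $r/d_\alpha=\eta\xi/C_{MD}$. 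The decisive point is that this proportionality constant $\eta/C_{MD}$ is \emph{the same} for CE and LS, so the theorem is equivalent to $\xi_*^{CE}>\xi_*^{LS}$.

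Next I would record the shape of the reduced objective $\Phi_\alpha(\xi)=F_{\lambda,\alpha}(\eta d_\alpha\xi/C_{MD})+2\eta\,\bar{g}_\alpha(\xi)$, where $\bar{g}_\alpha(\xi):=\min_{\norm{\vec{u}-\vec{h}_2}\le\xi,\,\vec{u}\ge0}[\ell_\alpha(\vec{W},\vec{u},\vec{y}_1^{(\alpha)})+\lambda\norm{\vec{u}}^2]$. Since $\ell_\alpha$ is convex in its logit and the penalty is convex, the value function $\bar{g}_\alpha$ is convex and nonincreasing, and strictly decreasing up to its unconstrained minimizer $\vec{u}=\vec{h}_1$, attained at $\xi=d_\alpha$ (this is exactly the $\vec{h}_1$-stationarity of the NC configuration in Assumption~\ref{Assumption:MD_model}(1)). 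For the $F$-term, the centering conditions of Assumption~\ref{Assumption:MD_model}(2) kill the first-order contribution of $\mu_r$ to both the loss and the regulariser, so $F_{\lambda,\alpha}'(0)=0$, while the bounded-support condition $\norm{\vec{v}}\le Ar$ gives $F_{\lambda,\alpha}'(r)=O(r)$ with a constant controlled by the loss curvature and $\lambda$.

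The comparison $\xi_*^{CE}>\xi_*^{LS}$ I would then obtain by sandwiching both optimizers around $d^{LS}$. On the LS side, for $\eta>0$ the dilation penalty is strictly increasing near the unconstrained minimizer, so the optimum is pushed strictly inside it: $\xi_*^{LS}<d^{LS}$. On the CE side I would show $\Phi^{CE}$ is still strictly decreasing throughout $(0,d^{LS}]$, i.e.\ $\xi_*^{CE}>d^{LS}$. Differentiating, this amounts to
\[
\frac{d^{CE}}{C_{MD}}\,F^{CE}_{\lambda,0}{}'\!\Big(\tfrac{\eta d^{CE}\xi}{C_{MD}}\Big)\;<\;2\big(-\bar{g}^{CE}_{0}{}'(\xi)\big),\qquad \xi\in(0,d^{LS}].
\]
The right-hand side is bounded below by $2\beta$ with $\beta:=-\bar{g}^{CE}_{0}{}'(d^{LS})>0$, because $d^{LS}<d^{CE}$ keeps $\xi$ away from the minimizer of the strictly convex $\bar{g}^{CE}_{0}$; the left-hand side is $O(\eta)$ uniformly on the interval, and the threshold at which it drops below $2\beta$ is precisely the smallness hypothesis $\eta^{1/2}<\tilde{C}(1-1/\gamma)$ of Assumption~\ref{Assumption:MD_model}(3). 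Combining, $\xi_*^{CE}>d^{LS}>\xi_*^{LS}$, hence $r_*^{CE}/d^{CE}=\eta\xi_*^{CE}/C_{MD}>\eta\xi_*^{LS}/C_{MD}=r_*^{LS}/d^{LS}$.

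I expect the main obstacle to be the quantitative estimate in this last step: lower-bounding the marginal memorization gain $-\bar{g}^{CE}_{0}{}'$ on $(0,d^{LS}]$ and upper-bounding the marginal dilation cost $F^{CE}_{\lambda,0}{}'$ tightly enough that the crossover matches the stated $\eta$-threshold. This requires making the binary-softmax landscape explicit through self-duality ($\vec{w}_1-\vec{w}_2=C(\vec{h}_1-\vec{h}_2)$ for $N=2$) and carrying the second-order expansion of $F$ with the centered, bounded-support distributions; secondary care is needed to justify the symmetry reduction and to verify that the nonnegativity constraints $\vec{u}_i\ge0$ remain inactive along the relevant path.
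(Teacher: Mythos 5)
Your proposal is, at bottom, the paper's own argument rewritten in different coordinates, so the real question is whether your execution of the one hard step survives. The reduction to the inner minimization over $\vec{U}$, the binding constraint giving $r/\norm{\vec{h}_1-\vec{h}_2}=\eta\xi/C_{MD}$ with $\xi=\norm{\vec{h}_2-\vec{u}_1}$, and the sandwich $\xi_*^{LS}\leq d^{LS}<\xi_*^{CE}$ are precisely what the paper establishes: Lemma \ref{lemma:optimal_dilation} shows $r_*\in[(1-C'\eta^{1/2})r_{\max},\,r_{\max}]$, which in your variables reads $\xi_*^{CE}\geq(1-C'\eta^{1/2})d^{CE}$ and $\xi_*^{LS}\leq d^{LS}$, and Assumption \ref{Assumption:MD_model}(3) is then invoked to force $\gamma(1-C'\eta^{1/2})\geq 1$, i.e.\ $(1-C'\eta^{1/2})d^{CE}\geq d^{LS}$. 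So your $\xi$-reformulation is a clean repackaging rather than a different route, and the comparison against the endpoint $d^{CE}$ is the same comparison the paper makes against $r_{\max}$.

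The gaps are in the crux step $\xi_*^{CE}>d^{LS}$. First, you differentiate $F_{\lambda,\alpha}$ in $r$ (claiming $F'(0)=0$, $F'(r)=O(r)$) and argue via strict decrease of $\Phi^{CE}$ on $(0,d^{LS}]$. Nothing in Definition \ref{def:MD_model} or Assumption \ref{Assumption:MD_model} ties the distributions $\mu_r^{1,2}$ together across different $r$; they are an arbitrary family subject to centering and support conditions at each fixed $r$, so $r\mapsto F_{\lambda,\alpha}(\vec{W},\vec{H},r)$ need not be differentiable, monotone, or even continuous. The paper never differentiates in $r$: Lemma \ref{lemma:behavior_f} only proves the increment bound $0\leq F(r)-F(0)\leq C_2r^2$ by Taylor-expanding the integrand in $\vec{v}$ at fixed $r$, and your local argument must likewise be replaced by endpoint comparisons $\Phi^{CE}(\xi)>\Phi^{CE}(d^{CE})$ for every $\xi\leq d^{LS}$. (Your ``smallest feasible $r$'' reduction and $\xi_*\leq d$ also tacitly use monotonicity of $F$; the paper makes the same unproved assertion in Lemma \ref{lemma:optimal_dilation}, so this part is a shared lapse.) Second, and decisively, the claim that your threshold ``is precisely'' the hypothesis $\eta^{1/2}<\tilde{C}(1-1/\gamma)$ is asserted, not derived, and its form is wrong: your memorization-gain bound is first order, $-\bar{g}_0'\geq\beta$ with $\beta=-\bar{g}_0'(d^{LS})$, and since $\bar{g}_0'$ vanishes at the minimizer $d^{CE}$, smoothness forces $\beta\sim d^{CE}(1-1/\gamma)$, so your argument yields a condition of the form $\eta\lesssim 1-1/\gamma$ with constants that are never compared to $\tilde{C}$. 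Matching the stated assumption, whose form is $\eta<\tilde{C}^2(1-1/\gamma)^2$, requires the quadratic estimates the paper builds in Lemmas \ref{Lemma:P_u1}--\ref{lemma:behavior_g}: localizing $\vec{u}_1(r)$ on an arc of the constraint circle inside the positive orthant (the nonnegativity constraints are emphatically \emph{not} inactive; this localization is what makes the Hessian lower bound work), deducing the two-sided linear estimate $\norm{\vec{u}_1(r)-\vec{h}_1}\asymp(r_{\max}-r)/\eta$ of Lemma \ref{lemma:problem_u3}, hence the quadratic growth $g(r)-g(r_{\max})\geq C_1\left((r_{\max}-r)/\eta\right)^2$, which against the $C_2r^2$ dilation cost produces exactly a condition $\eta\lesssim(1-1/\gamma)^2$. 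As written, your proposal proves the theorem only for ``$\eta$ sufficiently small'' with an unquantified threshold, which is weaker than the statement under Assumption \ref{Assumption:MD_model}.
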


Theorem \ref{theorem:MD_model} reveals a mechanism by which LS achieves better generalization than CE. It is proven that LS memorizes and dilates less than CE,  which is associated with better generalization. Note that in practice, the data often have noise in the sense that not all examples are perfectly labeled. More importantly, examples from different classes may share many similarities, a situation that is also covered by the MD model: the feature representations of samples from those classes are biased toward each other. In this case, LS also leads to decreased dilation which corresponds to better class separation and higher performance \citet{kornblith2021better}.

 Interestingly, the concurrent work \citet{all_losses_equal_2022} has shown that in the noiseless setting CE and LS lead to largely identical test accuracy, which seems to contradict the statement that LS performs better claimed by our work as well as many others, e.g. \citet{kornblith2021better,DBLP:conf/nips/MullerKH19}. However, note that \citet{all_losses_equal_2022} requires the network to be sufficiently large so that it has enough expressive power to fit the underlying mapping from input data to targets, as well as to be trained until convergence. While the latter is easy to obtain, it is difficult even to check if the first requirement holds. The difference between the two results is hence possibly caused by the effect of noise and by the network expressivity: while we aim to model the limited expressivity by the MD relation, \citet{all_losses_equal_2022} focuses on networks with approximately infinite expressivity.





The MD model combines a statistical term $F_{\lambda, \alpha}$, that describes the risk over the distribution of feature embeddings of samples with clean labels, and an empirical term $\eta G_{\lambda,\alpha}$ that describes the risk over training samples with noisy labels. One point of view that can motivate such a hybrid statistical-empirical definition is the assumption that the network only memorizes samples of noisy labels, but not samples of clean labels. Such a memorization degrades (dilates) both the collapse of the training and test samples, possibly with different memorization-dilation slopes.
However, memorization is not limited to corrupted labels, but can also apply to samples of clean labels \citet{DBLP:conf/nips/FeldmanZ20}, by which the learner can partially negate the dilation of the training features (but not test features). The fact that our model does not take the memorization of clean samples into account is one of its limitations. We believe that future work should focus on modeling the total memorization of all examples. Nevertheless, we believe that our current MD model has merit, since 1) noisy labels are memorized more than clean labels, and especially in the low noise regime the assumption of observing memorization merely for corrupted labels appears reasonable, and 2) our approach and proof techniques can be the basis of more elaborate future MD models. 

\section{Conclusion}
In this paper, we first characterized the global minimizers of the Layer-Peeled Model (or the Unconstrained Features Model) with the positivity condition on the feature representations. Our characterization shows some distinctions from the results that haven been obtained in recent works for the same model without feature positivity. Besides the conventional cross-entropy (CE) loss, we studied the model in case of the label smoothing (LS) loss, showing that NC also occurs when applying this technique. 

Then we extended the model to the so-called Memorization-Dilation (MD) Model by incorporating the limited expressivity of the network. Using the MD model, which is supported by our experimental observations, we show that when trained with the LS loss, the network memorizes less than when trained by the CE loss. This poses one explanation to the improved generalization performance of the LS technique over the conventional CE loss. 

Our model has limitations, however, namely that it is limited to the case of two classes. Motivated by promising results on the applicability of our model to the multi-class setting, we believe that future work should focus on extending the MD model in this respect. With such extensions, memorization-dilation analysis has the potential to underlie a systematic comparison of the generalization capabilities of different losses, such as CE, LS, and label relaxation, by analytically deriving formulas for the amount of memorization associated with each loss. 

\subsubsection*{Acknowledgments}

This work was partially supported by the German Research Foundation (DFG) within the Collaborative Research Center ``On-The-Fly Computing'' (CRC 901 project no.~160364472). Moreover, the authors gratefully acknowledge the funding of this project by computing time provided by the Paderborn Center for Parallel Computing (PC$^2$).

\bibliography{refs}
\bibliographystyle{iclr2023_conference}

\newpage
\appendix

\section{Experimental details}

\subsection{Memorization experiments}

\subsubsection{Setting}

To produce the results of Section \ref{sec:quantification_mem_test_collapse}, we trained simple multi-layer perceptron models with $9$ hidden layers of width $2048$. Each layer involves a batch normalization layer and uses the parameterized activation parameters (one of ReLU or sigmoid) throughout the network. To train the network, we employed SGD as optimizer with a learning rate of $0.1$ that is multiplied by $0.1$ each $40$ epochs. We further employed a Nesterov momentum of $0.9$. In total, we trained for $200$ epochs, which was sufficient to observe the neural collapse phenomenon. We ensure that the parameterization works reasonably well for all losses for a fair and realistic comparison. We further use a weight decay regularization of $0.001$. The batch size is set to $512$ for all experiments. Each assessed parameter combination has been executed $5$ times to gain statistically meaningful results.

The penultimate layer feature dimension was set to the number of classes $N$. On top of the encoding network architecture, a linear softmax classifier is attached. The entire model is optimized for four different losses: Conventional cross-entropy with degenerate target distributions, label smoothing with a default smoothing parameter of $\alpha = 0.1$, label relaxation with an imprecisiation degree of $\alpha = 0.1$ and mean squared error.

\begin{figure}[htbp]
\centering
\begin{subfigure}{0.49\columnwidth}
  \centering
  \includegraphics[width=0.8\linewidth]{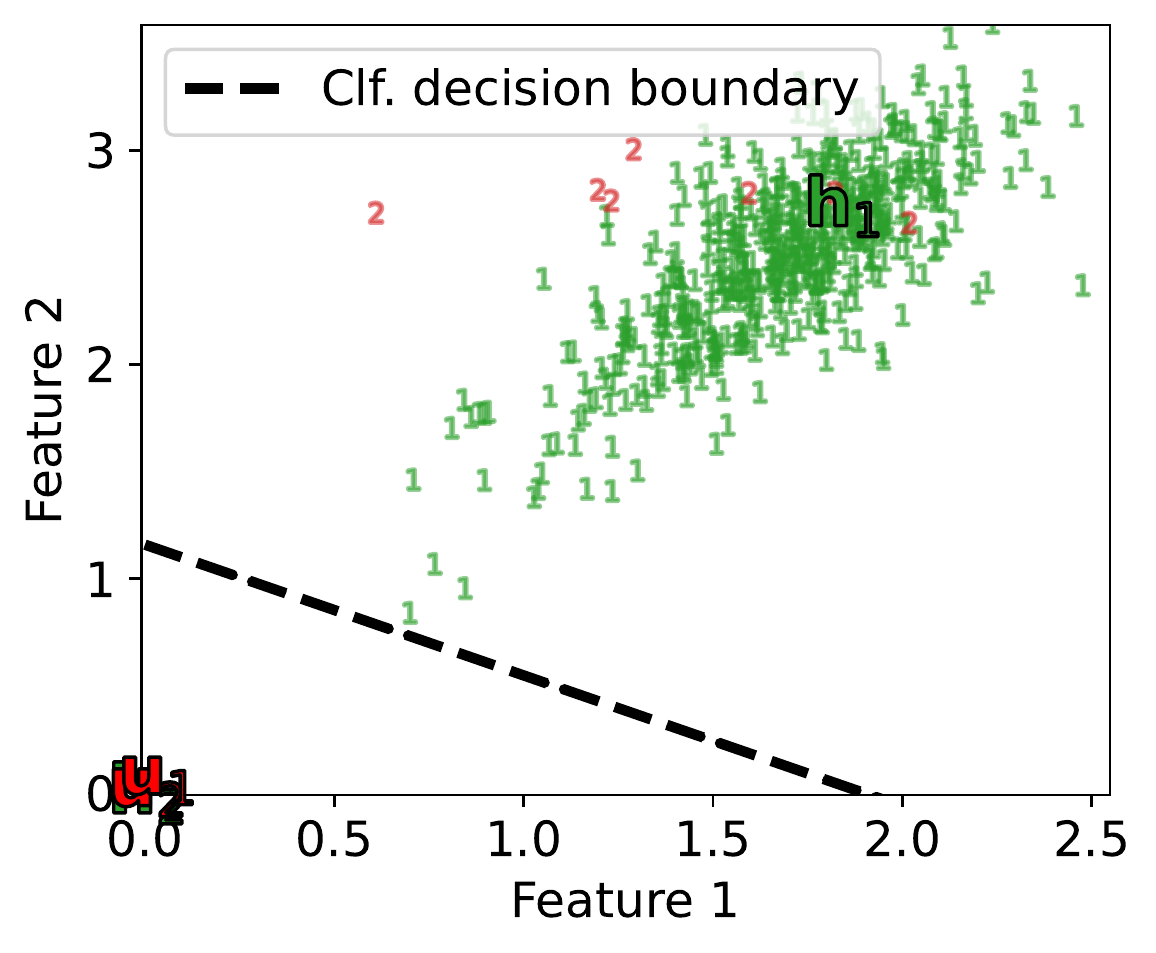}
    \caption{}
\end{subfigure}%
\begin{subfigure}{0.49\columnwidth}
  \centering
  \includegraphics[width=0.8\linewidth]{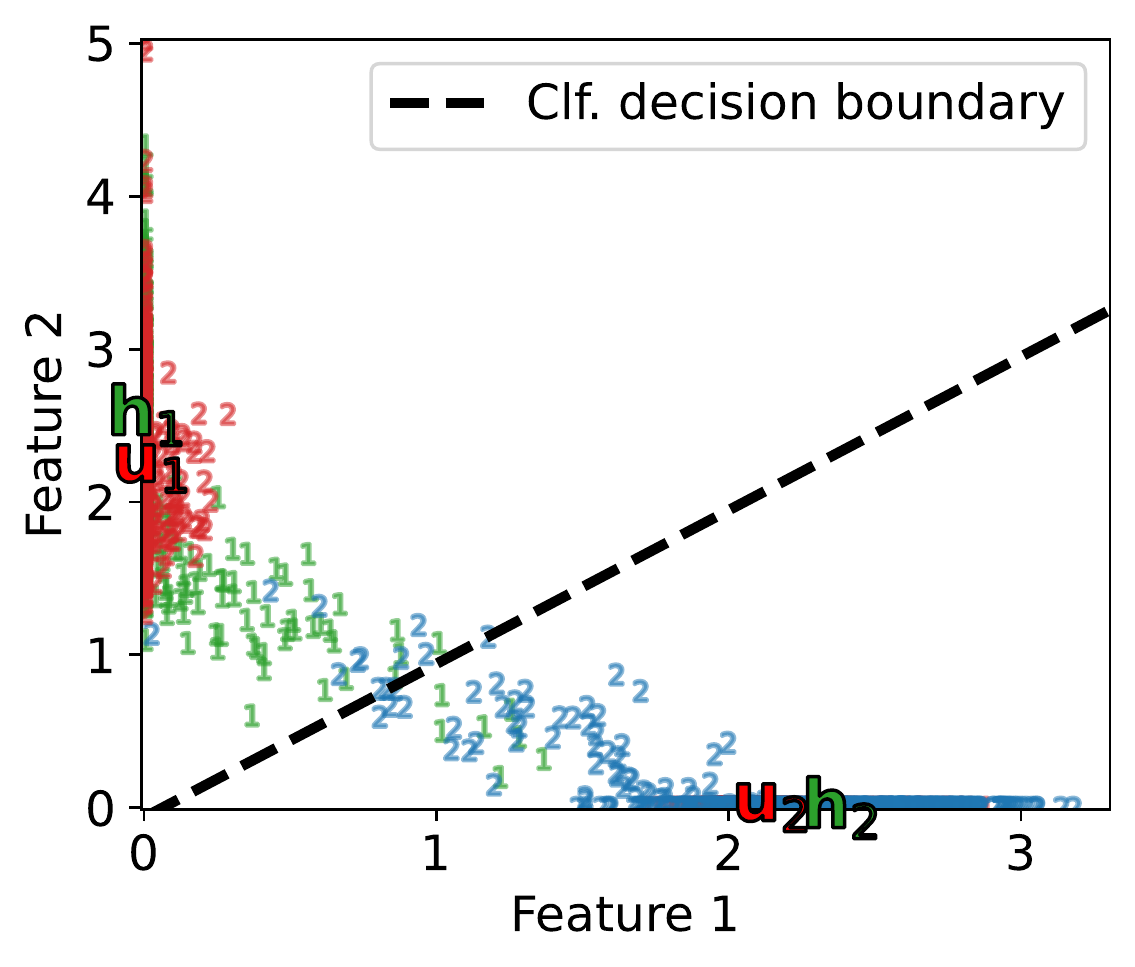}
  \caption{}
\end{subfigure}
    \caption{Exemplary penultimate layer activations (post training) of the clean and corrupted training data in the 2D feature space. \textcolor{green}{Green $1$} represent test instances of clean label $1$ data, \textcolor{blue}{blue $2$} represent clean test instances of label $2$ data, \textcolor{red}{red $1$} represent instances of training samples that were originally labeled as $1$ but were changed to label $2$, \textcolor{red}{red $2$} represent instances of training samples that were originally labeled as $2$ but were changed to label $1$.  (a) Collapse to a sub-optimal configuration, where one of the class  centroids is at the origin. (b) The class centroids are along the axes, corresponding to the optimal NC configuration of Definition \ref{def_NC}.}
    \label{fig:feature_dists}
\end{figure}

In the idealized experimental environment, we considered the datasets MNIST and CIFAR-10 as show cases. To reduce the problem complexity for the theoretical analysis, we subsampled the first $N$ classes of each dataset, all other instances were excluded. The binary case $N=2$ allows for a convenient analysis of the learned feature representations of the penultimate layer with $M = N = 2$. In case of $N=2$, cross-entropy and its derived losses did not always attain the optimal NC configuration through SGD, namely did not always align the class centroids along the axes. 
In some cases, the learned representation collapsed to one class centroid in the origin and the other one on a diagonal line in the positive quadrant in the 2D feature space. Figure \ref{fig:feature_dists} shows this case in (a) and a case the corresponds to the optimal NC configuration in (b). We filtered out the former examples, as these only infrequently occur in the $M=2$ case.

\subsubsection{Conventional label noise: Further results}

In the first label noise setting, we considered conventional label corruption, which is described in the paper. Beyond the results shown in the main part, we provide further evidence of our findings here. To this end, we repeated the experiment with different numbers of classes, namely $N \in \{3, 5, 10\}$. Figures \ref{fig:mem_exps_c3},  \ref{fig:mem_exps_c5} and  \ref{fig:mem_exps_c10} show the results. Albeit not perfect, a similar dependence can be observed for multi-class settings.

\begin{figure}[htbp]
\centering
\begin{subfigure}{\columnwidth}
  \centering
  \includegraphics[width=\linewidth]{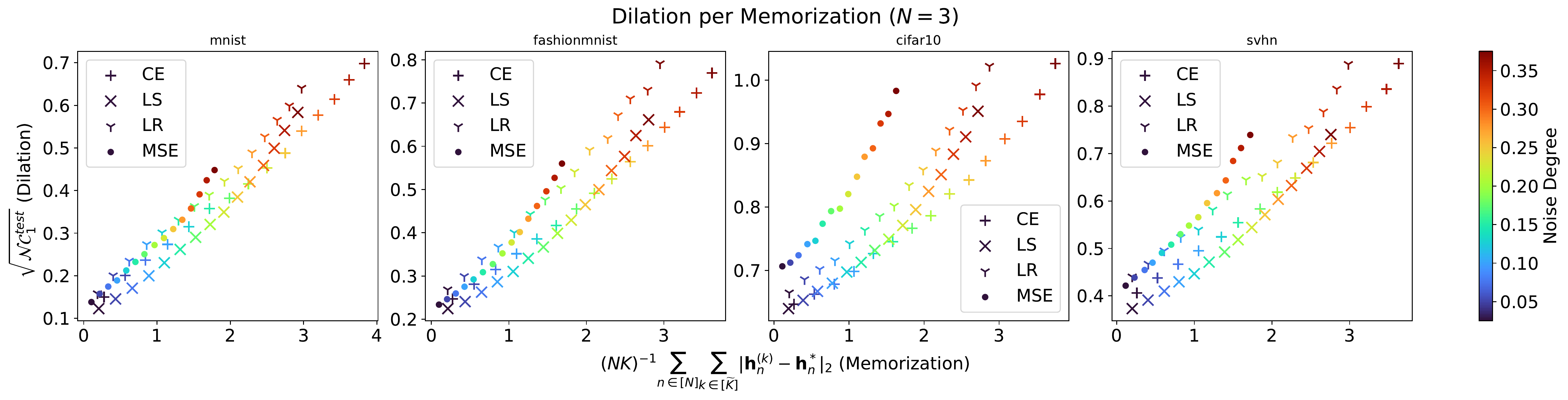}
\end{subfigure}%
\\
\begin{subfigure}{\columnwidth}
  \centering
  \includegraphics[width=\linewidth]{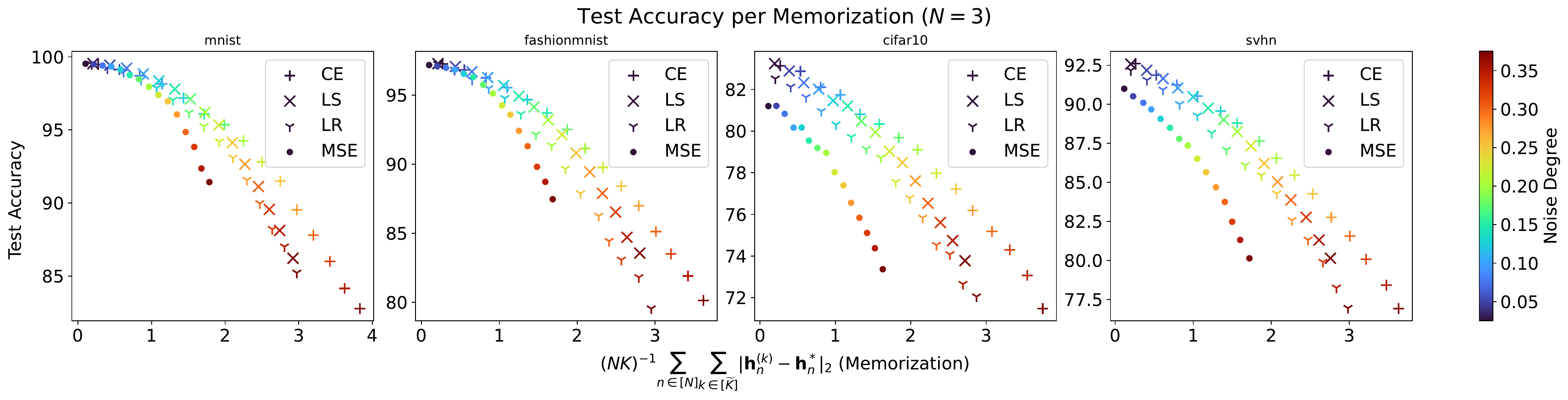}
\end{subfigure}
    \caption{Feature collapse of the test instances in terms of $\sqrt{\mathcal{NC}^{\text{test}}_1}$ per memorization and the resulting test accuracies (averaged over ten seeds) for $N=3$.}
    \label{fig:mem_exps_c3}
\end{figure}

\begin{figure}[htbp]
\centering
\begin{subfigure}{\columnwidth}
  \centering
  \includegraphics[width=\linewidth]{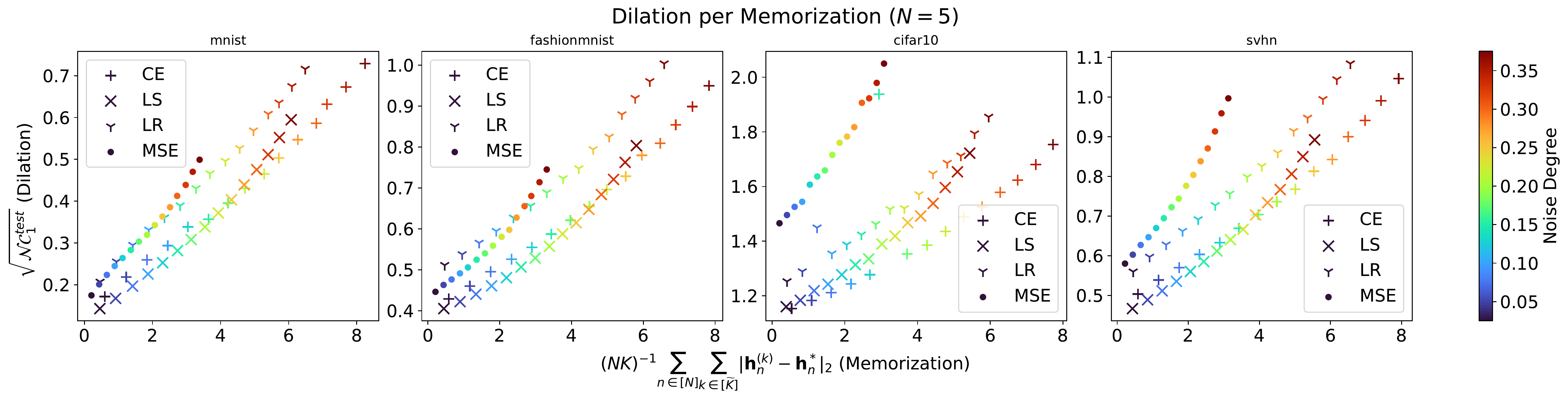}
\end{subfigure}%
\\
\begin{subfigure}{\columnwidth}
  \centering
  \includegraphics[width=\linewidth]{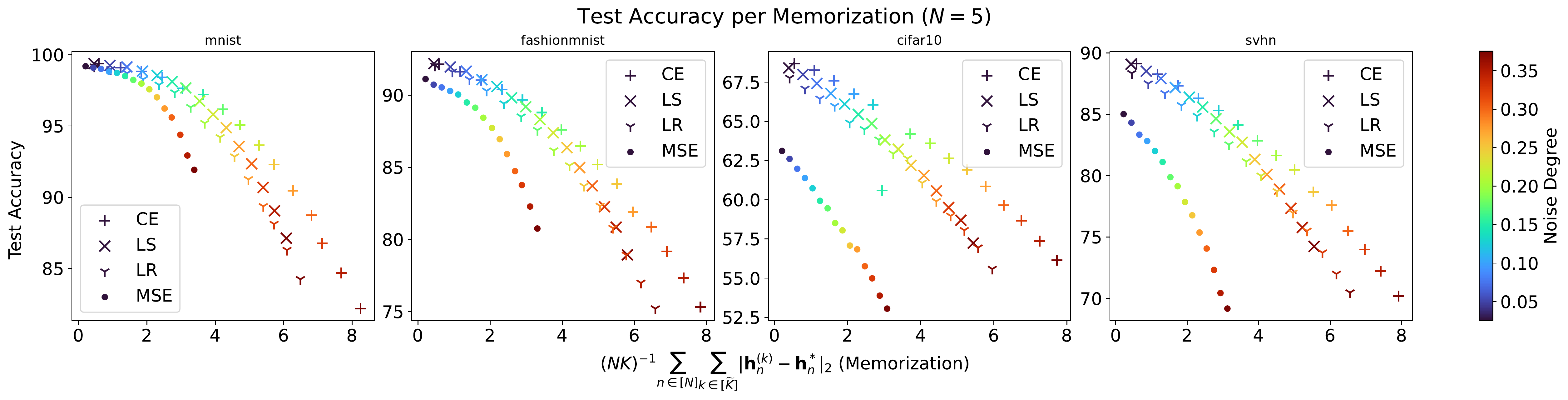}
\end{subfigure}
    \caption{Feature collapse of the test instances in terms of $\sqrt{\mathcal{NC}^{\text{test}}_1}$ per memorization and the resulting test accuracies (averaged over ten seeds) for $N=5$.}
    \label{fig:mem_exps_c5}
\end{figure}

\begin{figure}[htbp]
\centering
\begin{subfigure}{\columnwidth}
  \centering
  \includegraphics[width=\linewidth]{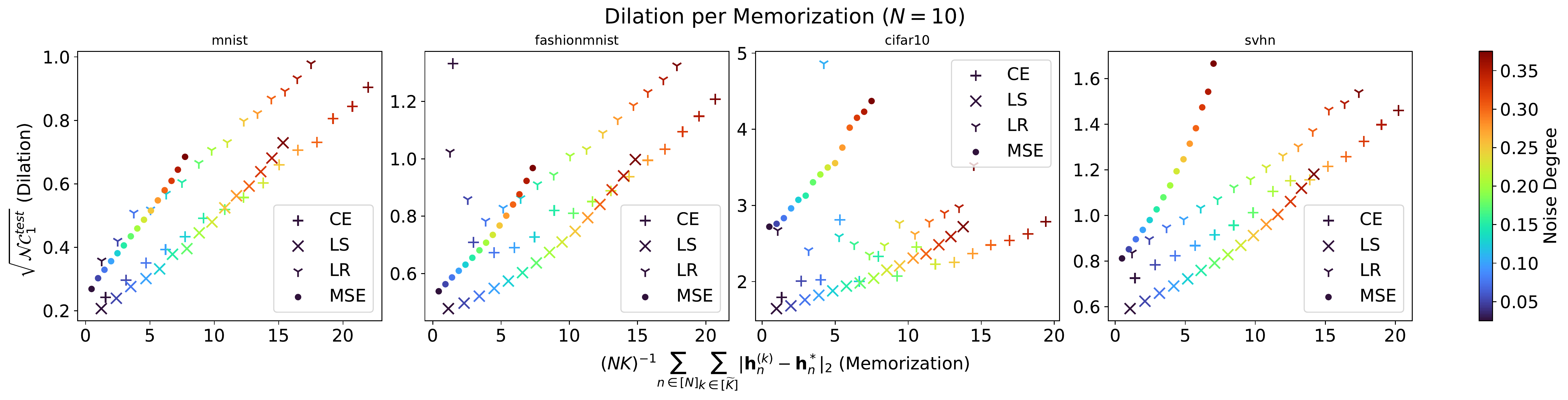}
\end{subfigure}%
\\
\begin{subfigure}{\columnwidth}
  \centering
  \includegraphics[width=\linewidth]{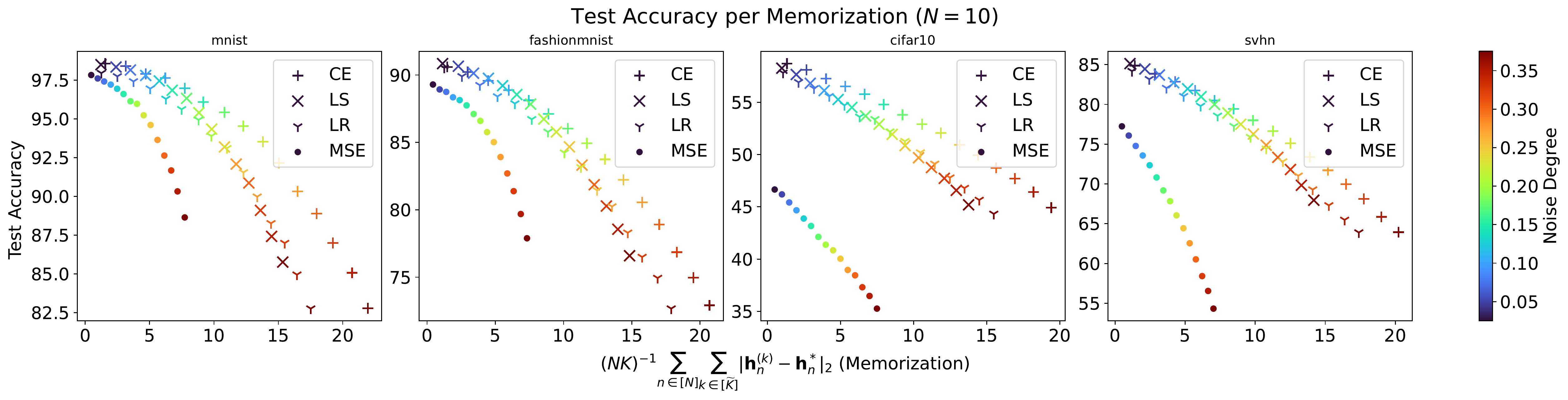}
\end{subfigure}
    \caption{Feature collapse of the test instances in terms of $\sqrt{\mathcal{NC}^{\text{test}}_1}$ per memorization and the resulting test accuracies (averaged over ten seeds) for $N=10$.}
    \label{fig:mem_exps_c10}
\end{figure}

Additionally, we show results for $M > N$ to illustrate that the correspondence also holds for higher dimensions. Figure \ref{fig:mem_exps_higher_dim} shows the resulting dilation per memorization for $N\in\set{5,10}$ on MNIST and CIFAR-10.

\begin{figure}[htbp]
\centering
\begin{subfigure}{0.45\columnwidth}
  \centering
  \includegraphics[width=\linewidth]{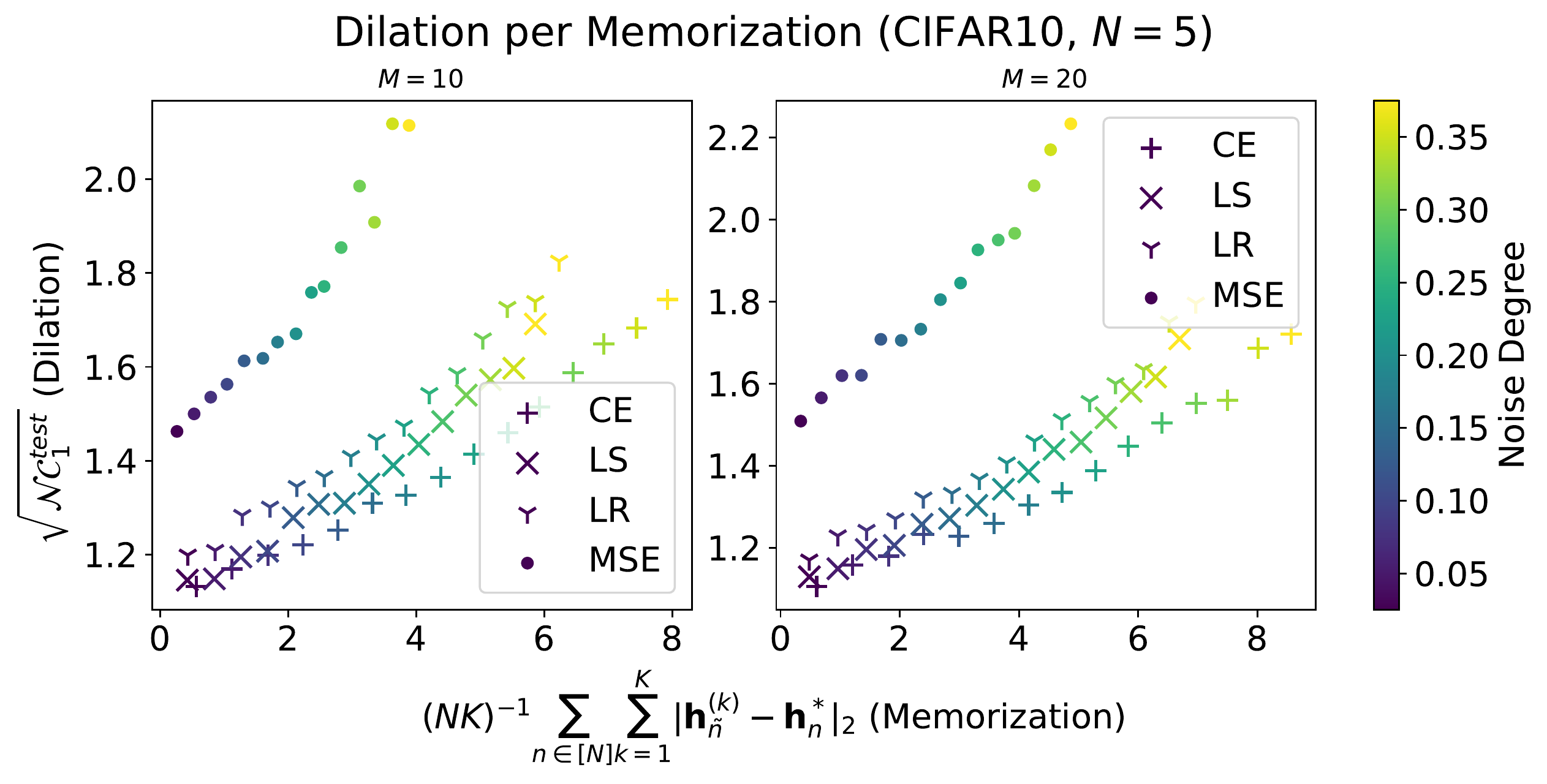}
\end{subfigure}%
\hspace{1cm}
\begin{subfigure}{0.45\columnwidth}
  \centering
  \includegraphics[width=\linewidth]{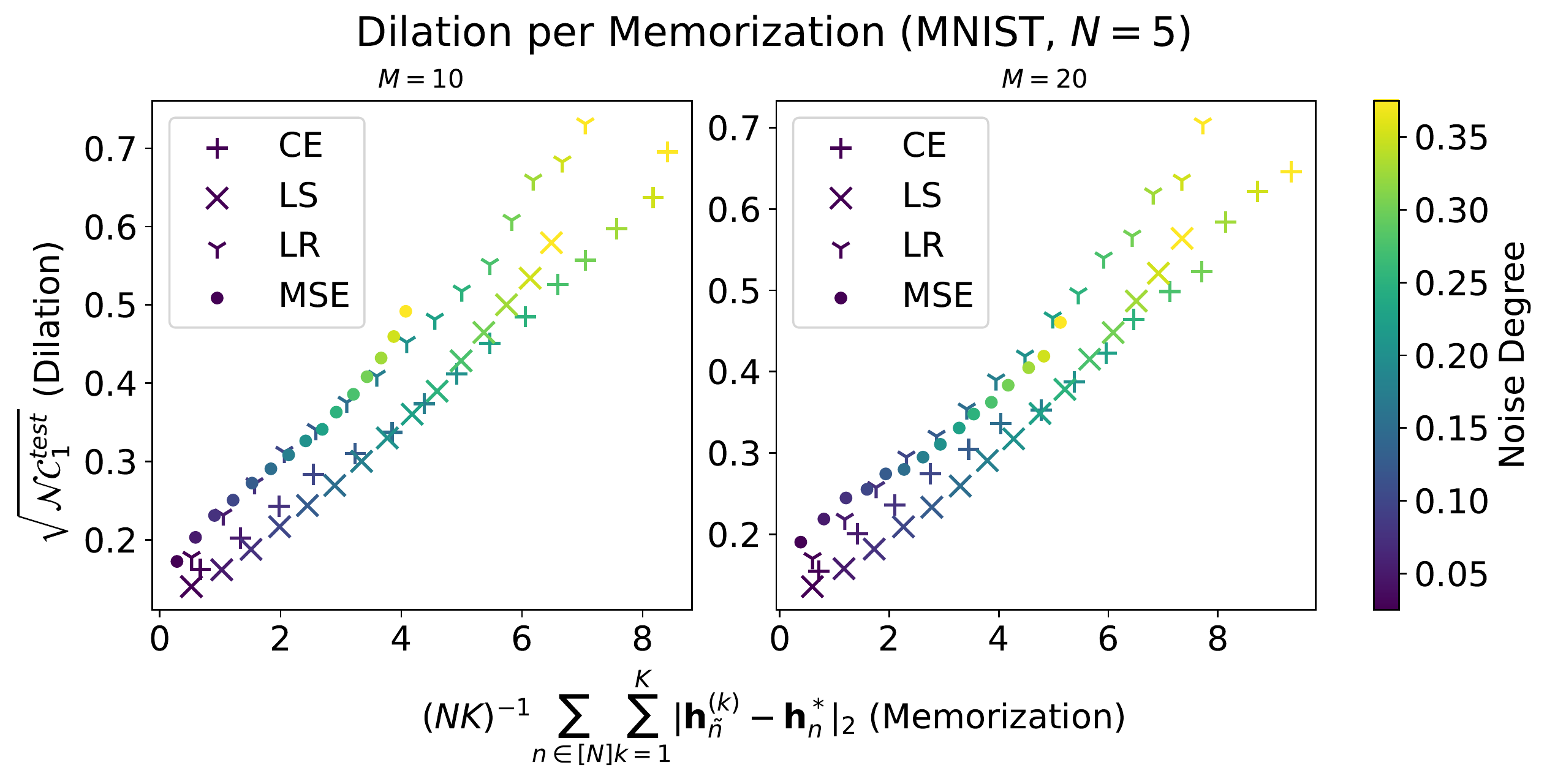}
\end{subfigure}
\\
\vspace{1cm}
\begin{subfigure}{0.45\columnwidth}
  \centering
  \includegraphics[width=\linewidth]{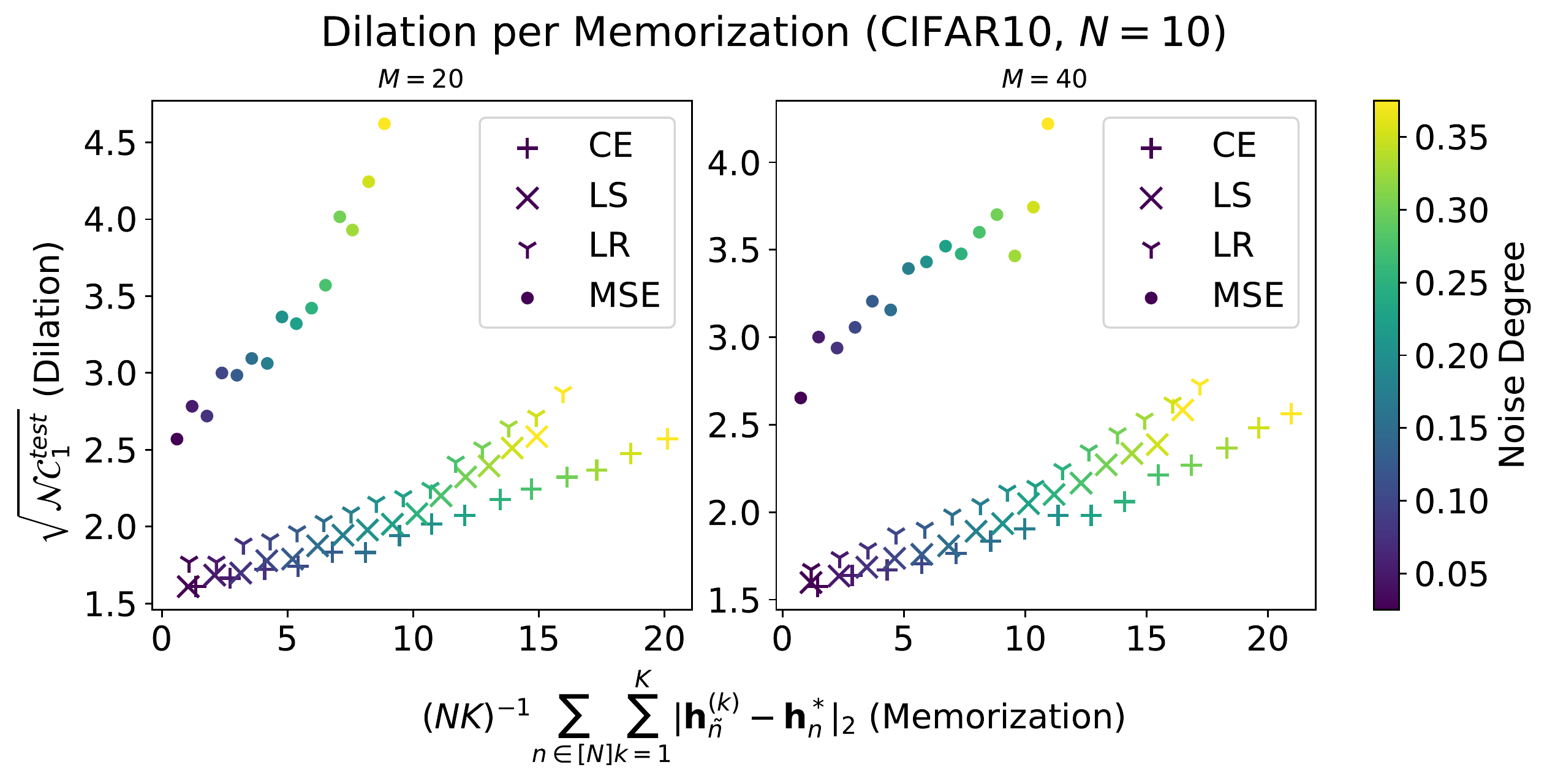}
\end{subfigure}%
\hspace{1cm}
\begin{subfigure}{0.45\columnwidth}
  \centering
  \includegraphics[width=\linewidth]{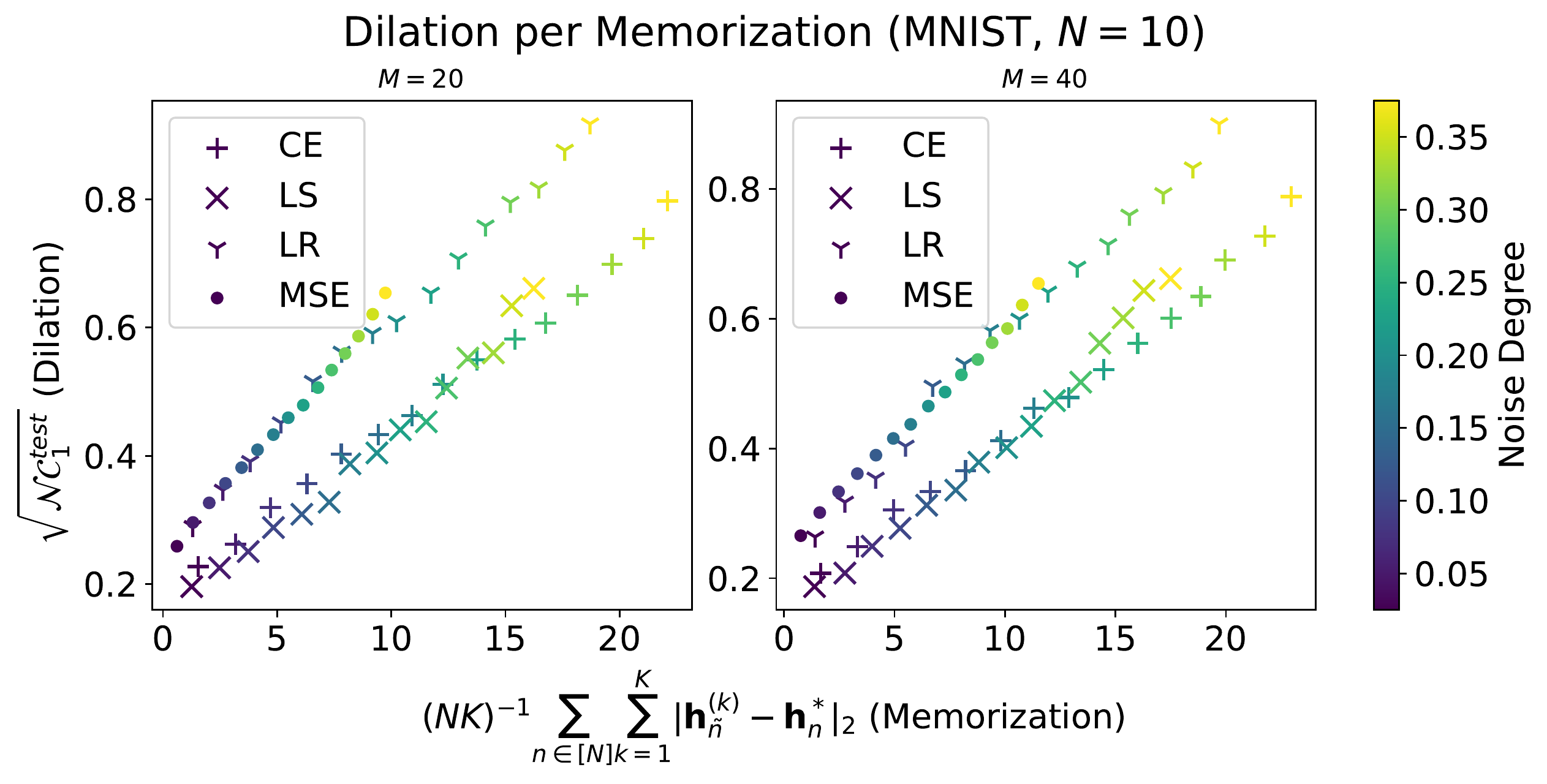}
\end{subfigure}
    \caption{Feature collapse of the test instances in terms of $\sqrt{\mathcal{NC}^{\text{test}}_1}$ per memorization for higher feature dimensions $M$ (averaged over five seeds) for $N\in \set{5,10}$.}

\label{fig:mem_exps_higher_dim}
\end{figure}

\subsubsection{Latent noise classes: Further results}

While we considered ``conventional'' label noise in the first experiment, we extend our analysis to a different form of label noise: For each original class, we split an instance fraction $\eta \in [0.025, 0.2]$ of each class apart and introduce new latent (noise) classes. Thus, the learner has again to separate these instance from their original class as it is pretended to face different classes. We consider the same basic architectural framework, but with four classes instead of two. To preserve compatibility to the previous experiments, we keep $d=N=2$. We repeated each run with $5$ different random seeds.

\begin{figure}[t]
\centering
\begin{subfigure}{0.75\columnwidth}
  \centering
  \includegraphics[width=\linewidth]{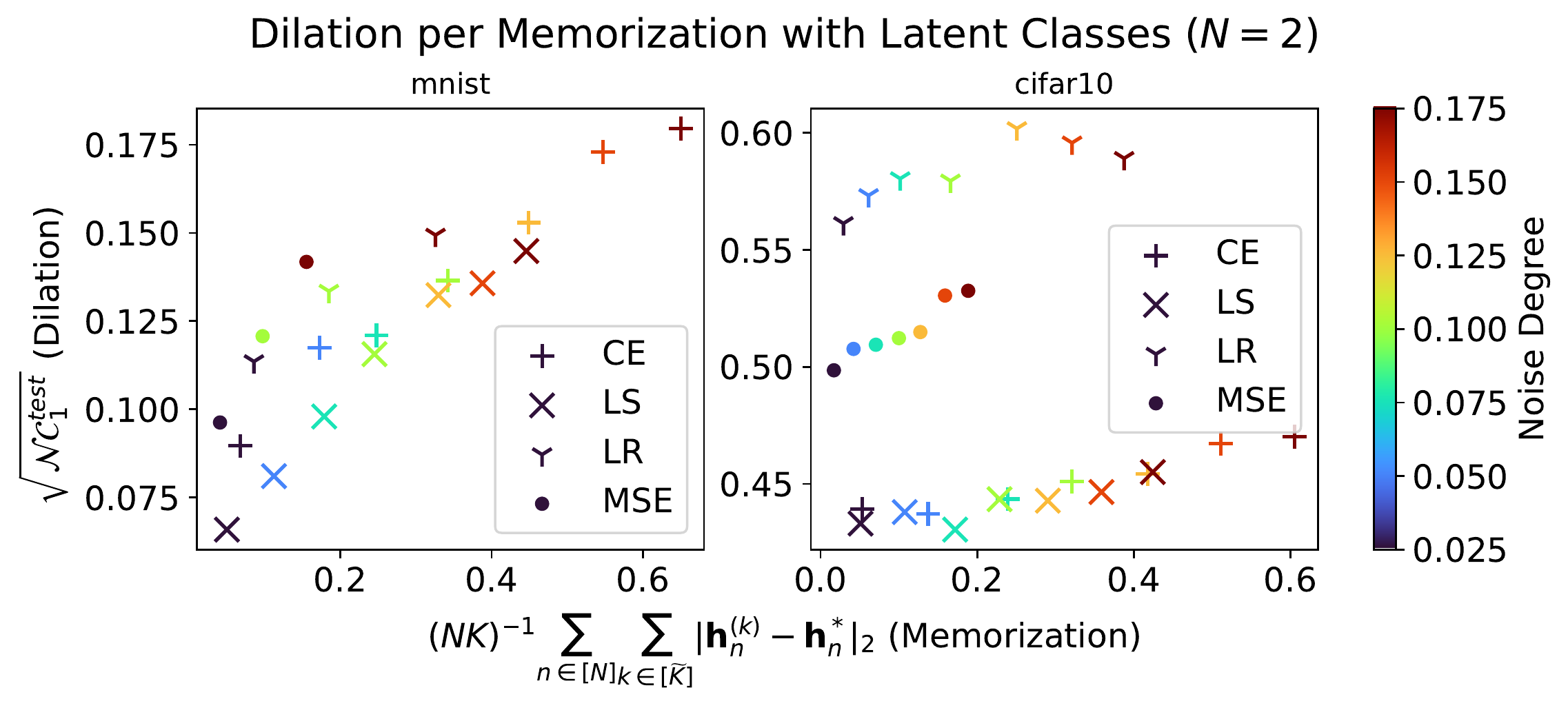}
\end{subfigure}
\\
\begin{subfigure}{0.75\columnwidth}
  \centering
  \includegraphics[width=\linewidth]{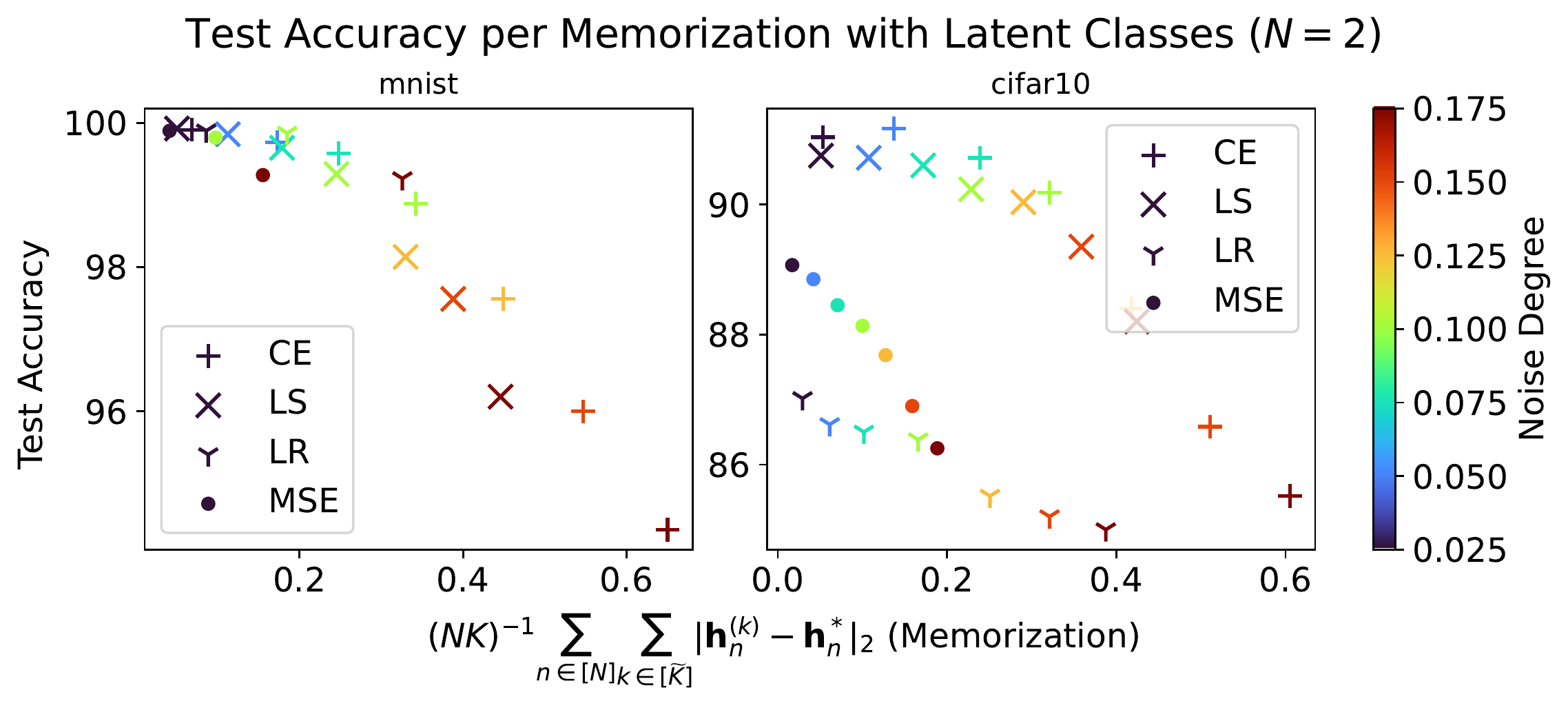}
\end{subfigure}
\caption{Dilation and test accuracy per memorization for label noise in form of latent noise classes, where we considered $N=2$ original classes. The results are averaged over $5$ random seeds.}
\label{fig:latent_classes}
\end{figure}

For this different type of label noise, the results shown in \Cref{fig:latent_classes} match the observations made before. Although the correspondence is not as clear as in the standard noise model, CE and LS are close to sharing the same curve for both datasets. Similarly, one can see a linear trend in the test collapse per memorization, which is now defined between the instances of the latent class to the test centroid of the original class. Also, we see similar trends regarding the generalization performance.

\subsection{Large-scale experiments}

\subsubsection{Setting}

Beyond the experiments in the previous section, we analyzed the neural collapse properties when training commonly used architectures, such as ResNet \cite{DBLP:conf/cvpr/HeZRS16} and VGG \cite{DBLP:journals/corr/SimonyanZ14a} models. To this end, we trained the variants ResNet18 and VGG13 on the four benchmarks MNIST, FashionMNIST, CIFAR-10 and SVHN. Here, we consider conventional label noise degrees $\eta \in \{0, 0.1, 0.2, 0.3\}$. To ensure a fair comparison, we optimized hyperparameters, such as the learning rate schedule and the smoothing and relaxation parameters $\alpha$ for LS and LR, in a Bayesian optimization using Hyperband \cite{DBLP:conf/mlsys/LiJRGBHRT20}. We tuned these parameters based on a $20 \%$ separated validation split in the no-noise case $\eta = 0$, and applied the best parameters in the noise settings with $\eta > 0$.

\begin{table}[t]
    \centering
    \begin{tabular}{ll}
        \toprule
         Parameter & Space  \\
         \midrule
         Initial learning rate & $[1e^{-5}, 0.5]$ \\
         Learning rate multiplier & $\{0.01, 0.1, 0.5\}$ \\
         Smoothing parameter $\alpha$ (LS) & $[0.01, 0.25]$ \\
         Relaxation parameter $\alpha$ (LR) & $[0.01, 0.25]$ \\
         \bottomrule
    \end{tabular}
    \caption{Search space considered within the hyperparameter optimization in the large-scale experiments.}
    \label{tab:appendix:ho_spaces}
\end{table}

Just as in the previous experiments, we used SGD as optimizer with Nesterov momentum of $0.9$, trained for $200$ epochs with a batch size of $512$. However, as opposed to the setting before, we performed a Bayesian hyperparameter optimization employing a Hyperband scheduler \cite{DBLP:conf/mlsys/LiJRGBHRT20} on a separated 20 \% validation split. To this end, we used the \texttt{skopt}\footnote{\url{https://scikit-optimize.github.io/}, BSD license} implementation and optimized for $30$ iterations. Table \ref{tab:appendix:ho_spaces} shows the considered hyperparameter space. The final model used within the evaluation was eventually trained on the complete training set (i.e., including the validation set). We repeated each run $3$ times with different seeds and report the averaged results including their standard deviations.

\subsubsection{Results}

\begin{table*}[t]
    \centering
    \resizebox{\textwidth}{!}{%
    \begin{tabular}{llcccccccc}
    \toprule
         & & \multicolumn{2}{c}{MNIST} & \multicolumn{2}{c}{FashionMNIST} & \multicolumn{2}{c}{CIFAR-10} & \multicolumn{2}{c}{SVHN} \\
         $\eta$ & Loss & ResNet18 & VGG13 & ResNet18 & VGG13 & ResNet18 & VGG13 & ResNet18 & VGG13 \\
         \midrule
         \multirow{2}{*}{0.0} & CE &  99.58 $\pm$ {\small 0.05} & 99.22 $\pm$ {\small 0.18} & 92.64 $\pm$ {\small 0.20} & 91.87 $\pm$ {\small 0.10} & 78.46 $\pm$ {\small 0.44} & 80.40 $\pm$ {\small 1.15} & 93.44 $\pm$ {\small 0.05} & 94.55 $\pm$ {\small 0.10} \\
         & LS & 99.56 $\pm$ {\small 0.01} & 99.34 $\pm$ {\small 0.08} & 92.36 $\pm$ {\small 0.13} & 91.94 $\pm$ {\small 0.56} & 78.85 $\pm$ {\small 0.29} & 81.33 $\pm$ {\small 0.38} & 93.55 $\pm$ {\small 0.21} & 92.50 $\pm$ {\small 0.50} \\
         \midrule
         \multirow{2}{*}{0.1} & CE & 98.32 $\pm$ {\small 0.11} & 97.01 $\pm$ {\small 0.28} & 89.35 $\pm$ {1.65} & 89.81 $\pm$ {\small 0.30} & 70.86 $\pm$ {\small 1.54} & 76.12 $\pm$ {\small 1.58} & 88.89 $\pm$ {\small 0.79} & 90.51 $\pm$ {\small 0.24} \\
         & LS & 98.51 $\pm$ {\small 0.14} & 98.16 $\pm$ {\small 0.03} & 89.62 $\pm$ {\small 0.53} & 90.41 $\pm$ {0.52} & 71.82 $\pm$ {\small 1.24} & 76.13 $\pm$ {\small 1.74} & 89.68 $\pm$ {\small 0.40} & 90.60 $\pm$ {\small 0.40} \\
         \midrule
         \multirow{2}{*}{0.2} & CE & 96.34 $\pm$ {\small 0.20} & 92.72 $\pm$ {\small 1.21} & 85.20 $\pm$ {\small 2.67} & 85.39 $\pm$ {\small 0.37} & 61.39 $\pm$ {\small 1.30} & 70.09 $\pm$ {\small 0.46} & 83.40 $\pm$ {\small 2.19} & 86.03 $\pm$ {\small 0.03} \\
         & LS & 96.48 $\pm$ {\small 0.13} & 95.00 $\pm$ {\small 0.04} & 86.27 $\pm$ {\small 1.24} & 87.54 $\pm$ {\small 0.68} & 63.69 $\pm$ {\small 3.06} & 70.97 $\pm$ {\small 1.58} & 85.40 $\pm$ {\small 0.49} & 86.81 $\pm$ {\small 0.70} \\
         \midrule
         \multirow{2}{*}{0.3} & CE & 91.82 $\pm$ {\small 0.22} & 87.87 $\pm$ {\small 0.64} & 79.91 $\pm$ {\small 3.26} & 80.01 $\pm$ {\small 0.65} & 52.52 $\pm$ {\small 0.05} & 63.22 $\pm$ {\small 1.26} & 77.10 $\pm$ {\small 1.55} & 59.71 $\pm$ {\small 28.37} \\
         & LS & 92.05 $\pm$ {\small 0.31} & 88.07 $\pm$ {\small 0.42} & 81.18 $\pm$ {\small 2.41} & 82.59 $\pm$ {\small 1.25} & 55.31 $\pm$ {\small 2.09} & 63.63 $\pm$ {\small 2.88} & 79.62 $\pm$ {\small 0.44} & 80.71 $\pm$ {\small 1.56} \\
         \bottomrule
    \end{tabular}
    }
    \caption{Generalization performances and their standard deviations in terms of test accuracies for different label noise degrees $\eta$ (average over $3$ seeds).}
    \label{tab:large_scale_exps}
\end{table*}

Table \ref{tab:large_scale_exps} shows the resulting generalization performances as an average over $3$ seeds. As can be seen, label smoothing consistently improves over cross-entropy, confirming both the empirical and theoretical observations as presented before. These results suggests that label smoothing is particularly appealing in case of label noise.

\subsection{Technical infrastructure}

To realize the experiments, we proceeded from the official code base of \cite{DBLP:journals/corr/abs-2105-02375}\footnote{\url{https://github.com/tding1/Neural-Collapse}} and augmented it by further baselines, models and our evaluation metrics. This implementation leverages \texttt{PyTorch}\footnote{\url{https://pytorch.org/}, BSD license} as deep learning framework and obtains data and models from \texttt{torchvision}\footnote{\url{https://pytorch.org/vision/}, BSD license}. To execute the runs, we used Nvidia GPU accelerators (1080/2080 Ti, Titan RTX) in a modern cluster environment. Our code is publicly available at \url{https://github.com/julilien/MemorizationDilation}.

\section{Theoretical supports for Theorem \ref{theorem:standard_model}}
In this appendix we introduce the theoretical supports for our theorem on the layer-peeled model, i.e. Theorem \ref{theorem:standard_model}. Before going to the proof, we will shortly discuss in Subsection \ref{subsection:discussion_standard_result} about how this finding differs from several related ones while still entailing the NC properties NC1-NC3. Then we will introduce some auxiliary results that are helpful for the proof in Subsections \ref{Sec: Reformulation_loss} and \ref{subsection:technical_lemmas}, and the proof itself in Subsection \ref{subsection:proof_standard_model}.

Note that the concurrent work \cite{all_losses_equal_2022} studies a very similar problem as \Cref{Problem:standard}, where the loss function $L_\alpha$ is replaced by a more general one that satisfies the so-called \emph{contrastive property}. This covers both CE and LS loss, and hence the proof in this work is quite similar to ours. The main difference between the two models is namely the positivity constraint on the features, which leads to the main technical difference in the proof. 

Shortly speaking, the ultimate goal in the proof in \cite{all_losses_equal_2022} (or several works of the same type, e.g. \cite{DBLP:journals/corr/abs-2105-02375}) and ours is to show that the loss is lower bounded by some constant, and equality occurs only if the NC configuration is satisfied. However, if the minimization problem is unconstrained, one just needs to consider critical points, which might have a nice form. Although this is a clever trick, it cannot (at least not directly) be applied to the constrained problem, because the first-order optimality condition does not give as much information as in the unconstrained case (since it does not involve a simple equality form). 

To overcome this, we will find a lower bound of the loss function in terms of $\norm{\vec{W}}$ and $\norm{\vec{H}}$, instead of the variables $\vec{W}$ and $\vec{H}$ themselves. This is possible in some certain region of the set of all feasible values of $\vec{W}$ and $\vec{H}$ (case (c) in Step 2 in our proof), while for other regions, we will show that either the minimizer does not belong to the region, or when it does, it must be also at NC configuration. Finally, the lower bound in terms of $\norm{\vec{W}}$ and $\norm{\vec{H}}$ is easier to deal with than the original one in terms of $\vec{W}$ and $\vec{H}$. In fact, we will see that despite the positivity of $\norm{\vec{W}}$ and $\norm{\vec{H}}$, one could consider the mentioned lower bound as a function of two variables that can take any value in $\R$. 

\subsection{Discussion about the result}\label{subsection:discussion_standard_result}

The configurations defined in \Cref{def_NC} above differ from the ones specified in other works, e.g. \citet{Fang2021ExploringDN,DBLP:journals/corr/abs-2105-02375,all_losses_equal_2022} and describe more precisely the empirically observed NC phenomena. Indeed, it has been shown in those papers that the minimizers of (\ref{Problem:standard}) without positivity constraint on the features must satisfy the following conditions
\begin{enumerate}[label=(\roman*$'$)]
  \item The feature representations $\vec{h}_n^{(k)}$ within every class $n \in [N]$ are equal for all $k\in [K]$, and thus equal to their class mean $\vec{h}_n := \frac{1}{K}\sum_{k=1}^K \vec{h}_n^{(k)}$.
  \item The class means $\{\vec{h}_n\}_{n=1}^N$ have equal norms and form an $N$-simplex equiangular tight frame (ETF) up to some rescaling. 
  \item The weight matrix $\vec{W}$ satisfies  $\vec{w}_n = C\vec{h}_n$ 
  for some constant $C>0$.
\end{enumerate}

Although the configuration defined by (i')-(iii') is different from the one defined in Def. \ref{def_NC}, both entail the limit of the NC1-NC3 properties. Indeed, it is straightforward to observe the relation between NC1 and (i) or (i'). Moreover, the limit of NC2 is directly implied from (ii'), namely because the global mean $\vec{h} = \frac{1}{N}\sum_{n=1}^N \vec{h}_n$ must lie at the origin, i.e. $\vec{h}=0$, while 
\begin{align*}
    \norm{\vec{h}_m} = \norm{\vec{h}_n} \quad \text{and} \quad \sprod{\frac{\vec{h}_m}{\norm{\vec{h}_m}}, \frac{\vec{h}_n}{\norm{\vec{h}_n}}} = -\frac{1}{N-1} \quad \text{for any } m\neq n 
\end{align*}
hold as simple properties of a simplex ETF. The limit of NC3 follows also directly from the duality condition (iii') and the above observation that $\vec{h}=0$. On the other hand, it is not as straightforward, but also not difficult to see the connection between the conditions (ii) and (iii) in Def. \ref{def_NC}. Indeed, from (ii) it follows
\begin{align*}
    \norm{\vec{h}_m-\vec{h}}^2 &= \norm{\vec{h}_m}^2 - 2\sprod{\vec{h}_m,\vec{h}} + \norm{\vec{h}}^2 \\
    &= \norm{\vec{h}_m}^2 - \frac{2}{N}\norm{\vec{h}_m}^2 + \frac{1}{N}\norm{\vec{h}_m}^2\\
    &= \frac{N-1}{N} \norm{\vec{h}_m}^2,
\end{align*}
which combining with $\norm{\vec{h}_m} = \norm{\vec{h}_n}$ gives $\norm{\vec{h}_m-\vec{h}} = \norm{\vec{h}_n-\vec{h}}$ for any $m \neq n$. Also, 
\begin{align*}
    \sprod{\vec{h}_m-\vec{h},\vec{h}_n-\vec{h}} &= -\sprod{\vec{h},\vec{h}_n}-\sprod{\vec{h},\vec{h}_m} + \norm{\vec{h}}^2\\
    &=\frac{-2}{N}\norm{\vec{h}_n}^2+ \frac{1}{N} \norm{\vec{h}_n}^2\\
    &= \frac{-1}{N}\norm{\vec{h}_n}^2,
\end{align*}
which combining with the above finding shows 
\begin{align*}
    \sprod{\frac{\vec{h}_m-\vec{h}}{\norm{\vec{h}_n-\vec{h}}_2}, \frac{\vec{h}_m-\vec{h}}{\norm{\vec{h}_n-\vec{h}}_2}} = - \frac{1}{N-1}.
\end{align*}
Finally, the duality condition (iii) in Def. \ref{def_NC} involves the projection $P_{\vec{h}^\perp} \vec{h}_n$, which is the same as $\vec{h}_n -\vec{h}$. To see this, observe that $\vec{h}_n- (\vec{h}_n-\vec{h}) = \vec{h} \perp \vec{h}^\perp$ and $\sprod{\vec{h}_n-\vec{h}, \vec{h}} = \frac{1}{N}\norm{\vec{h}_n} - \frac{1}{N^2} \sum_{i=1}^N \norm{\vec{h}_i}^2 = 0.$

Hence, the condition (iii) simply means that $\vec{w}_n$ is proportional to $\vec{h}_n-\vec{h}$, which corresponds with the limit of the property NC3. 

The main difference between our definition of NC configuration (i.e. Def. \ref{def_NC}) and the one described by the conditions (i')-(iii') above is that we require the \emph{centralized} class means $\set{\vec{h}_n-\vec{h}}_{n=1}^N$ to form a simplex ETF, not the class means $\set{\vec{h}_n}_{n=1}^N$ themselves. Similarly, the duality in our definition involves the weights $\vec{w}_n$ and the \emph{centralized} class means $\vec{h}_n -\vec{h}$, and not directly the class means or class features. Concerning the class means $\set{\vec{h}_n}_{n=1}^N$, we require them in (ii) to be an equinorm orthogonal system, which differs from a simplex ETF. In practice, if ReLU operation is applied, the features must be positive and hence the class means cannot form or approximate a simplex ETF, which must center at the origin. 

In this sense, the NC configuration defined in Def. \ref{def_NC} can be seen as capturing more closely the NC phenomena in practice. Meanwhile, the usual configuration (i')-(iii') is obtained by a translation by the global mean $\vec{h}$ from our NC configuration. This translation may drop several interesting properties of the configuration, for example the property that the class means $\vec{h}_n$ are orthogonal and therefore (as they are positive) have separate supports (i.e. the indices of the nonzero entries in $\vec{h}_m$ and $\vec{h}_n$ do not overlap). 

Notably, our NC configuration defined in Definition \ref{def_NC} and the \emph{orthogonal frame} 
configuration in Theorem 3.1 in \citet{Extended_unconstrained_features_model} appear to be similar, but have certain differences. Despite having an equivalent description for $\vec{H}$, our work considers positive features, which requires the feature vectors of different classes to have separate supports, i.e. their entries are supported on disjoint dimensions. Moreover, the weights $\vec{w}_n$ in our approach, as stated in (iii) in Def. \ref{def_NC}, are not proportional to the class means $\vec{h}_n$ but to the centralized ones $\vec{h}_n-\vec{h}$, and hence form a simplex ETF and not an equinorm orthogonal system. Moreover, even in the presence of bias terms, the resulting configuration in our paper remains unchanged, which is different to Theorem 3.2 in \citet{Extended_unconstrained_features_model}. Note that all these differences come from the fact that we consider the CE or LS loss, while the authors of \citet{Extended_unconstrained_features_model} study the MSE loss.

\subsection{Reformulation of the LS empirical risk} \label{Sec: Reformulation_loss}

Given a smoothing parameter $\alpha \in [0,1)$, we will write the LS empirical risk introduced in Section \ref{Section:standard_model} in more details,
\begin{align*}
    L_\alpha(\vec{W},\vec{H}) &= \frac{1}{NK} \sum_{k=1}^K \sum_{n=1}^N \ell_\alpha\Big(\vec{W}, \vec{h}_n^{(k)}, \vec{y}_n^{(\alpha)} \Big)\\
    &= \frac{1}{NK} \sum_{k=1}^K \sum_{n=1}^N 
    \Bigg[(1-\frac{N-1}{N}\alpha) \log \Big( \sum_{i=1}^N e^{\sprod{\vec{w}_i-\vec{w}_n,\vec{h}_n^{(k)}}}\Big) \\
    &\quad + \sum_{\substack{m=1\\m\neq n}}^N  \frac{\alpha}{N} \log \Big( \sum_{i=1}^N e^{\sprod{\vec{w}_i-\vec{w}_m,\vec{h}_n^{(k)}}}\Big) \Bigg]\\
    &=\frac{1}{NK} \sum_{k=1}^K \sum_{n=1}^N 
    \Bigg[(1-\frac{N-1}{N}\alpha) \log \Big( \sum_{i=1}^N e^{\sprod{\vec{w}_i-\vec{w}_n,\vec{h}_n^{(k)}}}\Big) \\
    &\quad + \sum_{\substack{m=1\\m\neq n}}^N  \frac{\alpha}{N} \log \Big( e^{\sprod{\vec{w}_n-\vec{w}_m,\vec{h}_n^{(k)}}}\sum_{i=1}^N e^{\sprod{\vec{w}_i-\vec{w}_n,\vec{h}_n^{(k)}}}\Big) \Bigg]\\
    &= \frac{1}{NK} \sum_{k=1}^K \sum_{n=1}^N 
    \Bigg[(1-\frac{N-1}{N}\alpha) \log \Big( \sum_{i=1}^N e^{\sprod{\vec{w}_i-\vec{w}_n,\vec{h}_n^{(k)}}}\Big)  \\
    &\quad + \sum_{\substack{m=1\\m\neq n}}^N  \frac{\alpha}{N} \log \Big( \sum_{i=1}^N e^{\sprod{\vec{w}_i-\vec{w}_n,\vec{h}_n^{(k)}}}\Big) + \sum_{\substack{m=1\\m\neq n}}^N  \frac{\alpha}{N}  \sprod{\vec{w}_n-\vec{w}_m,\vec{h}_n^{(k)}} \Bigg]
\end{align*}
Hence,
\begin{align*}
    L_\alpha(\vec{W},\vec{H}) 
    &= \frac{1}{NK} \sum_{k=1}^K \sum_{n=1}^N 
    \Bigg[(1-\frac{N-1}{N}\alpha) \log \Big( \sum_{i=1}^N e^{\sprod{\vec{w}_i-\vec{w}_n,\vec{h}_n^{(k)}}}\Big)  \\
    &\quad +   \frac{N-1}{N}\alpha \log \Big( \sum_{i=1}^N e^{\sprod{\vec{w}_i-\vec{w}_n,\vec{h}_n^{(k)}}}\Big) + \sum_{\substack{m=1\\m\neq n}}^N  \frac{\alpha}{N}  \sprod{\vec{w}_n-\vec{w}_m,\vec{h}_n^{(k)}} \Bigg]\\
    &= \frac{1}{NK} \sum_{k=1}^K \sum_{n=1}^N 
    \Bigg[ \log \Big( \sum_{i=1}^N e^{\sprod{\vec{w}_i-\vec{w}_n,\vec{h}_n^{(k)}}}\Big)  + \sum_{\substack{m=1\\m\neq n}}^N  \frac{\alpha}{N}  \sprod{\vec{w}_n-\vec{w}_m,\vec{h}_n^{(k)}} \Bigg]\\
    &= \frac{1}{NK} \sum_{k=1}^K \sum_{n=1}^N 
    \Bigg[ \log \Big( 1+\sum_{\substack{m=1\\m\neq n}}^N e^{\sprod{\vec{w}_m-\vec{w}_n,\vec{h}_n^{(k)}}}\Big)  - \sum_{\substack{m=1\\m\neq n}}^N  \frac{\alpha}{N}  \sprod{\vec{w}_m-\vec{w}_n,\vec{h}_n^{(k)}} \Bigg].
\end{align*}
Shortly speaking, this differs from the conventional CE loss just by an additional bilinear term $\frac{\alpha}{N}\frac{1}{NK}\sum_{k=1}^K \sum_{n=1}^N \sum_{m \neq n} \sprod{\vec{w}_m-\vec{w}_n,\vec{h}_n^{(k)}}$. 

\subsection{Technical lemmata}\label{subsection:technical_lemmas}
\begin{lemma}\label{Lemma:CauchySchwarz}
We define 
\begin{align*}
    P(\vec{W},\vec{H}) := \frac{1}{KN(N-1)}\sum_{k=1}^K \sum_{n=1}^N \sum_{\substack{m=1\\m\neq n}}^N\sprod{\vec{w}_m-\vec{w}_n,\vec{h}_n^{(k)}}.
\end{align*}
Then under the condition $\vec{H}\geq 0$ it holds
\begin{align}\label{ineq:Cauchyschwarz}
    P(\vec{W},\vec{H}) \geq -\frac{1}{\sqrt{KN(N-1)}} \norm{\vec{W}} \norm{\vec{H}}.
\end{align}
 
The inequality (\ref{ineq:Cauchyschwarz}) becomes an equality if and only if the following conditions hold simultaneously
\begin{align}\label{eq1:4}
    \sum_{n=1}^N \vec{w}_n = 0
\end{align}
\begin{align}\label{eq1:5}
    \sprod{\vec{h}_n^{(k)}, \vec{h}_m^{(k)}} = 0 \quad \text{for all } m,n \in [N], k\in [K], m\neq n,
\end{align}
\begin{align}\label{eq1:6}
    \norm{\vec{h}_n^{(k)}}  \quad \text{is independent of $n,k$},
\end{align}
\begin{align}\label{eq1:7}
    \vec{w}_m-\vec{w}_n = c'(\vec{h}_m^{(k)} - \vec{h}_n^{(k)}) \quad \text{for some } c'>0 \text{ not depending on $m,n,k$}.
\end{align}

\end{lemma}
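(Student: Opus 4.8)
The plan is to rewrite $P$ in a symmetric, pairwise form, apply a single Cauchy--Schwarz inequality to it, and then bound the two resulting norm sums separately --- the crucial point being that the positivity constraint $\vec{H}\geq 0$ is exactly what sharpens the constant in one of these two bounds.

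First I would symmetrize. Grouping the ordered pair $(n,m)$ with $(m,n)$ and using $\sprod{\vec{w}_m-\vec{w}_n,\vec{h}_n^{(k)}}+\sprod{\vec{w}_n-\vec{w}_m,\vec{h}_m^{(k)}} = -\sprod{\vec{w}_m-\vec{w}_n,\vec{h}_m^{(k)}-\vec{h}_n^{(k)}}$, I obtain
\begin{align*}
KN(N-1)\,P(\vec{W},\vec{H}) = -\tfrac{1}{2}\sum_{k=1}^K\sum_{\substack{m,n=1\\ m\neq n}}^N \sprod{\vec{w}_m-\vec{w}_n,\ \vec{h}_m^{(k)}-\vec{h}_n^{(k)}}.
\end{align*}
Lower-bounding $P$ is then the same as upper-bounding this double sum, which I would do by Cauchy--Schwarz on the long vectors indexed by $(k,m,n)$,
\begin{align*}
\sum_{k}\sum_{m\neq n}\sprod{\vec{w}_m-\vec{w}_n,\vec{h}_m^{(k)}-\vec{h}_n^{(k)}} \leq \Big(\sum_{k}\sum_{m\neq n}\norm{\vec{w}_m-\vec{w}_n}^2\Big)^{1/2}\Big(\sum_{k}\sum_{m\neq n}\norm{\vec{h}_m^{(k)}-\vec{h}_n^{(k)}}^2\Big)^{1/2}.
\end{align*}

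Next I would evaluate the two factors. Using $\sum_{m\neq n}\sprod{\vec{w}_m,\vec{w}_n}=\norm{\sum_n\vec{w}_n}^2-\norm{\vec{W}}^2$ gives $\sum_{m\neq n}\norm{\vec{w}_m-\vec{w}_n}^2 = 2N\norm{\vec{W}}^2-2\norm{\sum_n\vec{w}_n}^2\leq 2N\norm{\vec{W}}^2$, hence (the summand being $k$-independent) $\sum_k\sum_{m\neq n}\norm{\vec{w}_m-\vec{w}_n}^2\leq 2KN\norm{\vec{W}}^2$, with equality exactly when $\sum_n\vec{w}_n=0$, i.e.\ condition \eqref{eq1:4}. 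The same expansion per index $k$ yields $\sum_{m\neq n}\norm{\vec{h}_m^{(k)}-\vec{h}_n^{(k)}}^2 = 2N\sum_n\norm{\vec{h}_n^{(k)}}^2-2\norm{\sum_n\vec{h}_n^{(k)}}^2$, and here the positivity is essential: since $\vec{h}_n^{(k)}\geq 0$ the cross terms $\sprod{\vec{h}_m^{(k)},\vec{h}_n^{(k)}}$ are nonnegative, so $\norm{\sum_n\vec{h}_n^{(k)}}^2\geq \sum_n\norm{\vec{h}_n^{(k)}}^2$ and therefore $\sum_{m\neq n}\norm{\vec{h}_m^{(k)}-\vec{h}_n^{(k)}}^2\leq 2(N-1)\sum_n\norm{\vec{h}_n^{(k)}}^2$; summing over $k$ bounds the second factor by $2(N-1)\norm{\vec{H}}^2$, with equality exactly when $\sprod{\vec{h}_m^{(k)},\vec{h}_n^{(k)}}=0$ for all $m\neq n$ and all $k$, i.e.\ condition \eqref{eq1:5}. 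Substituting both bounds gives $KN(N-1)P\geq-\tfrac12\sqrt{2KN\norm{\vec{W}}^2}\,\sqrt{2(N-1)\norm{\vec{H}}^2}=-\sqrt{KN(N-1)}\,\norm{\vec{W}}\norm{\vec{H}}$, which is precisely \eqref{ineq:Cauchyschwarz}.

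For the equality characterization I would read off the equality cases of the three steps: the two norm bounds give \eqref{eq1:4} and \eqref{eq1:5}, while equality in Cauchy--Schwarz forces the long feature-difference vector to be a nonnegative multiple of the long weight-difference vector, i.e.\ $\vec{w}_m-\vec{w}_n=c'(\vec{h}_m^{(k)}-\vec{h}_n^{(k)})$ with one constant $c'>0$ independent of $m,n,k$, which is condition \eqref{eq1:7}; conversely, these together with the equinorm property \eqref{eq1:6} make the whole chain tight, so they are sufficient. The two points I expect to require the most care are: (i) recognizing the symmetric rewriting and, above all, that $\vec{H}\geq 0$ is exactly what upgrades the naive constant $\tfrac{1}{\sqrt{K}(N-1)}$ (which is all one gets from Cauchy--Schwarz applied to the unsymmetrized form, without positivity) to the sharp $\tfrac{1}{\sqrt{KN(N-1)}}$; and (ii) the bookkeeping of the equality cases, in particular ruling out the degenerate proportionality constant $c'{=}0$, which would force the feature differences to vanish and, combined with \eqref{eq1:5}, collapse $\vec{H}=0$, so that $c'>0$ as claimed in the non-degenerate regime.
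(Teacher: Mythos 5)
Your proof of the inequality is correct, and it rests on the same ingredients as the paper's proof (pairwise symmetrization, Cauchy--Schwarz, and positivity of the features to discard the cross terms $\sprod{\vec{h}_m^{(k)},\vec{h}_n^{(k)}}\geq 0$), but your execution is genuinely different and in fact sharper. The paper applies Cauchy--Schwarz separately for each $k$, then passes through the estimate $\sqrt{\sum_n\norm{\vec{h}_n^{(k)}}^2}\leq\sum_n\norm{\vec{h}_n^{(k)}}$ and a second Cauchy--Schwarz over the pairs $(k,n)$; as written, that chain only yields $P\geq-\frac{1}{\sqrt{K(N-1)}}\norm{\vec{W}}\norm{\vec{H}}$, a constant $\sqrt{N}$ looser than the one claimed in (\ref{ineq:Cauchyschwarz}) (and it is this looser constant that then circulates through the proof of Theorem~\ref{theorem:standard_model}). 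Your single global Cauchy--Schwarz over the concatenated index $(k,m,n)$, combined with the exact identities for $\sum_{m\neq n}\norm{\vec{w}_m-\vec{w}_n}^2$ and $\sum_{m\neq n}\norm{\vec{h}_m^{(k)}-\vec{h}_n^{(k)}}^2$, is tight at every step and gives exactly $-\frac{1}{\sqrt{KN(N-1)}}\norm{\vec{W}}\norm{\vec{H}}$, which is the value attained by NC configurations. So your argument, not the paper's, proves the inequality with the constant as stated.

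On the equality characterization your proof deviates from the statement, but the deviation is in your favor. You obtain necessity of (\ref{eq1:4}), (\ref{eq1:5}), (\ref{eq1:7}) (with the degenerate case $c'=0$ correctly set aside) and sufficiency of the four conditions together; you never show that equality forces the equinorm condition (\ref{eq1:6}), and indeed it does not. Counterexample: $N=2$, $K=1$, $\vec{h}_1^{(1)}=(a,0)$, $\vec{h}_2^{(1)}=(0,b)$ with $a\neq b>0$, and $\vec{w}_1=-\vec{w}_2=\tfrac{1}{2}(a,-b)$. Then (\ref{eq1:4}), (\ref{eq1:5}), (\ref{eq1:7}) hold with $c'=1$, and both sides of (\ref{ineq:Cauchyschwarz}) equal $-\tfrac{1}{2}(a^2+b^2)$, while $\norm{\vec{h}_1^{(1)}}\neq\norm{\vec{h}_2^{(1)}}$; analogous examples exist for every $N$. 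The paper's own route to (\ref{eq1:6}) is unsound for a related reason: equality in its intermediate step $\sqrt{\sum_n\norm{\vec{h}_n^{(k)}}^2}\leq\sum_n\norm{\vec{h}_n^{(k)}}$ would require all but one of the vectors $\vec{h}_n^{(k)}$ to vanish for each $k$, which is incompatible with equinorm unless $\vec{H}=0$. The correct assertion is therefore ``equality iff (\ref{eq1:4}), (\ref{eq1:5}), (\ref{eq1:7})'' outside degenerate cases; this is a defect of the lemma rather than of your proof, though it does propagate, since Lemma~\ref{lemma:equality_conditions} invokes (\ref{eq1:6}) (for $N\geq 3$ equinorm can be recovered there from (\ref{eq1:10}) alone, but for $N=2$ it cannot). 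A last, inessential quibble: your aside that the unsymmetrized Cauchy--Schwarz without positivity gives the constant $\frac{1}{\sqrt{K}(N-1)}$ is not quite accurate, but nothing in your argument relies on it.
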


\begin{proof}
Using the Cauchy-Schwarz inequality we get
    \begin{align*}
        P(W,H) &:= \frac{1}{KN(N-1)}\sum_{k=1}^K \sum_{n=1}^N \sum_{\substack{m=1\\m\neq n}}^N\sprod{\vec{w}_m-\vec{w}_n,\vec{h}_n^{(k)}} \\ 
        &= \frac{1}{KN(N-1)}\sum_{k=1}^K \sum_{n=1}^N \sum_{m=n+1}^N \sprod{\vec{w}_m-\vec{w}_n,\vec{h}_n^{(k)} - \vec{h}_m^{(k)}} \\
        &\geq -\frac{1}{KN(N-1)}\sum_{k=1}^K \sqrt{\Big( \sum_{n=1}^N \sum_{m=n+1}^N \norm{\vec{w}_n-\vec{w}_m}^2\Big) \Big( \sum_{n=1}^N \sum_{m=n+1}^N \norm{\vec{h}_n^{(k)}-\vec{h}_m^{(k)}}^2\Big)}\\
        &= -\frac{1}{KN(N-1)}\underbrace{\sqrt{\sum_{n=1}^N \sum_{m=n+1}^N \norm{\vec{w}_n-\vec{w}_m}^2}}_{=:P_1} \underbrace{\sum_{k=1}^K\sqrt{\sum_{n=1}^N \sum_{m=n+1}^N \norm{\vec{h}_n^{(k)}-\vec{h}_m^{(k)}}^2}}_{=:P_2}.
    \end{align*}
    Further application of Cauchy-Schwarz inequality yields
    \begin{align*}
        P_1 &= \sqrt{\sum_{n} \sum_{m=n+1}^N \norm{\vec{w}_n-\vec{w}_m}^2} \\
        &= \sqrt{N\sum_{n=1}^N \norm{\vec{w}_n}^2 - \norm{\sum_{n=1}^N \vec{w}_n}^2} \leq \sqrt{N}\norm{\vec{W}}
    \end{align*}
    and 
    \begin{align*}
        P_2 &= \sum_{k=1}^K\sqrt{\Big( \sum_{n=1}^N \sum_{m =n+1}^N \norm{\vec{h}_n^{(k)}-\vec{h}_m^{(k)}}^2\Big)} \\
        &= \sum_{k=1}^K\sqrt{ (N-1) \sum_{n=1}^N \norm{\vec{h}_n^{(k)}}^2 - \sum_{n=1}^N\sum_{m=n+1}^N\sprod{\vec{h}_n^{(k)}, \vec{h}_m^{(k)}}}\\
        &\leq \sqrt{N-1}\sum_{k=1}^K\sum_{n=1}^N \norm{\vec{h}_n^{(k)}} \\
        &\leq \sqrt{KN(N-1)} \sqrt{\sum_{k=1}^K\sum_{n=1}^N \norm{\vec{h}_n^{(k)}}^2} = \sqrt{KN(N-1)}\norm{\vec{H}}
    \end{align*}
    Therefore
    \begin{align*}
        P(\vec{W},\vec{H}) \geq -\frac{1}{KN(N-1)}P_1P_2 \geq -\frac{1}{\sqrt{K(N-1)}} \norm{\vec{W}} \norm{\vec{H}}.
    \end{align*}
    
    This becomes an equality if and only if 
    \begin{itemize}
        \item The upper bound on $P_1$ becomes equality, i.e.
        \begin{align*}
            \sum_{n=1}^N \vec{w}_n = 0
        \end{align*}
        
        \item The upper bound on $P_2$ becomes equality, i.e.
        \begin{align*}
            \sprod{\vec{h}_n^{(k)}, \vec{h}_m^{(k)}} = 0 \quad \text{for all } m,n \in [N], k\in [K], m\neq n,
        \end{align*}
        
        \begin{align*}
            \norm{\vec{h}_n^{(k)}}  \quad \text{is independent of $n,k$}.
        \end{align*}
        
        \item The estimate $P \geq -\frac{1}{KN(N-1)} P_1P_2$ becomes an equality, i.e.
        \begin{align*}
            \vec{w}_m-\vec{w}_n = c(\vec{h}_m^{(k)} - \vec{h}_n^{(k)}) \quad \text{for some } c'>0 \text{ not depending on $m,n,k$}
        \end{align*}
    \end{itemize}
\end{proof}

\begin{lemma}\label{lemma:equality_conditions}
Assume that the inequality (\ref{ineq:Cauchyschwarz}) shown in Lemma \ref{Lemma:CauchySchwarz} equalizes. Furthermore assume that there exist constants $c_{n,k}\in \R$ (depending on $n \in [N]$ and $k\in [K]$) and $c\in \R$ such that
\begin{align}
    \sprod{\vec{w}_m, \vec{h}_n^{(k)}} = c_{n,k} \quad \text{for every } m\in [N]\setminus \set{n}, \label{proof_equality_condition_1} \\
    \sum_{\substack{m=1 \\ m \neq n}}^N \sprod{\vec{w}_m-\vec{w}_n,\vec{h}_n^{(k)}} = c \; \text{(not depending on $n,k$)}, \label{proof_equality_condition_2}
\end{align}
for all $n \in [N]$ and $k \in [K]$. 
Then, the pair $(\vec{W},\vec{H})$ must form a neural collapse configuration. Conversely, if $(\vec{W},\vec{H})$ is a neural collapse configuration, then (\ref{ineq:Cauchyschwarz}) becomes an equality and the conditions (\ref{proof_equality_condition_1}, \ref{proof_equality_condition_2}) both hold true.

\end{lemma}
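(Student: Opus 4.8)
The plan is to prove the two implications separately, with the forward direction resting decisively on the positivity constraint $\vec{H}\geq 0$. For the forward direction I would first unpack what the equality hypothesis buys us: by Lemma~\ref{Lemma:CauchySchwarz}, equality in (\ref{ineq:Cauchyschwarz}) is equivalent to conditions (\ref{eq1:4})--(\ref{eq1:7}) holding simultaneously. Combining (\ref{eq1:7}) with (\ref{eq1:4}), I would sum $\vec{w}_m-\vec{w}_n=c'(\vec{h}_m^{(k)}-\vec{h}_n^{(k)})$ over $n$ and use $\sum_n\vec{w}_n=0$ to obtain $\vec{w}_m=c'(\vec{h}_m^{(k)}-\bar{\vec{h}}^{(k)})$ with $\bar{\vec{h}}^{(k)}:=\frac1N\sum_n\vec{h}_n^{(k)}$; hence the centered feature $\vec{g}_m:=\vec{h}_m^{(k)}-\bar{\vec{h}}^{(k)}=\vec{w}_m/c'$ is \emph{independent of $k$}. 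I note in passing that (\ref{proof_equality_condition_1})--(\ref{proof_equality_condition_2}) are not strictly needed for this direction, but are easily seen to be consistent: writing $\vec{w}_n=-\sum_{m\neq n}\vec{w}_m$ gives $\sprod{\vec{w}_n,\vec{h}_n^{(k)}}=-(N-1)c_{n,k}$, and substituting into (\ref{proof_equality_condition_2}) forces $c=N(N-1)c_{n,k}$, so $c_{n,k}$ is a single constant; these conditions reappear as genuine obligations in the converse.

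The heart of the argument is the within-class collapse, and this is where positivity enters. Fix $k$. By (\ref{eq1:5}) the nonnegative vectors $\vec{h}_1^{(k)},\dots,\vec{h}_N^{(k)}$ are pairwise orthogonal, hence have pairwise disjoint supports $S_n^{(k)}$; consequently $\bar{\vec{h}}^{(k)}$ agrees with $\frac1N\vec{h}_n^{(k)}$ on $S_n^{(k)}$ and vanishes off $\bigcup_m S_m^{(k)}$. Evaluating $\vec{g}_n=\vec{h}_n^{(k)}-\bar{\vec{h}}^{(k)}$ coordinatewise then shows that $\vec{g}_n=\frac{N-1}{N}\vec{h}_n^{(k)}>0$ on $S_n^{(k)}$, $\vec{g}_n=-\frac1N\vec{h}_m^{(k)}<0$ on $S_m^{(k)}$ for $m\neq n$, and $\vec{g}_n=0$ elsewhere. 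In particular $\{i:(\vec{g}_n)_i>0\}=S_n^{(k)}$; since the left-hand side is independent of $k$, so is $S_n^{(k)}=:S_n$, and on $S_n$ we have $\vec{h}_n^{(k)}=\frac{N}{N-1}\vec{g}_n$, which is independent of $k$, while $\vec{h}_n^{(k)}$ vanishes off $S_n$. Therefore $\vec{h}_n^{(k)}$ does not depend on $k$, establishing part (i) of Definition~\ref{def_NC}.

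With collapse in hand, writing $\vec{h}_n:=\vec{h}_n^{(k)}$ and $\rho:=\norm{\vec{h}_n}$, part (ii) is immediate: orthogonality is (\ref{eq1:5}), equal norms is (\ref{eq1:6}), and nonnegativity is the standing constraint. For part (iii), the relation $\vec{w}_n=c'(\vec{h}_n-\bar{\vec{h}})$ derived above, together with the identity $P_{\vec{h}^\perp}\vec{h}_n=\vec{h}_n-\vec{h}$ (valid since $\sprod{\vec{h}_n,\vec{h}}=\tfrac{\rho^2}{N}=\norm{\vec{h}}^2$, as already computed in the appendix discussion), gives $\vec{w}_n=c'P_{\vec{h}^\perp}\vec{h}_n$ with $C=c'>0$. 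For the converse I would verify everything by direct substitution: given an NC configuration, collapse and $\vec{w}_n=C(\vec{h}_n-\vec{h})$ with $C>0$ make (\ref{eq1:4})--(\ref{eq1:7}) transparent (e.g.\ $\sum_n\vec{w}_n=C\sum_n(\vec{h}_n-\vec{h})=0$ and $\vec{w}_m-\vec{w}_n=C(\vec{h}_m-\vec{h}_n)$), so Lemma~\ref{Lemma:CauchySchwarz} yields equality in (\ref{ineq:Cauchyschwarz}); and computing $\sprod{\vec{w}_m,\vec{h}_n}=C(\sprod{\vec{h}_m,\vec{h}_n}-\tfrac{\rho^2}{N})$ gives $-C\rho^2/N$ for $m\neq n$ and $C\rho^2\tfrac{N-1}{N}$ for $m=n$, which yields (\ref{proof_equality_condition_1}) with $c_{n,k}=-C\rho^2/N$ and (\ref{proof_equality_condition_2}) with $c=-(N-1)C\rho^2$.

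I expect the collapse step to be the only real obstacle, and it is the sole place where positivity is genuinely used. The subtlety is that the norm and inner-product identities alone do not force collapse: they remain consistent with a nonzero, $k$-dependent shift $\bar{\vec{h}}^{(k)}$ living in the orthogonal complement of $\operatorname{span}\{\vec{w}_m\}$, so one cannot conclude $\bar{\vec{h}}^{(k)}$ is constant by a norm argument. The argument must instead route through the disjoint-support structure, exploiting that a fixed $\vec{g}_n$ with a prescribed sign pattern determines both the support and the values of $\vec{h}_n^{(k)}$. Everything outside this step is routine bookkeeping with the Cauchy--Schwarz equality cases and elementary linear algebra.
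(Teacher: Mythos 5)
Your proof is correct, but it does \emph{not} follow the paper's route at the decisive step, and your instinct about why is vindicated: the paper's own collapse argument is broken, and your positivity-based argument is precisely the repair. In detail: the paper first uses (\ref{proof_equality_condition_1}, \ref{proof_equality_condition_2}) together with (\ref{eq1:4}) to evaluate all products $\sprod{\vec{w}_m,\vec{h}_n^{(k)}}$ and all norms, then deduces $\vec{w}_n=c'(\vec{h}_n^{(k)}-\vec{h}^{(k)})$ through a Cauchy--Schwarz equality computation; you obtain that identity in one line by summing (\ref{eq1:7}) over $n$ and using (\ref{eq1:4}), which is why you can legitimately drop (\ref{proof_equality_condition_1}, \ref{proof_equality_condition_2}) from the forward direction. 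The real divergence is the within-class collapse. The paper proves $\vec{h}^{(k)}=\vec{h}^{(\ell)}$ by expanding $\norm{\vec{h}_n^{(\ell)}+(\vec{h}^{(k)}-\vec{h}^{(\ell)})}^2$, but writes the cross term as $2\sprod{\vec{h}^{(k)}-\vec{h}^{(\ell)},\vec{h}_n^{(k)}}$ where the correct expansion has $\vec{h}_n^{(\ell)}$ (equivalently, with $\vec{h}_n^{(k)}$ in the cross term the quadratic term must carry a minus sign). With the signs corrected, adding the two displayed identities and summing over $n$ yields only $0=0$, so the paper's step proves nothing. Moreover no sign fix can save a positivity-free argument, because the conclusion is then false: for $N=2$, $K=2$, $M=3$ take $\vec{h}_1^{(1)}=(1,0,0)$, $\vec{h}_2^{(1)}=(0,1,0)$, $\vec{h}_n^{(2)}=\vec{h}_n^{(1)}+\vec{d}$ with $\vec{d}=(a,a,b)$, $2a+2a^2+b^2=0$, $b\neq 0$, and $\vec{w}_n=c'\bigl(\vec{h}_n^{(k)}-\vec{h}^{(k)}\bigr)$. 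One checks that (\ref{eq1:4})--(\ref{eq1:7}), equality in (\ref{ineq:Cauchyschwarz}), and (\ref{proof_equality_condition_1}, \ref{proof_equality_condition_2}) all hold, yet the features do not collapse; the configuration is ruled out only because $\vec{h}_1^{(2)}$ has a negative entry. Since the paper's proof of this lemma never invokes $\vec{H}\geq 0$, it cannot be correct as written; your disjoint-support mechanism (orthogonal nonnegative vectors have disjoint supports, and the $k$-independent vector $\vec{g}_n=\vec{w}_n/c'$ then determines both the support $S_n$ and the values $\vec{h}_n^{(k)}=\tfrac{N}{N-1}\vec{g}_n$ on $S_n$, with $\vec{h}_n^{(k)}=0$ off $S_n$) is exactly where positivity must enter.

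The remainder of your argument — part (ii) from (\ref{eq1:5}), (\ref{eq1:6}) and nonnegativity, part (iii) via $P_{\vec{h}^\perp}\vec{h}_n=\vec{h}_n-\vec{h}$, and the converse by direct substitution with $c_{n,k}=-C\rho^2/N$ and $c=-(N-1)C\rho^2$ — agrees with the paper's computations. One small caveat, shared with the paper: Definition \ref{def_NC}(iii) allows an arbitrary constant $C$, while equality in (\ref{ineq:Cauchyschwarz}) forces the proportionality constant in (\ref{eq1:7}) to be positive; for $C<0$ the converse implication actually fails (then the quantity $P$ of Lemma \ref{Lemma:CauchySchwarz} is positive while the bound is negative), so your tacit restriction to $C>0$ is not merely cosmetic but is needed, and ideally should be stated explicitly.
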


\begin{proof}
The converse implication is straightforward. We prove here the forward implication. By (\ref{eq1:4},\ref{proof_equality_condition_2}) we have for any $n\in [N]$ and $k\in [K]$ that
    \begin{align*}
        0 = \sum_{m=1}^N \sprod{\vec{w}_m,\vec{h}_n^{(k)}} =  \underbrace{\sum_{m=1}^N \sprod{\vec{w}_m-\vec{w}_n,\vec{h}_n^{(k)}}}_{=c} + N\sprod{\vec{w}_n,\vec{h}_n^{(k)}},
    \end{align*}
    so 
    \begin{align}\label{eq1:8}
        \sprod{\vec{w}_n,\vec{h}_n^{(k)}} = \frac{-c}{N}.
    \end{align}
    Combining this with (\ref{proof_equality_condition_2},\ref{proof_equality_condition_1}) gives
    \begin{align*}
        c = \sum_{\substack{m=1 \\ m \neq n}}^N \sprod{\vec{w}_m-\vec{w}_n, \vec{h}_n^{(k)}} = (N-1) c_{n,k} - (N-1) \frac{-c}{N},
    \end{align*}
    and hence
    \begin{align}\label{eq1:9}
        \sprod{\vec{w}_m,\vec{h}_n^{(k)}} = c_{n,k} = \frac{c}{N(N-1)}.
    \end{align}
    Combining (\ref{eq1:8},\ref{eq1:9}) with (\ref{eq1:5},\ref{eq1:7}) gives
    \begin{align}\label{eq1:10}
        \frac{-2c}{N-1}=\sprod{\vec{w}_n-\vec{w}_m,\vec{h}_n^{(k)} - \vec{h}_m^{(k)}} = c' \norm{\vec{h}_n^{(k)}-\vec{h}_m^{(k)}}^2 = c' \Big(\norm{\vec{h}_n^{(k)}}^2 + \norm{\vec{h}_n^{(k)}}^2\Big),
    \end{align}
     Combining (\ref{eq1:10}) with (\ref{eq1:6}) shows that for every $n \in [N]$ and $k \in [K]$, it holds
    \begin{align}\label{eq1:norm_b}
        \norm{\vec{h}_n^{(k)}}^2 = \frac{-c}{c'(N-1)}.
    \end{align}
    
    On the other hand, it follows also from (\ref{eq1:8},\ref{eq1:9}) that
    \begin{align}
        \norm{\vec{w}_n}^2 - \norm{\vec{w}_m}^2 = \sprod{\vec{w}_n-\vec{w}_m,\vec{w}_n+\vec{w}_m} = c' \sprod{\vec{h}_n^{(k)} - \vec{h}_m^{(k)}, \vec{w}_n+\vec{w}_m} = 0,
    \end{align}
    and hence the vectors $\vec{w}_n$, $n\in [N]$ have the same length, which can be computed via
    \begin{align*}
        N^2 \norm{\vec{w}_i}^2 &= N\sum_{n=1}^N \norm{\vec{w}_n}^2 = \sum_{n>m} \norm{\vec{w}_n-\vec{w}_m}^2 \\
        &= c' \sum_{n>m} \sprod{\vec{w}_n-\vec{w}_m,\vec{h}_n^{(k)}-\vec{h}_m^{(k)}} \\
        &= c' \cdot \frac{N(N-1)}{2} \cdot \frac{-2c}{N-1}\\
        &= -cc' N.
    \end{align*}
    Hence, for each $n\in [N]$, it holds 
    \begin{align}\label{eq1:norm_a}
        \norm{\vec{w}_n}^2 = \frac{-cc'}{N}.
    \end{align}

    Now let $\vec{h}^{(k)}: = \frac{1}{N} \sum_{m=1}^N \vec{h}_m^{(k)}$ for each $k \in [K]$. Observe that it holds
    \begin{align} \label{eq1:norm_aPb}
        \begin{split}
            \sprod{\vec{w}_n,\vec{h}_n^{(k)}-\vec{h}^{(k)}} &= \frac{N-1}{N}\sprod{\vec{w}_n,\vec{h}_n^{(k)}} - \frac{1}{N} \sum_{\substack{m=1\\m\neq n}}^N \sprod{\vec{w}_n,\vec{h}_m^{(k)}} \\
        &= -\frac{(N-1)c}{N^2} - \frac{c}{N^2} \\
        &= \frac{c}{N}.
        \end{split}
    \end{align}
    
    On the other hand, from (\ref{eq1:norm_b}) we have for each $n \in [N]$ and $k \in [K]$ that
    \begin{align} \label{eq1:norm_Pb}
        \begin{split}
            \norm{\vec{h}_n^{(k)}-\vec{h}^{(k)}}^2 &= \norm{\vec{h}_n^{(k)}}^2 - 2 \sprod{\vec{h}_n^{(k)}, \frac{1}{N}\sum_{m}\vec{h}_m^{(k)}} + \frac{1}{N^2} \norm{\sum_m \vec{h}_m}^2 \\
        &= \frac{N-1}{N} \norm{\vec{h}_n^{(k)}}^2 \\
        &= \frac{N-1}{N} \frac{-c}{c'(N-1)}\\
        &= \frac{-c}{Nc'}.    
        \end{split}
    \end{align}
    
    From (\ref{eq1:norm_a}-\ref{eq1:norm_Pb}) it follows that  
    \begin{align*}
        \sprod{\vec{w}_n,\vec{h}_n^{(k)} -\vec{h}^{(k)}} = \norm{\vec{w}_n} \norm{\vec{h}_n^{(k)} - \vec{h}^{(k)}},
    \end{align*}
    which implies that $\vec{w}_n$ is parallel to $\vec{h}_n^{(k)}-h^{(k)}$ for every $n \in [N]$ and $k \in [K]$. More precisely, by combining this finding with the above calculation of $\norm{\vec{w}_n}$ and $\norm{\vec{h}_n^{(k)}-h^{(k)}}$ in (\ref{eq1:norm_a}, \ref{eq1:norm_Pb}) we obtain
    \begin{align}\label{eq1:last}
        \vec{w}_n = c'\Big(\vec{h}_n^{(k)} -\vec{h}^{(k)}\Big).
    \end{align}
    Finally it is left to show that $\vec{h}_n^{(k)} = \vec{h}_n^{(\ell)}$ for any $k,\ell \in [K]$. For this observe that $\vec{h}_n^{(k)} - h^{(k)} = \vec{h}_n^{(\ell)}-h^{(\ell)} = \vec{w}_n$ implies 
    \begin{align*}
        \norm{\vec{h}_n^{(k)}}^2 &= \norm{\vec{h}_n^{(\ell)} + \vec{h}^{(k)}- \vec{h}^{(\ell)}}^2 \\
        &= \norm{\vec{h}_n^{(\ell)}}^2 + 2 \sprod{\vec{h}^{(k)} - \vec{h}^{(\ell)}, \vec{h}_n^{(k)}} + \norm{\vec{h}^{(k)} - \vec{h}^{(\ell)}}^2,
    \end{align*}
    and thus 
    \begin{align*}
        2 \sprod{\vec{h}^{(k)} - \vec{h}^{(\ell)}, \vec{h}_n^{(k)}} + \norm{\vec{h}^{(k)} - \vec{h}^{(\ell)}}^2 = 0.
    \end{align*}
    Similarly 
    \begin{align*}
        2 \sprod{\vec{h}^{(\ell)} - \vec{h}^{(k)}, \vec{h}_n^{(\ell)}} + \norm{\vec{h}^{(k)} - \vec{h}^{(\ell)}}^2 = 0.
    \end{align*}
    Combining the two equalities and taking the sum over $n$ we obtain
    \begin{align*}
        \norm{\vec{h}^{(k)} - \vec{h}^{(\ell)}}^2 = 0,
    \end{align*}
    which means that $\vec{h}^{(k)} = \vec{h}^{(\ell)}$ and therefore $\vec{h}_n^{(k)} = \vec{h}_n^{(\ell)}$. 
\end{proof}

\subsection{Proof of \Cref{theorem:standard_model}}\label{subsection:proof_standard_model}
\begin{proof}
\text{ }

\begin{enumerate}[label=Step \arabic*.]

    \item First we introduce a 
    lower bound on the (unregularized) loss. 
    Using Jensen's inequality for the convex function $t \mapsto e^t$ we obtain that for each $n \in [N]$ and $k \in [K]$ it holds
    \begin{align*}
        \sum_{\substack{m=1\\m\neq n}}^N e^{\sprod{\vec{w}_m-\vec{w}_n, \vec{h}_n^{(k)}}} \geq (N-1) e^{\frac{1}{N-1}\sum_{\substack{m=1\\m\neq n}}^N\sprod{\vec{w}_m-\vec{w}_n,\vec{h}_n^{(k)}}},
    \end{align*}
    with equality if and only if $\sprod{\vec{w}_m, \vec{h}_n} = c_n$ for every $m\neq n$, independently of $m$, for some constant $c_n$. Inserting this into the formulation of $L_\alpha$ in Section \ref{Sec: Reformulation_loss} we get 
    \begin{align*}
        L_\alpha(\vec{W},\vec{H}) \geq \frac{1}{NK}\sum_{n=1}^N \sum_{k=1}^K \Bigg[\log \Big( 1+(N-1)e^{\frac{1}{N-1} \sum_{\substack{m=1\\m\neq n}}^N\sprod{ \vec{w}_m-\vec{w}_n, \vec{h}_n^{(k)}}}\Big) \\
        \quad - \sum_{\substack{m=1\\m\neq n}}^N  \frac{\alpha}{N}  \sprod{\vec{w}_m-\vec{w}_n,\vec{h}_n^{(k)}}\Bigg].
    \end{align*}
    Observe that the function $t\mapsto \log \Big(1+(N-1)e^{\frac{t}{N-1}}\Big)$ is also convex, hence applying again Jensen's inequality we can lower bound the right-hand side in the estimate above, and obtain
    \begin{align}\label{Proof: Final Jensen inequality}
        \begin{split}
            L_\alpha(W,H) &\geq \log \Bigg( 1+ (N-1) e^{\frac{1}{KN(N-1)} \sum_{k=1}^K \sum_{n=1}^N \sum_{\substack{m=1\\m\neq n}}^N\sprod{\vec{w}_m-\vec{w}_n,\vec{h}_n^{(k)}}}\Bigg) \\
        &\quad-\frac{1}{NK}\sum_{k=1}^K \sum_{n=1}^N \sum_{\substack{m=1\\m\neq n}}^N  \frac{\alpha}{N}  \sprod{\vec{w}_m-\vec{w}_n,\vec{h}_n^{(k)}}.
        \end{split}
    \end{align}
    Equality in (\ref{Proof: Final Jensen inequality}) occurs if and only if the conditions (\ref{proof_equality_condition_1}, \ref{proof_equality_condition_2}) (see Lemma \ref{lemma:equality_conditions}) hold simultaneously.

    \item Recall that with the notation $P = P(\vec{W},\vec{H})$ from Lemma \ref{Lemma:CauchySchwarz}, the inequality (\ref{Proof: Final Jensen inequality}) becomes
    \begin{align}\label{eq1:1}
        \calL_\alpha(\vec{W},\vec{H}) \geq \log \Big( 1+ (N-1)e^P\Big) - \beta P + \lambda_W \norm{\vec{W}}^2 + \frac{\lambda_H}{K} \norm{\vec{H}}^2 =: \tilde{L}(\vec{W},\vec{H}),
    \end{align}
    with $\beta: = \frac{N-1}{N}\alpha >0$. Consider the function $g \colon \R \to \R$, 
    \begin{align*}
        g(t) := \log \Big( 1+(N-1)e^t\Big) - \beta t.
    \end{align*}
    Since $g$ is convex (as it differs from a convex function only by an additional linear function), it has a unique minimum specified as the root \footnote[1]{Note that here the root $t_0$ exists as long as $\beta >0$, for $\beta = 0$ we may, for convenience, define $t_0:= -\infty$ (this will correspond to Case (c) below). }
    \begin{align*}
        t_0:= \log \Big( \frac{1}{N-1 } \cdot \frac{\beta}{1-\beta} \Big) < 0
    \end{align*}
    of the derivative
    \begin{align*}
        g'(t) = \frac{(N-1)e^t}{1+(N-1)e^t}- \beta. 
    \end{align*}

    We now aim to find a constant lower bound on the right-hand side $\tilde{L}(\vec{W},\vec{H})$ of (\ref{eq1:1}). We consider the following three cases, corresponding to three different regions of the feasible set of $(\vec{W},\vec{H})$:

    \begin{enumerate}
        \item Case $t_0 > P(\vec{W},\vec{H})$: We will show that the minimizers of $\calL_\alpha$ cannot be in this region. Toward a contradiction, assume that there is a minimizer $(\vec{W}_0,\vec{H}_0)$ s.t. $P(\vec{W}_0,\vec{H}_0) < t_0$. We construct $(\vec{W}_1,\vec{H}_1)$ to be a NC configuration (according to Definition \ref{def_NC}) satisfying $\norm{\vec{W}_0}=\norm{\vec{W}_1}$ and $\norm{\vec{H}_0} = \norm{\vec{H}_1}$. Then we have
\begin{align*}
    P(\vec{W}_1,\vec{H}_1) &= -\frac{1}{\sqrt{K(N-1)}}\norm{\vec{W}_1}\norm{\vec{H}_1} \\
    &= -\frac{1}{\sqrt{K(N-1)}}\norm{\vec{W}_0}\norm{\vec{H}_0} \\
    &\leq P(\vec{W}_0,\vec{H}_0)\\
    &<t_0 <0.
\end{align*}
By rescaling $\vec{W}_1,\vec{H}_1$ (with a constant smaller than 1) we obtain a pair $(\vec{W},\vec{H})$ with $P(\vec{W},\vec{H}) = t_0$ and $\norm{\vec{W}}< \norm{\vec{W}_0}$, $\norm{\vec{H}} < \norm{\vec{H}_0}$. Thus it holds
\begin{align*}
    \calL_\alpha(\vec{W}_0,\vec{H}_0) &\geq \log \Big( 1+(N-1)e^{P(\vec{W}_0,\vec{H}_0)} \Big)-\beta P(\vec{W}_0,\vec{H}_0) \\ 
    & \quad+ \lambda_W \norm{\vec{W}_0}^2 + \lambda_H \norm{\vec{H}_0}^2\\
    &> \log \Big( 1+(N-1)e^{t_0} \Big)-\beta t_0 + \lambda_W \norm{\vec{W}}^2 + \frac{\lambda_H}{K} \norm{\vec{H}}^2\\
    &= \calL_\alpha(\vec{W},\vec{H}),
\end{align*}
which means that $(\vec{W}_0,\vec{H}_0)$ cannot be a minimizer of $\calL_\alpha$. Note that the last equality holds because the inequality (\ref{Proof: Final Jensen inequality}) equalizes when $(\vec{W},\vec{H})$ is a NC configuration (see Lemma \ref{lemma:equality_conditions}).

        \item Case $P(\vec{W},\vec{H}) \geq t_0 \geq -\frac{1}{\sqrt{K(N-1)}}\norm{\vec{W}}\norm{\vec{H}}$: We will show that at the minimizers in this region, $P$ must be $t_0$. Assume that $(\vec{W}_0,\vec{H}_0)$ is a minimizer of $\tilde{L}$ in this region with $P(\vec{W}_0,\vec{H}_0) \neq t_0$. Then we consider all pairs $(\vec{W},\vec{H})$ with $\norm{\vec{W}} \leq \norm{\vec{W}_0}$ and $\norm{\vec{H}}\leq \norm{\vec{H}_0}$. By continuity we have that $P(\vec{W},\vec{H})$ can take all values in the interval
\begin{align*}
   \Big[ -\frac{1}{\sqrt{K(N-1)}} \norm{\vec{W}_0}\norm{\vec{H}_0}, \frac{1}{\sqrt{K(N-1)}} \norm{\vec{W}_0}\norm{\vec{H}_0} \Big],
\end{align*}
which also includes $t_0$. It follows that $\tilde{L}(\vec{W},\vec{H}) < \tilde{L}(\vec{W}_0,\vec{H}_0)$, so $(\vec{W}_0,\vec{H}_0)$ cannot be a minimizer of $\tilde{L}$, meaning that a minimizer $(\vec{W},\vec{H})$ of $\tilde{L}$ must satisfy $P(\vec{W},\vec{H}) = t_0$. The minimization of $\tilde{L}$ then reduces to 
\begin{align*}
    \min_{W,H} \quad \lambda_W \norm{W}^2 + \lambda_H \norm{H}^2 \quad \text{s.t. } \quad -\frac{1}{\sqrt{K(N-1)}} \norm{W}\norm{H} = t_0.
\end{align*}

        Observe that 
        \begin{align*}
            \lambda_W \norm{\vec{W}}^2 + \frac{\lambda_H}{K} \norm{\vec{H}}^2 &\geq 2\sqrt{\frac{\lambda_W \lambda_H}{K} } \norm{\vec{W}} \norm{\vec{H}} \\
            &\geq -2t_0\sqrt{(N-1)\lambda_W \lambda_H}.
        \end{align*}
        Therefore we have $\tilde{L}(W,H) \geq g(t_0) -2 t_0\sqrt{(N-1)\lambda_W \lambda_H}$ and this equalizes if and only if the following conditions hold:
        \begin{itemize}
            \item $P(\vec{W},\vec{H}) = t_0$
            \item $\lambda_W \norm{\vec{W}}^2 = \lambda_H \norm{\vec{H}}^2$ and $\norm{\vec{W}}\norm{\vec{H}} = -\sqrt{K(N-1)}t_0$.
        \end{itemize}

        \item Case $P(\vec{W},\vec{H}) \geq -\frac{1}{\sqrt{K(N-1)}} \norm{\vec{W}} \norm{\vec{H}} \geq t_0$: 
        
        In this region, it holds $g\Big(P(\vec{W},\vec{H})\Big) \geq g\Big(-\frac{1}{\sqrt{K(N-1)}} \norm{\vec{W}} \norm{\vec{H}}\Big)$, so
        \begin{align*}
            \tilde{L}(\vec{W},\vec{H}) \geq f(\norm{\vec{W}}, \norm{\vec{H}}),
        \end{align*}
        with $f \colon \R^2 \to \R$,
        \begin{align*}
            f(w,h) := \log \Big( 1+ (N-1)e^{-Cwh}\Big) + \beta Cwh +\lambda_W w^2 + \frac{\lambda_H}{K} h^2
        \end{align*}
        where we set $C:= \frac{1}{\sqrt{K(N-1)}}$ to shorten notation. Observe that even though $w$ and $h$, as representatives for $\norm{\vec{W}}$ and $\norm{\vec{H}}$ respectively, must be positive, we can consider them as real number (without positivity). This can be explained as follows. On the one hand, we are interested in the global minimum of $f$, at which $w$ and $h$ should have the same sign. On the other hand, since $f(w,h)=f(-w,-h)$, if $(w,h)$ is a minimum point then certainly $(-w,-h)$ is a minimum point of $f$. 
        
        This observation allows us to set the derivatives of $f$ to be $0$ at the minimum, i.e. 
        \begin{align*}
            0 =\nabla_w f(w,h) = -\frac{(N-1)e^{-Cwh}}{1+(N-1)e^{-Cwh}} Cb +2\beta Ch + 2\lambda_W w,\\
            0 = \nabla_h f(w,h) = -\frac{(N-1)e^{-Cwh}}{1+(N-1)e^{-Cwh}} Ca +2\beta Cw+ 2\frac{\lambda_H}{K} h.
        \end{align*}
        Multiplying the first equality with $w$ and the second with $h$, we obtain in particular that $\lambda_W w^2 = \frac{\lambda_H}{K} h^2$, and hence $h = \sqrt{\frac{K\lambda_W}{\lambda_H}} w$. Inserting this into the first inequality while denoting $C': = C \sqrt{\frac{K\lambda_W}{\lambda_H}}$ yields
        \begin{align*}
            -\frac{(N-1)e^{-C' a^2}}{1+(N-1)e^{-C' w^2}}C' w + 2\beta C' w + 2 \lambda_W w= 0.
        \end{align*}
       Excluding the trivial solution $(w,h) = (0,0)$, so that we can multiply both sides with $1/w$, we get 
       \begin{align}\label{eq1:2}
           w^2= \frac{1}{C} \sqrt{\frac{\lambda_H}{K\lambda_W}} \log \Bigg( (N-1) \frac{1-\beta - 2\sqrt{(N-1)\lambda_W \lambda_H}}{\beta + 2\sqrt{(N-1)\lambda_W \lambda_H}}\Bigg)
       \end{align}
        and 
        \begin{align}\label{eq1:3}
           h^2= \frac{1}{C} \sqrt{\frac{K\lambda_W}{\lambda_H}} \log \Bigg( (N-1) \frac{1-\beta - 2\sqrt{(N-1)\lambda_W \lambda_H}}{\beta + 2\sqrt{(N-1)\lambda_W \lambda_H}
           }\Bigg)
       \end{align}
        
    Finally, it is easy to check that 
    \begin{align*}
        -Cwh = \log \Big( \frac{1}{N-1} \frac{\beta+ 2\sqrt{(N-1)\lambda_W \lambda_H}}{1-\beta-2\sqrt{(N-1)\lambda_W \lambda_H}}\Big) > \log\Big( \frac{1}{N-1} \cdot \frac{\beta}{1-\beta}\Big) = t_0,
    \end{align*}
    i.e. the solution found above belongs indeed to the current region of the feasible set. In summary, we have shown in this case that $\tilde{L}(\vec{W},\vec{H}) \geq f(w_0,h_0)$ with $(w_0,h_0)$ specified as in (\ref{eq1:2}, \ref{eq1:3}), and this becomes equality if and only if $P(\vec{W},\vec{H}) = -\frac{1}{\sqrt{K(N-1)}} \norm{\vec{W}}\norm{\vec{H}}$ and $\norm{\vec{W}} =w_0$, $\norm{\vec{H}} = h_0$.
    
    \end{enumerate}

    \item We now come back to the actual loss $\calL_\alpha$. In both cases (b) and (c) discussed above, we have shown that $\calL_\alpha(\vec{W},\vec{H}) \geq \tilde{L}(\vec{W},\vec{H}) \geq \text{const}$ and this can equalize when the conditions in Lemma \ref{lemma:equality_conditions} are satisfied. We deduce that $\calL_\alpha$ achieves its minimum at either case (b) or (c), while both lead to a NC configuration by Lemma \ref{lemma:equality_conditions}.

\end{enumerate}

\end{proof}

\section{Theoretical supports for Theorem \ref{theorem:MD_model}}

In this appendix we prove our theoretical result on the MD model, namely Theorem \ref{theorem:MD_model}. 

\subsection{Preparation for the proof}\label{Section:MD_prepare_proof}
The problem from Definition \ref{def:MD_model} is
\begin{align*}
    \min_{U\geq 0,r\geq 0} \quad \mathcal{R}_{\lambda,\eta,\alpha} (\vec{U},r) := F_{\lambda,\alpha} (\vec{W},\vec{H},r) + \eta G_{\lambda,\alpha}(\vec{W},\vec{U},r)
\end{align*}
under the constraints
\begin{align*}
    \eta \norm{\vec{h}_1-\vec{u}_2} \leq \frac{C_{MD}r}{\norm{\vec{h}_1-\vec{h}_2}},\\
    \eta \norm{\vec{h}_2-\vec{u}_1} \leq \frac{C_{MD}r}{\norm{\vec{h}_1-\vec{h}_2}}.\\
\end{align*}

Observe that $F_{\lambda,\alpha}$ does not depend on $\vec{U}$. Hence, for each $r\geq 0$ we can first solve the problem 
\begin{align*}
    \min_{\vec{U} \geq 0} \quad G_{\lambda, \alpha} (\vec{W},\vec{U},r) 
\end{align*}
under the same constraints to obtain the optimal configuration of $\vec{U} = \vec{U}(r)$, and then solve 
\begin{align*}
    \min_{r\geq 0} \quad \mathcal{R}_{\lambda, \eta,\alpha} \Big(\vec{U}(r),r\Big).
\end{align*}
The problem of optimizing $G_{\lambda, \alpha} (\vec{W},\vec{U},r) $ over $\vec{U}$ can be separated into two subproblems over $\vec{u}_1$ and $\vec{u}_2$, which are independent and symmetric. We hence consider only the problem over $\vec{u}_1$, namely
\begin{align}\label{problem:u1}\tag{$\calP_{\vec{u}_1}$}
\begin{split}
    &\min_{\vec{u}_1 \in \R_+^M} \quad \log \Big(1 + e^{\sprod{\vec{w}_2-\vec{w}_1,\vec{u}_1}} \Big) - \frac{\alpha}{2} \sprod{\vec{w}_2-\vec{w}_1,\vec{u}_1}+ \lambda \norm{\vec{u}_1}^2\\
    &\text{ s.t. } \quad \eta\norm{\vec{h}_2-\vec{u}_1} \leq \frac{C_{MD} r}{\norm{\vec{h}_1-\vec{h}_2}} .
\end{split}
\end{align}

\begin{remark}\label{remark:h1_opt}
Without its constraint, the minimization of $G_{\lambda,\alpha}(\vec{W},\vec{U},r)$ over $\vec{U}$ becomes a reduction of the problem
\begin{align*}
    \min_{\vec{W},\vec{H}} \quad \ell_\alpha(\vec{W}, \vec{h}_1, y_1^{(\alpha)})+\ell_\alpha(\vec{W}, \vec{h}_2, y_2^{(\alpha)}) + \lambda_W \norm{\vec{W}}^2 + \lambda_H \norm{\vec{H}}^2
\end{align*}
where $\vec{W}$ is restricted to be in the optimal NC configuration (see \Cref{def_NC}). Hence the problem (\ref{problem:u1}) without its constraint has the minimizer at $\vec{u}_1=\vec{h}_1$. Furthermore the problem (\ref{problem:u1}) itself also has its minimizer at $\vec{h}_1$ if $\vec{h}_1$ is feasible under the side constraint. Namely, when
\begin{align*}
    \eta\norm{\vec{h}_2-\vec{h}_1} \leq \frac{C_{MD} r}{\norm{\vec{h}_1-\vec{h}_2}},
\end{align*}
or equivalently when
\begin{align*}
    r \geq \frac{\eta \norm{\vec{h}_1-\vec{h}_2}^2 }{C_{MD}} =: r_{\max}.
\end{align*}
Thus we only need to study the problem (\ref{problem:u1}) in case $r < r_{\max}$. 
\end{remark}

The rest of the proof can be summarized as follows: First, in Subsection \ref{section:est_solution_P_u1} we show that the solution $u_1(r)$ to (\ref{problem:u1}) must be on a small subset of the feasible set (see Lemma \ref{Lemma:P_u1} and \ref{Lemma:P_u2}), which allows us to prove the (almost) linear dependence of $u_1(r) - u_1(r_{\max})$ on the distance $r_{\max}-r$ (see Lemma \ref{lemma:problem_u3}). Next, we study the behavior of $r\mapsto G_{\lambda,\alpha}\Big( \vec{W}, \vec{U}(r), r\Big)$ locally around $r_{\max}$ and the behavior of $r \mapsto F_{\lambda,\alpha} (\vec{W}, \vec{H},r)$ around $0$. This lets us show that the decay of the former function near $r_{\max}$ dominates the increasing of the latter one, and hence the optimal dilation $r_*$ must be close to $r_{\max}$. The details of this argument are introduced in Subsection \ref{section:optimal_dilation}. Finally in Subsection \ref{section:finalize_proof} we apply this to each value $\alpha \in \{0,\alpha_0\}$ and get the desired statement of Theorem \ref{theorem:MD_model}.

\subsection{Estimating the solution of \cref{problem:u1}} \label{section:est_solution_P_u1}

For convenience, in this section we introduce several notations. Let 
\begin{align*}
    \calS:= \operatorname{span} \set{\vec{w}_2-\vec{w}_1,\vec{h}_1} = \operatorname{span} \set{\vec{h}_2-\vec{h}_1,\vec{h}_1} = \operatorname{span} \set{\vec{h}_2,\vec{h}_1}
\end{align*}
be the two-dimensional subspace spanned by $\vec{w}_2-\vec{w}_1$ and $\vec{h}_1$. Furthermore, let $\mathcal{B}$ be the ball of radius $\frac{C_{MD}r}{\eta \norm{\vec{h}_1-\vec{h}_2}}$ around $\vec{h}_2$. Let $\mathcal{C} = \partial \mathcal{B} \cap \calS$ be the circle that is the intersection of the ball $\mathcal{B}$ and the subspace $\calS$. We will show that the minimizer of \cref{problem:u1} must lie on the circle $\mathcal{C}$. Note that the feasiblity of a vector $x \in \R^M$ for the problem \cref{problem:u1} can be expressed as $x \in \R_+^M \cap \mathcal{B}$.

\begin{lemma}\label{Lemma:P_u1}
Let $r< r_{\max}$, then the minimizer of (\ref{problem:u1}) lies on the circle $\mathcal{C}$, i.e. it lies on the subspace $\calS$ and the inequality constraint in (\ref{problem:u1}) must equalize at the minimizer. 
\end{lemma}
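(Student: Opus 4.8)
The plan is to establish the two assertions separately---membership in $\calS$ and activeness of the ball constraint---after first securing existence and uniqueness of the minimizer. The objective in \cref{problem:u1} is $f(\vec{u}_1) = \log\big(1+e^{\sprod{\vec{w}_2-\vec{w}_1,\vec{u}_1}}\big) - \frac{\alpha}{2}\sprod{\vec{w}_2-\vec{w}_1,\vec{u}_1} + \lambda\norm{\vec{u}_1}^2$. Since $s \mapsto \log(1+e^s)$ is convex, $f$ is a convex function of $\vec{u}_1$ plus the strictly convex term $\lambda\norm{\vec{u}_1}^2$, hence strictly convex; as the feasible set $\R_+^M \cap \mathcal{B}$ is convex, compact and nonempty (it contains $\vec{h}_2$), a unique minimizer $\vec{u}_1^*$ exists. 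Crucially, $f$ depends on $\vec{u}_1$ only through the linear functional $\sprod{\vec{w}_2-\vec{w}_1,\cdot}$ and through $\norm{\vec{u}_1}$.

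For the first assertion I would project onto $\calS$. Write $\vec{u}_1 = \vec{u}_1^{\calS} + \vec{u}_1^{\perp}$ with $\vec{u}_1^{\calS}$ the orthogonal projection onto $\calS = \operatorname{span}\{\vec{h}_1,\vec{h}_2\}$. Because $\vec{w}_2-\vec{w}_1 \in \calS$, the linear functional is unchanged, so the first two terms of $f$ are unaffected, while $\norm{\vec{u}_1^{\calS}}\le\norm{\vec{u}_1}$ with equality iff $\vec{u}_1^\perp=0$. Since $\vec{h}_2 \in \calS$, Pythagoras gives $\norm{\vec{h}_2-\vec{u}_1^{\calS}}\le\norm{\vec{h}_2-\vec{u}_1}$, so feasibility with respect to $\mathcal{B}$ is preserved. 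The delicate point is positivity: using $\vec{h}_1\perp\vec{h}_2$ from the NC configuration, the projection is $\vec{u}_1^{\calS} = \frac{\sprod{\vec{u}_1,\vec{h}_1}}{\norm{\vec{h}_1}^2}\vec{h}_1 + \frac{\sprod{\vec{u}_1,\vec{h}_2}}{\norm{\vec{h}_2}^2}\vec{h}_2$, and since $\vec{u}_1,\vec{h}_1,\vec{h}_2\ge 0$ both coefficients are nonnegative, whence $\vec{u}_1^{\calS}\in\R_+^M$. Thus projecting keeps $\vec{u}_1$ feasible while strictly lowering $f$ whenever $\vec{u}_1^\perp\neq 0$, so the minimizer must lie in $\calS$.

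For the second assertion I would invoke \Cref{remark:h1_opt}: the minimizer of $f$ over $\R_+^M$ without the ball constraint is $\vec{u}_1=\vec{h}_1$, and for $r<r_{\max}$ the point $\vec{h}_1$ violates the constraint, i.e.\ $\vec{h}_1\notin\mathcal{B}$, so $\vec{u}_1^*\neq\vec{h}_1$. Suppose $\vec{u}_1^*$ lay in the interior of $\mathcal{B}$. The segment from $\vec{u}_1^*$ to $\vec{h}_1$ stays in $\R_+^M$ (convexity of the positive orthant) and, near $\vec{u}_1^*$, inside $\mathcal{B}$, hence feasible; by convexity together with $f(\vec{h}_1)<f(\vec{u}_1^*)$ (a consequence of strict convexity and $\vec{u}_1^*\neq\vec{h}_1$), moving along it strictly decreases $f$, contradicting optimality. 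Hence $\vec{u}_1^*\in\partial\mathcal{B}$, and combined with $\vec{u}_1^*\in\calS$ it lies on $\mathcal{C}=\partial\mathcal{B}\cap\calS$.

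The main obstacle is the positivity constraint $\vec{u}_1\in\R_+^M$: in the unconstrained feature models of prior work the reduction to $\calS$ and the boundary argument are routine, but here I must verify that the orthogonal projection onto $\calS$ does not leave the positive orthant. This is exactly where the structure of the NC configuration---nonnegative, mutually orthogonal class means $\vec{h}_1,\vec{h}_2$---enters, guaranteeing nonnegative projection coefficients; without positivity of the features this step would fail.
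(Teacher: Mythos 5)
Your proof is correct. The first assertion (membership in $\calS$) is handled exactly as in the paper: project a feasible point onto $\calS$, note that the term $\sprod{\vec{w}_2-\vec{w}_1,\cdot}$ is unchanged because $\vec{w}_2-\vec{w}_1\in\calS$, the regularizer strictly decreases unless the orthogonal component vanishes, the ball constraint is preserved by Pythagoras, and positivity of the projection coefficients follows from the nonnegative orthogonal pair $\vec{h}_1,\vec{h}_2$ --- this is the paper's Step 1 verbatim. Where you genuinely diverge is the activeness of the ball constraint. The paper reduces to coordinates in $\calS\cong\R^2$ and runs a geometric case analysis: it excludes interior points of the reduced feasible set by perturbing along a circular arc of constant norm (with a separate sub-argument when $\sprod{\vec{w}_2-\vec{w}_1,\vec{u}_1}$ equals the minimizer $t_0$ of $t\mapsto\log(1+e^t)-\frac{\alpha}{2}t$), and then compares the three boundary pieces --- the arc of $\mathcal{C}$ in the positive quadrant and the two axis segments --- showing every axis candidate is dominated by a point of $\mathcal{C}$. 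You replace all of this by a strict-convexity argument: the objective is strictly convex, its minimizer over the orthant is $\vec{h}_1$ by Remark \ref{remark:h1_opt}, and $\vec{h}_1$ is infeasible exactly when $r<r_{\max}$; hence if a minimizer lay in the open ball, the segment toward $\vec{h}_1$ would remain feasible for small steps (convexity of the orthant, openness of the ball around the point) and strictly decrease the objective, a contradiction. This is shorter, dimension-free, and avoids the planar geometry entirely; it also delivers existence and uniqueness of the minimizer, which the paper leaves implicit. What the paper's longer route buys is the explicit two-dimensional picture and the notation ($\vec{u}_1^*$, $\vec{u}_1^{**}$, the boundary decomposition of Figure \ref{Fig:u1}) that is immediately reused in Lemma \ref{Lemma:P_u2}; your argument proves Lemma \ref{Lemma:P_u1} as stated, but the subsequent restriction of the minimizer to the arc between $\vec{u}_1^*$ and $\vec{u}_1^{**}$ would still require that setup.
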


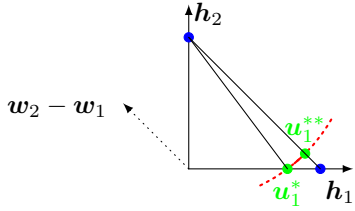
\begin{figure}[h]
        \centering
        \begin{tikzpicture}[scale=1.75,cap=round,>=latex]
            \draw[->] (0cm,0cm) -- (1.25cm,0cm) ;
            \draw[->] (0cm,0cm) -- (0cm,1.25cm) ;
            \draw[dotted,->] (0cm,0cm) -- (-0.5cm,0.5cm) ;
            \draw (-1cm,0.3cm) node[above=1pt] {$\vec{w}_2-\vec{w}_1$};
            
    
            \draw[dotted,red,thick] ([shift =(-30: 1.25)]0,1) arc (-30:-65:1.25cm);
            \draw[red,thick] ([shift =(-45: 1.25)]0,1) arc (-45:-50:1.25cm);
            
            
            \filldraw[blue] (1cm,0cm) circle(1pt);
            \filldraw[blue] (0cm,1cm) circle(1pt);
            \filldraw[green] (0.75cm,0cm) circle(1pt) node[below=1pt] {$\vec{u}_1^{*}$};
            \filldraw[green] ([shift =(-45: 1.25)]0,1) circle(1pt) node[above=1pt] {$\vec{u}_1^{**}$};
            
            \draw (0.15cm,1cm) node[above=1pt] {$\vec{h}_2$};
            \draw (1.15cm,0cm) node[below=1pt] {$\vec{h}_1$};

            \draw[-] (0cm,1cm) -- (1cm,0cm);
            \draw[-] (0cm,1cm) -- (0.75cm,0cm);
            
        \end{tikzpicture}
        
        \caption{Illustration for Lemma \ref{Lemma:P_u1} and \ref{Lemma:P_u2}. The feasible set of (\ref{problem:u1}) is the intersection of the positive quadrant and the disc with boundary given by the red circle $\mathcal{C}$. We consider the case $\mathcal{C}$ has an intersection $\vec{u}_1^*$ with the segment $(0,\vec{h}_1)$ and $\vec{u}_1^{**}$ with the segment $(\vec{h}_1,\vec{h}_2)$. The minimizer of (\ref{problem:u1}) must lie on the red arc between $\vec{u}_1^*$ and $\vec{u}_1^{**}$.}
        \label{Fig:u1}
    \end{figure}

\begin{proof}
\text{ }

\begin{enumerate}
    \item Let $x \in \R_+^M \cap \mathcal{B}$ be a feasible solution. According to $\R^M = \calS \oplus \calS^\bot$ we can decompose $x$ into 
    \begin{align*}
        x = \vec{x}_\calS + x- \vec{x}_\calS,
    \end{align*}
    where $\vec{x}_\calS$ is the orthogonal projection of $x$ on the subspace $\calS$ and $x-\vec{x}_\calS$ is orthogonal to $\calS$. We will show that $\vec{x}_\calS$ is a better candidate for \cref{problem:u1} than $x$, which means that $\vec{x}_\calS$ is also feasible and leads to smaller objective value. The second point is straightforward, because one can observe that 
    \begin{align*}
        \sprod{x,w_2-w_1} = \sprod{\vec{x}_\calS, w_2-w_1},
    \end{align*}
    and 
    \begin{align*}
        \norm{x}^2 = \norm{\vec{x}_\calS}^2 + \norm{x-\vec{x}_\calS}^2 \geq \norm{\vec{x}_\calS}^2.
    \end{align*}
    Thus it is left to show the feasibility of $\vec{x}_\calS$. For this, observe that $\vec{h}_1$ and $\vec{h}_2$ form an (entrywise) nonnegative orthogonal basis of $\calS$ (remark: not necessarily an orthonormal basis, because $\vec{h}_1$ and $\vec{h}_2$ are not necessarily normalized). Thus $\vec{x}_\calS$ can be written as
    \begin{align*}
        \vec{x}_\calS = a_1 \vec{h}_1 + a_2 \vec{h}_2
    \end{align*}
    in which the coefficients $a_1,a_2 \in \R$ satisfy
    \begin{align*}
        a_i = \sprod{\vec{x}_\calS,\frac{\vec{h}_i}{\norm{\vec{h}_i}^2}} = \sprod{x,\frac{\vec{h}_i}{\norm{\vec{h}_i}^2}} \geq 0,
    \end{align*}
    where the last inequality holds because both vectors $\vec{x}$ and $\vec{h}_i$ are nonnegative. 
    
    Finally we need to show that $\vec{x}_\calS \in \mathcal{B}$. For this consider again the decomposition 
    \begin{align*}
        \vec{x} = \vec{x}_\calS + \vec{x}-\vec{x}_\calS = a_1\vec{h}_1+a_2\vec{h}_2 + (\vec{x}-\vec{x}_S).
    \end{align*}
    We obtain
    \begin{align*}
        \norm{\vec{h}_2-\vec{x}}^2 &= \norm{a_1\vec{h}_1 + (a_2-1)\vec{h}_2 + \vec{x}-\vec{x}_\calS}^2 \\
        &= a_1^2\norm{\vec{h}_1}^2 + (a_2-1)^2\norm{\vec{h}_2}^2 + \norm{\vec{x}-\vec{x}_\calS}^2\\
        &\geq a_1^2\norm{\vec{h}_1}^2 + (a_2-1)^2\norm{\vec{h}_2}^2\\
        &= \norm{a_1\vec{h}_1 + (a_2-1)\vec{h}_2}^2 \\
        &= \norm{\vec{h}_2-\vec{x}_\calS}^2,
    \end{align*}
    and hence the claim $\vec{x}_\calS \in \mathcal{B}$ follows from the feasibility of $\vec{x}$.

    \item Now we reduce to the subspace $\calS$. Since $\vec{h}_1/\norm{\vec{h}_1}$ and $\vec{h}_2/\norm{\vec{h}_1}$ form a nonnegative orthonormal basis for this subspace, it is equivalent to consider the space $\R^2$ of the coefficients. Note that the positivity of a vector $\vec{x} \in \calS \subseteq \R^M$ is also equivalent to the positivity of its coefficients in $\R^2$. Thus for convenience we may assume without loss of generality that $\vec{h}_1$ and $\vec{h}_2$ are orthogonal vectors in $\R^2$, more precisely $\vec{h}_1$ lies on the $x$-axis and $\vec{h}_2$ lies on the $y$-axis as in Figure \ref{Fig:u1}.
    
    We denote by $\mathcal{F} \subset \R^2_+$ the set of all points in $\calS$ that is feasible to (\ref{problem:u1}), i.e. inside the ball $\mathcal{B}$ around $\vec{h}_2$. Note that on the two-dimensional subspace $\calS$ the ball $\mathcal{B}$ reduces to a disc, whose boundary is the circle $\mathcal{C}$ as defined above. We will show that the solution to \cref{problem:u1} must lie on the boundary of $\calF$ (w.r.t. the topology in $\calS \cong \R^2$), i.e either on the axes (due to positivity constraints) or on the circle $\mathcal{C}$. 
    
    Indeed, let $\vec{u}_1$ be an arbitrary feasible point in the interior of $\mathcal{F}$, we prove that $\vec{u}_1$ is not the minimum of the problem (\ref{problem:u1}). Since $\vec{u}_1$ is an interior point, there exists a disc $\mathcal{B}'$ around $\vec{u}_1$ which lies completely inside $\mathcal{F}$. Let $\mathcal{A}$ be the intersection of the disc $\mathcal{B'}$ and the circle of radius $\norm{\vec{u}_1}$ around the origin. Observe that moving $\vec{u}_1$ along the arc $\mathcal{A}$ keeps its norm unchanged, but can both increase and decrease the value of $\sprod{\vec{w}_2-\vec{w}_1,\vec{u}_1}$. Hence the objective in (\ref{problem:u1}) cannot reach its minimum at $\vec{u}_1$, unless $\sprod{\vec{w}_2-\vec{w}_1,\vec{u}_1}$ is equal to the minimizer $t_0$ of the function
    \begin{align*}
        t \mapsto \log (1+e^t) - \frac{\alpha}{2}t.
    \end{align*}
    However, in case $\sprod{\vec{w}_2-\vec{w}_1,\vec{u}_1} = t_0$, the point $\vec{u}_1$ lies on a line that is orthogonal to $\vec{w}_2-\vec{w}_1$, and one can choose another point $\vec{u}_1'$ on the intersection of this line and the disc $\mathcal{B}'$ such that $\norm{\vec{u}_1'} < \norm{\vec{u}_1}$. In particular, $\vec{u}_1'$ is a better feasible candidate in comparison to $\vec{u}_1$. 
    
    \item We have shown above that the interior point $\vec{u}_1$ cannot be the solution of \cref{problem:u1}. Excluding all interior points, we now consider the boundary set $\partial \calF$, which consists of points on the circle $\mathcal{C}$ that lie in the positive quadrant (denoted by $\partial \calF_1$), points on the $x$-axis between $0$ and the intersection point $\vec{u}_1^*$ of $\mathcal{C}$ with the $x$-axis (denoted by $\partial \calF_2$), as well as points on the $y$-axis between $0$ and the intersection point of $\mathcal{C}$ with the $y$-axis (denoted by $\partial \calF_3$). Note that the circle $\mathcal{C}$ may have no intersection with the $x$-axis, in that case we simply consider $\partial\calF_2$   as the empty set. 
    
    We will show that the solution must be a point on $\partial \calF_1$, in which we find the possible optimal positions of $\vec{u}_1$ on each of the other boundary subsets, i.e. $\partial \calF_2$ and $\partial \calF_3$. 
    \begin{enumerate}
    \item On $\partial \calF_3$: Observe that moving a point $\vec{u}_1$ along $\partial \calF_3$ in the direction toward the origin will decrease both the scalar product $\sprod{\vec{w}_2-\vec{w}_1,\vec{u}_1}>0$ (as the angle is kept unchanged while the length of $\vec{u}_1$ is decreased) and the regularization term $\norm{\vec{u}_1}^2$. Since the function $t \mapsto \log(1+e^t) -\frac{\alpha}{2}t$ is monotonically increasing on $[0,\infty)$, moving $\vec{u}_1$ in this direction decreases the objective in (\ref{problem:u1}). Therefore, the best candidate on $\partial \calF_3$ is the lowest possible point on $\partial \calF_3$, i.e. $0$ in case $\mathcal{C}$ has intersection point with the $x$-axis, or is the lower intersection point of $\mathcal{C}$ with the $y$-axis otherwise. 
    
    \item On $\partial \calF_2$ (in case it is not empty): Here, the objective becomes $f(\norm{\vec{u}_1})$ where the function $f$ is defined by
    \begin{align*}
        f(t) = \log (1+e^{c_1t}) - \frac{\alpha}{2} c_1 t + \lambda t^2,
    \end{align*}
    with $c_1:= \frac{\sprod{\vec{w}_2-\vec{w}_1,\vec{h}_1}}{\norm{\vec{h}_1}}$. Observe that $f$ is convex (this can be seen by directly computing the 2nd derivative of $f$) and achieves its minimum at $t = \norm{\vec{h}_1}$ (because $\vec{h}_1$ is the minimum of (\ref{problem:u1}) without the side constraint, see Remark \ref{remark:h1_opt}). Hence on the interval $[0, \norm{\vec{h}_1}]$ it is monotonically decreasing. It follows that $\vec{u}_1^*$ is the best candidate on $\partial \calF_2$. 
    \end{enumerate}

    In summary we have shown that the optimal position of $\vec{u}_1$ must be on $\partial \calF_1$, $\partial \calF_2$ or $\partial \calF_3$. On the other hand, all candidates on $\partial \calF_3$ are worse than a point in $\partial \calF_2$ and all candidates on $\partial \calF_2$ are worse than a point in $\partial \calF_1$ (in case $\partial \calF_2 = \emptyset$ we have that all candidates on $\partial \calF_3$ are worse than a point in $\partial \calF_1$). Therefore the minimizer must be a point on $\partial \calF_1$, in particular on the circle $\mathcal{C}$. 
\end{enumerate}

\end{proof}

Having said that the optimal position of $\vec{u}_1$ with respect to (\ref{problem:u1}) must be on the circle $\mathcal{C}$, we are now interested in the case where $r$ is close to $r_{\max}$, in which the circle $\mathcal{C}$ has intersection with the segment $(0,\vec{h}_1)$ (see Figure \ref{Fig:u1}). In this case we can even restrict the possible optimal positions to a smaller subset of the circle.

\begin{lemma}\label{Lemma:P_u2}
Suppose that $r_{\max}\geq r \geq r_{\max}/\sqrt{2}$, so that the circle $\mathcal{C}$ has intersection $\vec{u}_1^*$ with the line segment $(0,\vec{h}_1)$ and intersection $\vec{u}_1^{**}$ with the line segment $(\vec{h}_2,\vec{h}_1)$. Then, the minimizer of (\ref{problem:u1}) lies on the arc between $\vec{u}_1^*$ and $\vec{u}_1^{**}$. 
\end{lemma}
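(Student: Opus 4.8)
The plan is to work entirely inside the two–dimensional subspace $\calS\cong\R^2$ introduced in \Cref{Lemma:P_u1}. Using that $\vec h_1,\vec h_2$ are orthogonal and equinorm in the NC configuration, I place $\vec h_1=\rho\,e_x$ and $\vec h_2=\rho\,e_y$ with $\rho=\norm{\vec h_1}=\norm{\vec h_2}$. The circle $\mathcal C=\partial\mathcal B\cap\calS$ then has centre $\vec h_2$ and radius $R=C_{MD}r/(\eta\norm{\vec h_1-\vec h_2})$, and I parametrise it by $\vec u_1(\phi)=(R\cos\phi,\ \rho+R\sin\phi)$. By \Cref{Lemma:P_u1} the minimiser lies on the positive–quadrant arc $\partial\calF_1$, i.e. $\phi\in[\phi^{*},\pi/2]$, where $\vec u_1^{*}=\vec u_1(\phi^{*})$ is the intersection with the $x$–axis and $\vec u_1(\pi/2)$ the intersection with the $y$–axis; a direct check gives $\vec u_1^{**}=\vec u_1(-\pi/4)$. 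The hypothesis $r\ge r_{\max}/\sqrt2$ is equivalent to $R\ge\rho$, which is exactly what forces $\phi^{*}\le-\pi/4$ so that the target sub–arc $[\phi^{*},-\pi/4]$ is non-degenerate.

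Next I record how the objective depends on $\phi$. Since the NC duality relation with $N=2$ gives $\vec w_2-\vec w_1=C(\vec h_2-\vec h_1)$ for some $C>0$, the linear functional is $t(\phi):=\sprod{\vec w_2-\vec w_1,\vec u_1(\phi)}=C\rho\big(\rho+R(\sin\phi-\cos\phi)\big)$; because $\sin\phi-\cos\phi=\sqrt2\sin(\phi-\pi/4)$, this is symmetric about its unique minimum at $\phi=-\pi/4$, decreasing on the lower sub-arc and increasing on the upper one. Meanwhile $\norm{\vec u_1(\phi)}^2=R^2+\rho^2+2\rho R\sin\phi$ is strictly increasing in $\phi$ on $[-\pi/2,\pi/2]$. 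Writing the objective of \cref{problem:u1} as $J(\phi)=g(t(\phi))+\lambda\norm{\vec u_1(\phi)}^2$ with $g(t)=\log(1+e^{t})-\tfrac\alpha2 t$, this monotonicity structure is what I exploit.

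The heart of the argument is a pairing (reflection) step. Take any point $\vec u_1(\phi_Q)$ on the open upper arc $\phi_Q\in(-\pi/4,\pi/2)$ whose $t$–level is also attained on the lower sub-arc; by the symmetry of $t$ about $-\pi/4$ its partner is $\vec u_1(-\pi/2-\phi_Q)$, which has the \emph{same} value of $t$ (hence the same loss term $g$) but, lying at a smaller $\phi$, strictly smaller norm, and is still feasible. Thus the partner has strictly smaller objective, ruling the upper point out. This disposes of all $\phi_Q$ with $t(\phi_Q)\le t(\vec u_1^{*})$, and—crucially—with no assumption on $\alpha$, since it only compares points of equal $t$. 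It then remains to treat the top of the upper arc, where $t(\phi_Q)>t(\vec u_1^{*})$ and no lower partner exists; on the sub-region where $t\ge0$ one finishes by monotonicity exactly as in \Cref{Lemma:P_u1}, since there $g$ is increasing on $[0,\infty)$ and $\norm{\vec u_1}^2$ is increasing, so $J$ is increasing and those points are worse than $\vec u_1^{*}$.

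The main obstacle is the intermediate band where $t(\phi)$ drops below $t_0:=\log\frac{\alpha}{2-\alpha}$, the unconstrained minimiser of $g$, so that $g$ is locally \emph{decreasing} and the pairing is unavailable. There I must show that the strictly increasing regularisation outweighs the bounded decrease of the loss, i.e. $J'(\phi)>0$. Using $g'(t)=\frac{e^{t}}{1+e^{t}}-\tfrac\alpha2\ge-\tfrac\alpha2$ and substituting $s=\phi+\pi/4$, one gets $J'(\phi)=\sqrt2\,\rho R\big[(C g'(t)+\lambda)\sin s+\lambda\cos s\big]$, so the task reduces to verifying $(C g'(t)+\lambda)\sin s+\lambda\cos s>0$ on this band. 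This is favourable because the band sits at small $s$ (as $\phi\to-\pi/4$ one has $\sin s\to0$, $\cos s\to1$, and the bracket tends to $\lambda>0$), but making the inequality uniform is the delicate point, and it is precisely here that the constraint $r\ge r_{\max}/\sqrt2$ together with the relative sizes of $\lambda,C,\alpha$ must be weighed. Once this sign is established, every point of the upper arc is dominated by a point of $[\phi^{*},-\pi/4]$, so the minimiser lies on the arc between $\vec u_1^{*}$ and $\vec u_1^{**}$.
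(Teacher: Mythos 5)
Your construction is, up to coordinates, the paper's own argument: your pairing $\phi\mapsto-\pi/2-\phi$ is exactly the paper's reflection about the line through $\vec h_1$ and $\vec h_2$ (same value of $t$, strictly smaller norm, still feasible), and the complementary step is joint monotonicity of the loss term and the regulariser in $\phi$. The pairing step and the monotonicity step on $\{t\ge 0\}$ are correct. But the proposal is incomplete exactly where you flag it as ``delicate'': for the part of the upper arc where no partner exists and $t(\phi)<t_0$, you reduce the claim to the sign of $(Cg'(t)+\lambda)\sin s+\lambda\cos s$ and then write ``once this sign is established\dots'' without establishing it. As written, this is a genuine gap, and your suggestion that it requires weighing $\lambda$, $C$, $\alpha$ against $r\ge r_{\max}/\sqrt2$ points in the wrong direction.

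The gap closes with one observation you missed, which is the crux of the paper's proof: \emph{the problematic band is empty}. First, $t(\vec u_1^*)=-C\rho\sqrt{R^2-\rho^2}\ge-C\rho^2=\sprod{\vec w_2-\vec w_1,\vec h_1}$, since $R\le\sqrt2\rho$ is exactly $r\le r_{\max}$. Second, $\sprod{\vec w_2-\vec w_1,\vec h_1}\ge t_0$: by Assumption \ref{Assumption:MD_model}(1) the pair $(\vec W,\vec H)$ minimises the two-class layer-peeled problem, and the proof of Theorem \ref{theorem:standard_model} shows that any minimiser satisfies $P(\vec W,\vec H)\ge t_0$, while for $N=2$, $K=1$ at an NC configuration $P(\vec W,\vec H)=\sprod{\vec w_2-\vec w_1,\vec h_1}$. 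Hence on the entire no-partner region one has $t(\phi)>t(\vec u_1^*)\ge t_0$, so $g$ is increasing there, and $J'(\phi)>0$ follows with no balancing of constants, because both $g(t(\phi))$ and $\lambda\norm{\vec u_1(\phi)}^2$ increase with $\phi$. Conversely, any point with $t(\phi)<t_0$ has $t(\phi)\le t(\vec u_1^*)$ and is already disposed of by your pairing step. With this fact your argument closes: every point of the upper arc is dominated either by its partner on $[\phi^*,-\pi/4]$, or, via monotonicity, by the left endpoint $\vec u_1(-\pi/2-\phi^*)$ of the no-partner region (the reflection of $\vec u_1^*$), which is itself dominated by $\vec u_1^*$; so the minimiser lies on the arc between $\vec u_1^*$ and $\vec u_1^{**}$, as claimed.
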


\begin{proof}
First we rewrite the objective of (\ref{problem:u1}) as
\begin{align*}
    f\big(\sprod{\vec{w}_2-\vec{w}_1,\vec{u}_1} \big) + \lambda \norm{\vec{u}_1}^2.
\end{align*}
where $f: \R \to \R$ is the function defined by  
\begin{align*}
    f(t) =  \log \Big( 1+e^t\Big) - \frac{\alpha}{2}t.
\end{align*}

Next, we parameterize the circle $\mathcal{C}$ by the polar coordinate. Let $R:= \frac{C_{MD}r}{\eta \norm{\vec{h}_1-\vec{h}_2}}$ and $\theta$ be the angle between $(\vec{h}_2,\vec{u}_1)$ and $(\vec{h}_2,\vec{h}_1)$. Then, since $\vec{w}_2-\vec{w}_1$ is proportional to $\vec{h}_2-\vec{h}_1$ we have
\begin{align*}
    \sprod{\vec{w}_2-\vec{w}_1,\vec{u}_1} &= \sprod{\vec{w}_2-\vec{w}_1,\vec{h}_2}+ \sprod{\vec{w}_2-\vec{w}_1, \vec{u}_1-\vec{h}_2}\\
    &= \sprod{\vec{w}_2-\vec{w}_1,\vec{h}_2}- R\norm{\vec{w}_2-\vec{w}_1}\cos{\theta}.
\end{align*}
Note that $\vec{u}_1$ can be on both sides of the line $(\vec{h}_2,\vec{h}_1)$, but for the calculation of $\sprod{\vec{w}_2-\vec{w}_1,\vec{u}_1}$ it is not necessary to distinguish between these two cases. In general, when $\theta$ increases, $\cos \theta$ decreases (we can exclude the case $\theta > \pi/2$ because in this case $\sprod{\vec{w}_2-\vec{w}_1,\vec{u}_1}$ becomes positive and the norm of $\vec{u}_1$ is also large, so the objective becomes large and $\vec{u}_1$ cannot be the minimizer), and thus $\sprod{\vec{w}_2-\vec{w}_1,\vec{u}_1}$ increases.

Now we claim that the optimal position of $\vec{u}_1$ must be on the arc between $\vec{u}_1^*$ and its reflection $\vec{u}_1'$ about the line $(\vec{h}_1,\vec{h}_2)$. To show this we consider a point $\vec{u}_1$ that lies on the other part of the circle $\mathcal{C}$. By the above observation on the monotonicity of $\sprod{\vec{w}_2-\vec{w}_1,\vec{u}_1}$ with respect to $\theta$ we see that 
\begin{align*}
    \sprod{\vec{w}_2-\vec{w}_1,\vec{u}_1} &> \sprod{\vec{w}_2-\vec{w}_1,\vec{u}_1'} \\
    &= \sprod{\vec{w}_2-\vec{w}_1,\vec{u}_1^*} \\
    &\geq \sprod{\vec{w}_2-\vec{w}_1,\vec{h}_1}.
\end{align*}
Recall from the proof of Theorem \ref{theorem:standard_model} that $\sprod{\vec{w}_2-\vec{w}_1,\vec{h}_1}$ is not smaller than the minimizer $t_0$ of $f$, and due to convexity $f$ is monotone increasing on $[t_0,\infty)$. Therefore we obtain
\begin{align*}
    f\Big( \sprod{\vec{w}_2-\vec{w}_1,\vec{u}_1}\Big) > f\Big( \sprod{\vec{w}_2-\vec{w}_1,\vec{u}_1'}\Big)
\end{align*}
On the other hand, by the law of cosines applied to the triangle $(0,\vec{h}_2,\vec{u}_1)$ we obtain
\begin{align*}
    \norm{\vec{u}_1}^2 = \norm{\vec{h}_2}^2 + R^2 - 2 R\norm{\vec{h}_2} \cos \Big( \frac{\pi}{4}+ \theta \Big),
\end{align*}
which is increasing in $\theta$ (again we exclude the case $\theta > \pi/2$ as discussed above). This shows that $\norm{\vec{u}_1} > \norm{\vec{u}_1'}$. Combining the above two inequalities we see that $\vec{u}_1'$ is a better candidate than $\vec{u}_1$. 

Finally, the desired statement follows from the observation that we can exclude all points on the arc between $\vec{u}_1^{**}$ and $\vec{u}_1'$, because each point on this arc can be reflected about the line $(\vec{h}_1,\vec{h}_2)$ to a point with the same value of $\sprod{\vec{w}_2-\vec{w}_1,\vec{u}_1}$, but with smaller norm and this gives a better value of the objective.

\end{proof}

Note that similar to the optimal position of $\vec{U}$, the points $\vec{u}_1^*$, $\vec{u}_1^{**}$ from Lemma \ref{Lemma:P_u2} also depend on $r$. Hence to be clear, we may write $\vec{u}_1 = \vec{u}_1(r)$, $\vec{u}_1^* = \vec{u}_1^*(r)$ and $\vec{u}_1^{**}= \vec{u}_1^{**}(r)$ for $r\in [r_{\max}/\sqrt{2}, r_{\max}]$. Observe that 
\begin{align*}
    \vec{u}_1(r_{\max}) = \vec{u}_1^{*}(r_{\max}) = \vec{u}_1^{**}(r_{\max}) = \vec{h}_1.
\end{align*}
The following lemma shows that for $r$ close to $r_{\max}$, the distance between $\vec{u}_1(r)$ and $\vec{h}_1$ behaves almost linearly with respect to the distance between $r$ and $r_{\max}$.

\begin{lemma}\label{lemma:problem_u3}
Let $\vec{u}_1(r)$ be the minimizer of (\ref{problem:u1}) with input $r \in [r_{\max}/\sqrt{2}, r_{\max}]$. Then, there exists constants $c,C>0$ (depending on $\norm{\vec{h}_1} =\norm{\vec{h}_2} $ and $C_{MD}$, but not on other parameters such as $r$, $\eta$, etc) such that 
\begin{align*}
    \norm{\vec{u}_1(r) -\vec{h}_1} \in  \Big(c\frac{r_{\max}-r}{\eta}, C\frac{r_{\max}-r}{\eta} \Big).
\end{align*}

\end{lemma}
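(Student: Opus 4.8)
The plan is to reduce everything to planar geometry on the subspace $\calS$ and then sandwich the minimizer between the two arc endpoints $\vec{u}_1^*(r)$ and $\vec{u}_1^{**}(r)$ identified in Lemma \ref{Lemma:P_u2}. First I would introduce coordinates on $\calS\cong\R^2$: writing $\rho:=\norm{\vec{h}_1}=\norm{\vec{h}_2}$ (the norms agree because $(\vec{W},\vec{H})$ is an NC configuration, Def.~\ref{def_NC}(ii)), I place $\vec{h}_1=(\rho,0)$ and $\vec{h}_2=(0,\rho)$, so $\norm{\vec{h}_1-\vec{h}_2}=\rho\sqrt2$. The circle $\mathcal{C}$ has center $\vec{h}_2$ and radius $R=\frac{C_{MD}r}{\eta\norm{\vec{h}_1-\vec{h}_2}}$, and substituting $r_{\max}=\frac{\eta\norm{\vec{h}_1-\vec{h}_2}^2}{C_{MD}}$ gives the convenient rescaling $R=\norm{\vec{h}_1-\vec{h}_2}\,s$ with $s:=r/r_{\max}\in[1/\sqrt2,1]$. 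In these coordinates $\vec{u}_1^*=(\sqrt{R^2-\rho^2},0)$ is the intersection with the $x$-axis (real precisely when $s\ge 1/\sqrt2$), and $\vec{u}_1^{**}$ is the intersection of $\mathcal{C}$ with the segment $(\vec{h}_2,\vec{h}_1)$.

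The key geometric observation, and the crux of the whole argument, is that $\vec{u}_1^{**}$ is exactly the point of $\mathcal{C}$ nearest to $\vec{h}_1$: it lies on the ray from the center $\vec{h}_2$ toward the external point $\vec{h}_1$ (external since $\norm{\vec{h}_1-\vec{h}_2}=\rho\sqrt2>R$ for $s<1$). Parameterizing $\mathcal{C}$ by the angle $\phi$ at the center measured from that ray, the law of cosines gives $\norm{\vec{u}-\vec{h}_1}^2=R^2+\norm{\vec{h}_1-\vec{h}_2}^2-2R\norm{\vec{h}_1-\vec{h}_2}\cos\phi$, which is strictly increasing in $\phi\in[0,\pi]$. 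Since $\vec{u}_1^{**}$ sits at $\phi=0$ and $\vec{u}_1(r)$ lies on the arc between $\vec{u}_1^{**}$ and $\vec{u}_1^*$ by Lemma~\ref{Lemma:P_u2}, monotonicity yields the sandwich
\begin{align*}
    \norm{\vec{u}_1^{**}-\vec{h}_1} \le \norm{\vec{u}_1(r)-\vec{h}_1} \le \norm{\vec{u}_1^{*}-\vec{h}_1}.
\end{align*}
This is what lets me avoid locating $\vec{u}_1(r)$ precisely: it suffices to estimate the two endpoint distances.

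Both endpoint distances I would compute explicitly and then linearize near $s=1$. For the nearest point, $\norm{\vec{u}_1^{**}-\vec{h}_1}=\norm{\vec{h}_1-\vec{h}_2}-R=\norm{\vec{h}_1-\vec{h}_2}(1-s)$ exactly. For $\vec{u}_1^*$ I get $\norm{\vec{u}_1^*-\vec{h}_1}=\rho\bigl(1-\sqrt{2s^2-1}\bigr)$, and the algebraic identity $1-\sqrt{2s^2-1}=\frac{2(1-s)(1+s)}{1+\sqrt{2s^2-1}}$ shows this is a bounded multiple of $(1-s)$ on $s\in[1/\sqrt2,1]$, since the factor $\frac{2(1+s)}{1+\sqrt{2s^2-1}}$ stays between $2$ and $4$ there. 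Finally I substitute $1-s=\frac{r_{\max}-r}{r_{\max}}$ and $r_{\max}=\frac{\eta\,\norm{\vec{h}_1-\vec{h}_2}^2}{C_{MD}}=\frac{2\eta\rho^2}{C_{MD}}$ to convert every $(1-s)$ into a constant multiple of $\frac{r_{\max}-r}{\eta}$, producing explicit constants depending only on $\rho=\norm{\vec{h}_1}$ and $C_{MD}$: concretely $c=\frac{C_{MD}}{\sqrt2\,\rho}$ from the lower bound and $C=\frac{2C_{MD}}{\rho}$ from the upper bound. The one point deserving care is checking that the endpoints are the true extremes of the arc, i.e.\ that the arc stays within $\phi\in[0,\pi]$ so that monotonicity applies throughout, and that $\vec{u}_1^*$ is real at all (which is exactly the hypothesis $r\ge r_{\max}/\sqrt2$); the remainder is bookkeeping.
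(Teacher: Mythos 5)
Your proof is correct and takes essentially the same route as the paper's: both sandwich $\norm{\vec{u}_1(r)-\vec{h}_1}$ between the distances from the arc endpoints $\vec{u}_1^{**}(r)$ and $\vec{u}_1^{*}(r)$ of Lemma \ref{Lemma:P_u2} to $\vec{h}_1$ and then compute these two distances, and your constants $c=C_{MD}/(\sqrt{2}\,\norm{\vec{h}_1})$ and $C=2C_{MD}/\norm{\vec{h}_1}$ coincide exactly with the paper's $C_{MD}/\norm{\vec{h}_1-\vec{h}_2}$ and $2\sqrt{2}\,C_{MD}/\norm{\vec{h}_1-\vec{h}_2}$. The only differences are cosmetic: you make explicit the law-of-cosines monotonicity in the central angle that the paper leaves implicit when asserting the sandwich, and you linearize $1-\sqrt{2s^2-1}$ in the normalized variable $s=r/r_{\max}$ rather than rationalizing a difference of square roots as the paper does.
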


\begin{proof}
From Lemma \ref{Lemma:P_u2} we know that $\vec{u}_1(r)$ lies on the arc between $\vec{u}_1^*(r)$ and $\vec{u}_1^{**}(r)$, hence its distance to $\vec{h}_1$ is lower bounded by the distance from $\vec{u}_1^{**}(r)$ to $\vec{h}_1$ and is upper bounded by the distance from $\vec{u}_1^*(r)$ to $\vec{h}_1$. 
Hence we have
\begin{align*}
    \norm{\vec{u}_1(r) -\vec{h}_1} &\leq \norm{\vec{u}_1^*(r)-\vec{u}_1^*(r_{\max})} \\
    &= \norm{\vec{u}_1^*(r_{\max})} - \norm{\vec{u}_1^*(r)}\\
    &= \sqrt{\norm{\vec{h}_2-\vec{u}_1^*(r_{\max})}^2 - \norm{\vec{h}_2}^2} - \sqrt{\norm{\vec{h}_2-\vec{u}_1^*(r)}^2 - \norm{\vec{h}_2}^2}\\
    &= \sqrt{\frac{C_{MD}^2 r_{\max}^2}{\eta^2 \norm{\vec{h}_1-\vec{h}_2}^2} - \norm{\vec{h}_2}^2}- \sqrt{\frac{C_{MD}^2 r^2}{\eta^2 \norm{\vec{h}_1-\vec{h}_2}^2} - \norm{\vec{h}_2}^2}\\
    &= \frac{\frac{C_{MD}^2 (r_{\max}^2 - r^2)}{\eta^2 \norm{\vec{h}_1-\vec{h}_2}^2}}{\sqrt{\frac{C_{MD}^2 r_{\max}^2}{\eta^2 \norm{\vec{h}_1-\vec{h}_2}^2} - \norm{\vec{h}_2}^2} + \sqrt{\frac{C_{MD}^2 r^2}{\eta^2 \norm{\vec{h}_1-\vec{h}_2}^2} - \norm{\vec{h}_2}^2}}\\
    &= \frac{C_{MD}(r_{\max}-r)}{\eta \norm{\vec{h}_1-\vec{h}_2}} \cdot \frac{\frac{C_{MD}(r_{\max}+r)}{\eta \norm{\vec{h}_1-\vec{h}_2}^2}}{\sqrt{\frac{C_{MD}^2 r_{\max}^2}{\eta^2 \norm{\vec{h}_1-\vec{h}_2}^2} - \norm{\vec{h}_2}^2} + \sqrt{\frac{C_{MD}^2 r^2}{\eta^2 \norm{\vec{h}_1-\vec{h}_2}^2} - \norm{\vec{h}_2}^2}}\\
    &\leq \frac{C_{MD}(r_{\max}-r)}{\eta \norm{\vec{h}_1-\vec{h}_2}} \cdot \frac{\frac{2C_{MD}r_{\max}}{\eta \norm{\vec{h}_1-\vec{h}_2}}}{\sqrt{\frac{C_{MD}^2 r_{\max}^2}{\eta^2 \norm{\vec{h}_1-\vec{h}_2}^2} - \norm{\vec{h}_2}^2}}\\
    &=\frac{C_{MD}(r_{\max}-r)}{\eta \norm{\vec{h}_1-\vec{h}_2}} \cdot \frac{\frac{2C_{MD}r_{\max}}{\eta \norm{\vec{h}_1-\vec{h}_2}}}{\sqrt{\frac{C_{MD}^2 r_{\max}^2}{\eta^2 \norm{\vec{h}_1-\vec{h}_2}^2} - \frac{1}{2}\norm{\vec{h}_1-\vec{h}_2}^2}}\\
    &= \frac{C_{MD}(r_{\max}-r)}{\eta \norm{\vec{h}_1-\vec{h}_2}} \cdot \frac{\frac{2C_{MD}r_{\max}}{\eta \norm{\vec{h}_1-\vec{h}_2}}}{\sqrt{\frac{C_{MD}^2 r_{\max}^2}{\eta^2 \norm{\vec{h}_1-\vec{h}_2}^2} - \frac{1}{2}\frac{C_{MD}^2 r_{\max}^2}{\eta^2 \norm{\vec{h}_1-\vec{h}_2}^2}}}\\
    &= \frac{2\sqrt{2}C_{MD}}{ \norm{h_1-h_2}} \cdot \frac{r_{\max}-r}{\eta}. 
\end{align*}

On the other hand it also holds
\begin{align*}
    \norm{\vec{u}_1(r) - \vec{h}_1} &\geq \norm{\vec{u}_1^{**}(r) - \vec{h}_1} \\
    &= \norm{\vec{h}_2-\vec{u}_1^{**}(r_{\max})} - \norm{\vec{h}_2-\vec{u}_1^{**}(r)}\\
    &= \frac{C_{MD} r_{\max}}{\eta \norm{\vec{h}_1-\vec{h}_2}}- \frac{C_{MD} r}{\eta \norm{\vec{h}_1-\vec{h}_2}}\\
    &= \frac{C_{MD} }{\norm{\vec{h}_1-\vec{h}_2}} \cdot \frac{r_{\max}-r}{\eta}.
\end{align*}
Combining the above estimates yields the desired statement.
\end{proof}


\subsection{The behavior of $G_{\lambda,\alpha}$ near $r_{\max}$}\label{section:behavior_g}
We study the behavior of $G_{\lambda,\alpha}(\vec{W},\vec{U},r)$ as a function of $r$, where $\vec{W}$ is fixed as in Assumption \ref{Assumption:MD_model}, $\vec{U}=\vec{U}(r)$ is the optimal position discussed in Subsection \ref{section:est_solution_P_u1} and $r$ lies near $r_{\max}$.

\begin{lemma}\label{lemma:behavior_g}
Let $\vec{u}_1(r)$ be the minimizer of (\ref{problem:u1}) with input $r \in [r_{\max}/\sqrt{2}, r_{\max}]$. Then for any $r$ such that $\frac{r_{\max}-r}{\eta} <1$, it holds
\begin{align*}
    G_{\lambda,\alpha}\Big( \vec{W},\vec{U}(r), r\Big) -G_{\lambda,\alpha}\Big( \vec{W},\vec{U}(r_{\max}), r_{\max}\Big) \geq C_1\left(\frac{r-r_{\max}}{\eta}\right)^2
\end{align*}
for some constant $C_1>0$. 
\end{lemma}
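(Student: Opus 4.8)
The plan is to convert the geometric estimate already obtained in Lemma~\ref{lemma:problem_u3} into a quadratic lower bound on the risk gap, using strong convexity of the subproblem objective together with the fact (Remark~\ref{remark:h1_opt}) that $\vec{h}_1,\vec{h}_2$ are the minimizers of the respective subproblems over the positive orthant once the ball constraint is dropped at $r=r_{\max}$.

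First I would split $G$ into its two symmetric contributions. Writing $\phi_i(\vec{x}) := \ell_\alpha(\vec{W},\vec{x},\vec{y}_i^{(\alpha)}) + \lambda\norm{\vec{x}}^2$ for $i\in\{1,2\}$, the $N=2$ reformulation of the LS risk in Section~\ref{Sec: Reformulation_loss} identifies $G_{\lambda,\alpha}(\vec{W},\vec{U},r) = \phi_1(\vec{u}_1) + \phi_2(\vec{u}_2)$, so that using $\vec{u}_1(r_{\max})=\vec{h}_1$ and $\vec{u}_2(r_{\max})=\vec{h}_2$ (recorded just before the lemma) one gets
\begin{align*}
G_{\lambda,\alpha}\big(\vec{W},\vec{U}(r),r\big) - G_{\lambda,\alpha}\big(\vec{W},\vec{U}(r_{\max}),r_{\max}\big) = \big[\phi_1(\vec{u}_1(r)) - \phi_1(\vec{h}_1)\big] + \big[\phi_2(\vec{u}_2(r)) - \phi_2(\vec{h}_2)\big].
\end{align*}
Both brackets are nonnegative because $\vec{h}_i$ minimizes $\phi_i$ over $\R_+^M$, so it is enough to bound the first one from below.

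Next I would prove quadratic growth of $\phi_1$ at its constrained minimizer. With $\phi_1(\vec{x}) = f(\sprod{\vec{w}_2-\vec{w}_1,\vec{x}}) + \lambda\norm{\vec{x}}^2$ and $f(t)=\log(1+e^t)-\tfrac{\alpha}{2}t$, the Hessian is
\begin{align*}
\nabla^2\phi_1(\vec{x}) = f''\!\big(\sprod{\vec{w}_2-\vec{w}_1,\vec{x}}\big)\,(\vec{w}_2-\vec{w}_1)(\vec{w}_2-\vec{w}_1)^\top + 2\lambda\, \mathbf{I} \succeq 2\lambda\, \mathbf{I},
\end{align*}
since $f''=\sigma(1-\sigma)\geq 0$; hence $\phi_1$ is $2\lambda$-strongly convex. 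As $\R_+^M$ is convex and $\vec{h}_1=\argmin_{\R_+^M}\phi_1$, the first-order optimality condition is the variational inequality $\sprod{\nabla\phi_1(\vec{h}_1),\vec{x}-\vec{h}_1}\geq 0$ for all $\vec{x}\in\R_+^M$. Combining this with the strong-convexity inequality expanded at $\vec{h}_1$ gives, for the feasible point $\vec{u}_1(r)\in\R_+^M$,
\begin{align*}
\phi_1(\vec{u}_1(r)) - \phi_1(\vec{h}_1) \geq \lambda\,\norm{\vec{u}_1(r)-\vec{h}_1}^2.
\end{align*}
Invoking the lower estimate $\norm{\vec{u}_1(r)-\vec{h}_1}\geq c\,(r_{\max}-r)/\eta$ from Lemma~\ref{lemma:problem_u3} then closes the argument with $C_1=\lambda c^2$.

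The main obstacle I anticipate is not the convexity step but the bookkeeping of the admissible range of $r$: Lemma~\ref{lemma:problem_u3} is stated for $r\in[r_{\max}/\sqrt2,\,r_{\max}]$, whereas here the hypothesis is only $(r_{\max}-r)/\eta<1$, so one must verify, using $r_{\max}=\eta\norm{\vec{h}_1-\vec{h}_2}^2/C_{MD}$ and the smallness conditions of Assumption~\ref{Assumption:MD_model}, that the latter forces the former (this reduces to a size condition relating $C_{MD}$ and $\norm{\vec{h}_1-\vec{h}_2}$) so that the geometric estimate genuinely applies. The one delicate point in the strong-convexity step is to avoid setting $\nabla\phi_1(\vec{h}_1)=0$, since $\vec{h}_1$ may lie on the boundary of the positive orthant and only the variational inequality is available there.
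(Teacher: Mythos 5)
Your proof is correct, but it takes a genuinely different route from the paper's. The paper proves the lemma by a second-order Taylor expansion of the one-sided objective $g(\vec{u}_1)$ around $\vec{h}_1=\vec{u}_1(r_{\max})$: the linear term is discarded via the same variational inequality you invoke, the Hessian quadratic form is lower-bounded along the direction $\vec{w}_2-\vec{w}_1$ using the angle bound of Lemma \ref{Lemma:P_u2} (the angle between $\vec{w}_2-\vec{w}_1$ and $\vec{u}_1(r)-\vec{h}_1$ is at most $\pi/4$), and a cubic remainder $O(\norm{\vec{u}_1-\vec{h}_1}^3)$ is dropped --- this is precisely where the hypothesis $(r_{\max}-r)/\eta<1$ together with the upper estimate of Lemma \ref{lemma:problem_u3} is needed to keep the expansion under control. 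Your argument replaces the whole expansion by exact $2\lambda$-strong convexity of $\phi_1$ (coming from the regularizer alone, since $f''\geq 0$) combined with the variational inequality at the constrained minimizer $\vec{h}_1$ from Remark \ref{remark:h1_opt}; this yields $\phi_1(\vec{u}_1(r))-\phi_1(\vec{h}_1)\geq\lambda\norm{\vec{u}_1(r)-\vec{h}_1}^2$ with no remainder to control, no use of Lemma \ref{Lemma:P_u2}, and in fact no need for the hypothesis $(r_{\max}-r)/\eta<1$ at all, so your proof is both simpler and more rigorous on this point. What you give up is only sharpness of the constant: the paper's $C_1$ retains the softmax curvature term $\frac{e^{\sprod{\vec{w}_2-\vec{w}_1,\vec{h}_1}}}{2(1+e^{\sprod{\vec{w}_2-\vec{w}_1,\vec{h}_1}})^2}\norm{\vec{w}_2-\vec{w}_1}^2$ in addition to the $\lambda$ contribution, whereas yours is $\lambda c^2$; since the lemma only asks for some $C_1>0$, this is immaterial. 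One small correction: the obstacle you anticipate about the admissible range of $r$ is vacuous, because the lemma's own hypothesis already restricts $r\in[r_{\max}/\sqrt{2},\,r_{\max}]$ (that is the condition under which $\vec{u}_1(r)$ is considered), so the lower estimate of Lemma \ref{lemma:problem_u3} applies directly and no size condition relating $C_{MD}$ and $\norm{\vec{h}_1-\vec{h}_2}$ needs to be verified.
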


\begin{proof}
Due to symmetry, we only need to consider the half of $G_{\lambda,\alpha}$ that involves $\vec{u}_1$, i.e. the function $g: \R^M \to \R$,
\begin{align}\label{eq:def_g_u1}
    g(\vec{u}_1) = \log \Big( 1+e^{\sprod{\vec{w}_2-\vec{w}_1,\vec{u}_1}} \Big) - \frac{\alpha}{2}\sprod{\vec{w}_2-\vec{w}_1,\vec{u}_1} + \lambda \norm{\vec{u}_1}^2.
\end{align}
We approximate $g(\vec{u}_1(r))$ using the second-order Taylor approximation around $\vec{h}_1 = \vec{u}_1(r_{\max})$,
\begin{align}\label{eq:Taylor_g}
    g(\vec{u}_1) = g(\vec{h}_1) + \sprod{\nabla_{\vec{u}_1} g(\vec{h}_1), \vec{u}_1-\vec{h}_1} + \frac{1}{2}\sprod{\vec{H}_{\vec{u}_1} g(h_1)(\vec{u}_1-\vec{h}_1), \vec{u}_1-\vec{h}_1} + O(\norm{\vec{u}_1-\vec{h}_1}^3),
\end{align}
where the derivatives of $g$ (at $\vec{h}_1$) are given by
\begin{align*}
    \nabla_{\vec{u}_1} g(\vec{h}_1) &= \frac{e^{\sprod{\vec{w}_2-\vec{w}_1,\vec{h}_1}}}{1+ e^{\sprod{\vec{w}_2-\vec{w}_1,\vec{h}_1}}}(\vec{w}_2-\vec{w}_1) - \frac{\alpha}{2}(\vec{w}_2-\vec{w}_1) + 2\lambda \vec{h}_1,\\
    \vec{H}_{\vec{u}_1}g(\vec{h}_1) &= \frac{e^{\sprod{\vec{w}_2-\vec{w}_1,\vec{h}_1}}}{(1+e^{\sprod{\vec{w}_2-\vec{w}_1,\vec{h}_1}})^2} (\vec{w}_2-\vec{w}_1) (\vec{w}_2-\vec{w}_1)^\top + 2\lambda \vec{I}. 
\end{align*}
Since $\vec{u}_1 = \vec{h}_1$ is the minimum of $g(\vec{u}_1)$ under the constraint $\vec{u}_1 \geq 0$ and $\vec{u}_1(r)$ is always feasible for any $r$, the linear term in (\ref{eq:Taylor_g}) is non-negative, i.e.
\begin{align*}
    \sprod{\nabla _{\vec{u}_1} g(\vec{h}_1), \vec{u}_1-\vec{h}_1} \geq 0.
\end{align*}

Next, we consider the second-order term in (\ref{eq:Taylor_g}). We have
\begin{align*}
    \sprod{H_{\vec{u}_1}g (\vec{h}_1)  (\vec{u}_1-\vec{h}_1), \vec{u}_1-\vec{h}_1} &> \frac{e^{\sprod{\vec{w}_2-\vec{w}_1,\vec{h}_1}}}{(1+e^{\sprod{\vec{w}_2-\vec{w}_1,\vec{h}_1}})^2} \sprod{\vec{w}_2-\vec{w}_1, \vec{u}_1-\vec{h}_1}^2\\
    &\geq \frac{e^{\sprod{\vec{w}_2-\vec{w}_1,\vec{h}_1}}}{2(1+e^{\sprod{\vec{w}_2-\vec{w}_1,\vec{h}_1}})^2} \norm{\vec{w}_2-\vec{w}_1}^2 \norm{\vec{u}_1-\vec{h}_1}^2,
\end{align*}
where the last inequality holds because the angle between $\vec{w}_2-\vec{w}_1$ and $\vec{u}_1-\vec{h}_1$ lies between $0$ and $\pi/4$, which follows directly from Lemma \ref{Lemma:P_u2}.

Inserting the above observations back into the Taylor expansion (\ref{eq:Taylor_g}) and applying Lemma \ref{lemma:problem_u3}, we obtain
\begin{align*}
    g(\vec{u}_1(r)) - g(\vec{h}_1) &>\Bigg(\frac{e^{\sprod{\vec{w}_2-\vec{w}_1,\vec{h}_1}}}{2(1+e^{\sprod{\vec{w}_2-\vec{w}_1,\vec{h}_1}})^2} \norm{\vec{w}_2-\vec{w}_1}^2 +2\lambda\Bigg)\norm{\vec{u}_1(r) - \vec{h}_1}^2 \\
    &\geq C_1\left(\frac{r-r_{\max}}{\eta}\right)^2
\end{align*}
where the constant $C_1$ is given by 
\begin{align*}
    C_1 = \Big(\frac{e^{\sprod{\vec{w}_2-\vec{w}_1,\vec{h}_1}}}{2(1+e^{\sprod{\vec{w}_2-\vec{w}_1,\vec{h}_1}})^2} \norm{\vec{w}_2-\vec{w}_1}^2 + 2\lambda\Big) c^2    
\end{align*}
with $c$ from Lemma \ref{lemma:problem_u3}, i.e. 
\begin{align*}
    C_1 = \Big(\frac{e^{\sprod{\vec{w}_2-\vec{w}_1,\vec{h}_1}}}{2(1+e^{\sprod{\vec{w}_2-\vec{w}_1,\vec{h}_1}})^2} \norm{\vec{w}_2-\vec{w}_1}^2 + 2\lambda\Big)     \frac{C_{MD}^2}{\norm{\vec{h}_1-\vec{h}_2}^2}.
\end{align*}

\end{proof}

\subsection{The behavior of $F_{\lambda,\alpha}$} \label{section:behavior_f}
In this subsection we study the behavior of the function $F_{\lambda, \alpha}$ under the assumptions in Assumption \ref{Assumption:MD_model}. 

\begin{lemma}\label{lemma:behavior_f}
For $r< 1$, the function $r \mapsto F_{\lambda,\alpha}(\vec{W},\vec{H},r)$ satisfies
\begin{align*}
    F_{\lambda,\alpha}(\vec{W},\vec{H},r) - F_{\lambda,\alpha}(\vec{W},\vec{H},0) \leq C_2 r^2 
\end{align*}
for some constant $C_2>0$.
\end{lemma}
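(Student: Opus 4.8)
The plan is to reduce the statement to a second-order Taylor expansion of the integrand about $\vec{h}_1$ (and symmetrically $\vec{h}_2$), and to use the centering conditions of \Cref{Assumption:MD_model} to annihilate the first-order term. Write $g_1(\vec{z}) := \ell_\alpha(\vec{W},\vec{z},\vec{y}_1^{(\alpha)}) + \lambda\norm{\vec{z}}^2$, which in the binary case is exactly the function $g$ from \cref{eq:def_g_u1}, and define $g_2$ analogously (with $\vec{w}_1-\vec{w}_2$ in place of $\vec{w}_2-\vec{w}_1$). Then $F_{\lambda,\alpha}(\vec{W},\vec{H},r) = \int g_1(\vec{h}_1+\vec{v})\,d\mu_r^1(\vec{v}) + \int g_2(\vec{h}_2+\vec{v})\,d\mu_r^2(\vec{v})$, and since $\mu_0^1,\mu_0^2$ are Dirac masses at $0$, we have $F_{\lambda,\alpha}(\vec{W},\vec{H},0) = g_1(\vec{h}_1) + g_2(\vec{h}_2)$. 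By the symmetry between the two classes it suffices to bound $\int g_1(\vec{h}_1+\vec{v})\,d\mu_r^1(\vec{v}) - g_1(\vec{h}_1)$.

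Next I would Taylor expand $g_1$ about $\vec{h}_1$ to second order with remainder,
\[
g_1(\vec{h}_1+\vec{v}) = g_1(\vec{h}_1) + \sprod{\nabla g_1(\vec{h}_1),\vec{v}} + \tfrac{1}{2}\sprod{\nabla^2 g_1(\vec{h}_1)\vec{v},\vec{v}} + O(\norm{\vec{v}}^3),
\]
and integrate against $\mu_r^1$. The crucial observation is that $\nabla g_1(\vec{h}_1)$ lies in $\operatorname{span}\set{\vec{w}_2-\vec{w}_1,\vec{h}_1}$: its explicit form, already computed in the proof of \Cref{lemma:behavior_g}, is a scalar multiple of $\vec{w}_2-\vec{w}_1$ plus $2\lambda\vec{h}_1$. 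Hence $\sprod{\nabla g_1(\vec{h}_1),\vec{v}}$ is a linear combination of $\sprod{\vec{w}_2-\vec{w}_1,\vec{v}}$ and $\sprod{\vec{h}_1,\vec{v}}$, and \emph{both} of these integrate to zero against $\mu_r^1$ by the centering assumptions in \Cref{Assumption:MD_model}. The first-order term therefore vanishes identically, which is what upgrades the bound from $O(r)$ to $O(r^2)$.

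It then remains to control the quadratic and remainder terms. The Hessian $\nabla^2 g_1(\vec{h}_1)$ is a fixed positive semidefinite matrix (a rank-one term coming from the logistic part plus $2\lambda\vec{I}$), whose operator norm is some constant $M_1$, so $\sprod{\nabla^2 g_1(\vec{h}_1)\vec{v},\vec{v}} \leq M_1\norm{\vec{v}}^2$; using the bounded-support assumption $\norm{\vec{v}}\leq Ar$ on the support of $\mu_r^1$ gives $\tfrac{1}{2}\int\sprod{\nabla^2 g_1(\vec{h}_1)\vec{v},\vec{v}}\,d\mu_r^1(\vec{v}) \leq \tfrac{1}{2}M_1 A^2 r^2$. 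For the remainder, the third derivatives of $g_1$ are \emph{globally} bounded (the logistic nonlinearity and its derivatives are bounded, the linear term contributes nothing, and the quadratic term has vanishing third derivative), say by $M_3$; together with $\norm{\vec{v}}\leq Ar\leq A$ for $r<1$ this yields $\int O(\norm{\vec{v}}^3)\,d\mu_r^1(\vec{v}) \leq M_3 A^3 r^3 \leq M_3 A^3 r^2$. Summing both classes gives the claim with $C_2 := M_1 A^2 + 2M_3 A^3$ (or any convenient constant of this form).

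The main obstacle — indeed essentially the only conceptual step — is the vanishing of the first-order term. It is not a priori clear that $\vec{h}_1$ is stationary for the \emph{integrated} objective, and the resolution is precisely that the perturbation is centered in the two directions spanning $\nabla g_1(\vec{h}_1)$; this is exactly why those centering conditions were imposed in \Cref{Assumption:MD_model}. Once this is in place, the quadratic bound and the cubic remainder are routine. The one point I would be careful to verify is that the $O(\norm{\vec{v}}^3)$ constant is uniform over the support of $\mu_r^{1,2}$ (which holds because all third derivatives of $g_1$ are globally bounded), so that $M_3$, and hence $C_2$, does not secretly depend on $r$.
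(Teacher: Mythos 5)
Your proof is correct and follows essentially the same route as the paper's: a second-order Taylor expansion of the per-class integrand about $\vec{h}_1$ (resp.\ $\vec{h}_2$), where the centering conditions of \Cref{Assumption:MD_model} annihilate the first-order term and the bounded-support condition $\norm{\vec{v}}\leq Ar$ controls the quadratic term and the $O(r^3)$ remainder, which is absorbed into $C_2 r^2$ using $r<1$. The only cosmetic difference is that the paper first cancels the $\alpha$-linear term and the $\sprod{\vec{h}_1,\vec{v}}$ cross term algebraically before expanding the remaining logistic part, whereas you expand the full function $g_1$ and kill the entire gradient term at once by observing it lies in $\operatorname{span}\set{\vec{w}_2-\vec{w}_1,\vec{h}_1}$.
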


\begin{proof}
By symmetry we only need to consider the half of $F_{\lambda,\alpha}$ that involves $\vec{h}_1$, and to simplify notations we denote this by $\tilde{F} = \tilde{F}(r)$ with
\begin{align*}
    \tilde{F}(r) &= \int \Bigg(\ell_\alpha \Big(\vec{W},\vec{h}_1 + \vec{v}, \vec{y}_1^{(\alpha)} \Big)+ \lambda \norm{\vec{h}_1+\vec{v}}^2 \Bigg) d\mu_r^1(\vec{v})\\
    &=\int \Bigg(\log \Big(1+e^{\sprod{\vec{w}_2-\vec{w}_1,\vec{h}_1+\vec{v}}}\Big) -\frac{\alpha}{2}\sprod{\vec{w}_2-\vec{w}_1,\vec{h}_1+\vec{v}} + \lambda\norm{\vec{h}_1+\vec{v}}^2\Bigg) d\mu_r^1(\vec{v})\\
    &=\int \Bigg(\log \Big(1+e^{\sprod{\vec{w}_2-\vec{w}_1,\vec{h}_1+\vec{v}}}\Big)  + \lambda\norm{\vec{v}}^2\Bigg)d\mu_r^1(\vec{v}) - \frac{\alpha}{2}\sprod{\vec{w}_2-\vec{w}_1,\vec{h}_1} +\lambda \norm{\vec{h}_1}^2,
\end{align*}
where the last equality comes from the second statement in Assumption \ref{Assumption:MD_model}.
We denote the integrand in the above formulation by $\tilde{f}$,  
\begin{align*}
    \tilde{f}(\vec{v}) = \log \Big( 1+ e^{\sprod{\vec{w}_2-\vec{w}_1,\vec{h}_1+\vec{v}}}\Big) + \lambda \norm{\vec{v}}^2,
\end{align*}

Now we approximate $\tilde{f}$ using its second-order Taylor expansion, which yields a rest of order $O(\norm{v}^3)$. From the second statement in Assumption \ref{Assumption:MD_model}, $\norm{v}$ is upper bounded by $Ar$ and hence the rest of the Taylor approximation is of order $O(r^3)$. Hence we obtain 
\begin{align*}
    \tilde{f}(\vec{v}) &= \tilde{f}(0) + \sprod{\nabla \tilde{f}(0), \vec{v}} + \frac{1}{2}  \sprod{H\tilde{f}(0) \vec{v}, \vec{v}} + O(r^3)\\
    &= \tilde{f}(0) + \frac{e^{\sprod{\vec{w}_2-\vec{w}_1,\vec{h}_1}}}{1+e^{\sprod{\vec{w}_2-\vec{w}_1,\vec{h}_1}}}\sprod{\vec{w}_2-\vec{w}_1, \vec{v}} \\&\quad + \frac{e^{\sprod{\vec{w}_2-\vec{w}_1,\vec{h}_1}}}{2\Big( 1+ e^{\sprod{\vec{w}_2-\vec{w}_1,\vec{h}_1}} \Big)^2} \sprod{ \vec{w}_2-\vec{w}_1, \vec{v}}^2 + \lambda \norm{\vec{v}}^2 + O(r^3).
\end{align*}

Taking the integral $\int d\mu_1^r(\vec{v})$ we see that again due to the second statement in Assumption \ref{Assumption:MD_model}, the first order term in the Taylor expansion of $\tilde{f}$ vanishes. Therefore we obtain
\begin{align*}
    \tilde{F}(r) -\tilde{F}(0)&= \int \tilde{f}(\vec{v}) d\mu_1^r(\vec{v}) \\
    &= \int \Bigg(\frac{e^{\sprod{\vec{w}_2-\vec{w}_1,\vec{h}_1}}}{2\Big( 1+ e^{\sprod{\vec{w}_2-\vec{w}_1,\vec{h}_1}} \Big)^2} \sprod{ \vec{w}_2-\vec{w}_1, v}^2 + \lambda \norm{\vec{v}}^2 \Bigg) d\mu_1^r(\vec{v}) + O(r^3)\\
    &\leq \Bigg(\frac{e^{\sprod{\vec{w}_2-\vec{w}_1,\vec{h}_1}} \norm{\vec{w}_2-\vec{w}_1}^2}{2\Big( 1+ e^{\sprod{\vec{w}_2-\vec{w}_1,\vec{h}_1}} \Big)^2}+ \lambda \Bigg) \int \norm{\vec{v}}^2 d\mu_1^r(\vec{v}) + O(r^3)\\
    &\leq A^2\Bigg(\frac{e^{\sprod{\vec{w}_2-\vec{w}_1,\vec{h}_1}} \norm{\vec{w}_2-\vec{w}_1}^2}{2\Big( 1+ e^{\sprod{\vec{w}_2-\vec{w}_1,\vec{h}_1}} \Big)^2}+ \lambda \Bigg)  r^2 + O(r^3),
\end{align*}
where $A$ is the constant for which $\norm{\vec{v}} \leq Ar$ holds (see Assumption \ref{Assumption:MD_model}). Thus the desired statement follows with 
\begin{align*}
    C_2 =  A^2\Bigg(\frac{e^{\sprod{\vec{w}_2-\vec{w}_1,\vec{h}_1}} \norm{\vec{w}_2-\vec{w}_1}^2}{2\Big( 1+ e^{\sprod{\vec{w}_2-\vec{w}_1,\vec{h}_1}} \Big)^2}+ \lambda \Bigg)  .
\end{align*}

\end{proof}

\subsection{Estimation of the optimal dilation $r_*$}\label{section:optimal_dilation}
In this subsection we come back to the $\mathcal{MD}$ problem, i.e. the minimization of the MD risk
\begin{align*}
    \min_{U,r} \quad F_{\lambda,\alpha}(\vec{W},\vec{H},r) + \eta G_{\lambda,\alpha} (\vec{W},\vec{U},r) \quad \text{s.t}\quad  (\ref{eq:MD_condition1}),(\ref{eq:MD_condition2}).
\end{align*}
As discussed in Subsection \ref{Section:MD_prepare_proof} we can simplify this problem by inserting into $U$ the solution $U(r)$ to the problem
\begin{align*}
    \min_{U} \quad G_{\lambda,\alpha}(\vec{W},\vec{U},r) \quad \text{s.t.} \quad  (\ref{eq:MD_condition1}),(\ref{eq:MD_condition2}).
\end{align*}
Then the $\mathcal{MD}$ problem is reduced to the minimization over the dilation $r$, namely
\begin{align}\label{Problem:MD_reduced_1}
    \min_{r} \quad F_{\lambda,\alpha}(\vec{W},\vec{H},r) + \eta G_{\lambda,\alpha} \big(\vec{W},\vec{U}(r),r\big).
\end{align}

\begin{lemma}\label{lemma:optimal_dilation}
The solution $r_*$ to the problem (\ref{Problem:MD_reduced_1}) satisfies $r_{\max} \geq r_* \geq r_{\max} (1-C' \eta ^{1/2})$, where $C'$ is a constant given by 
\begin{align*}
     C' := \frac{A\sqrt{2}\norm{\vec{h}_1-\vec{h}_2}}{C_{MD}}.
\end{align*}
\end{lemma}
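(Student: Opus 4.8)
Write $\mathcal{R}(r) := F_{\lambda,\alpha}(\vec{W},\vec{H},r) + \eta\, G_{\lambda,\alpha}(\vec{W},\vec{U}(r),r)$ for the reduced risk in (\ref{Problem:MD_reduced_1}) and set $r_1 := r_{\max}(1-C'\eta^{1/2})$. The plan is to show that every $r < r_1$ is strictly worse than $r_{\max}$, so that the minimizer $r_*$ cannot drop below $r_1$. Two structural facts drive the argument. First, $r_* \le r_{\max}$: by Remark~\ref{remark:h1_opt} the side constraint is inactive once $r \ge r_{\max}$, so there $\vec{U}(r) = (\vec{h}_1,\vec{h}_2)$ and $G_{\lambda,\alpha}(\vec{W},\vec{U}(r),r)$ is constant, while $r \mapsto F_{\lambda,\alpha}(\vec{W},\vec{H},r)$ is non-decreasing with minimum at $r=0$; hence no $r>r_{\max}$ can beat $r_{\max}$. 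Second, since $G_{\lambda,\alpha}(\vec{W},\vec{U},r)$ carries no explicit dependence on $r$ and its feasible region (the ball of radius $C_{MD}r/(\eta\norm{\vec{h}_1-\vec{h}_2})$ about $\vec{h}_2$) only grows as $r$ increases, the map $r \mapsto G_{\lambda,\alpha}(\vec{W},\vec{U}(r),r)$ is non-increasing.

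Now fix any $r < r_1$. Using $F_{\lambda,\alpha}(\vec{W},\vec{H},r) \ge F_{\lambda,\alpha}(\vec{W},\vec{H},0)$ together with the monotonicity of $G$ (which gives $G_{\lambda,\alpha}(\vec{W},\vec{U}(r),r) \ge G_{\lambda,\alpha}(\vec{W},\vec{U}(r_1),r_1)$), I would estimate
\begin{align*}
\mathcal{R}(r) - \mathcal{R}(r_{\max}) &\ge \big[F_{\lambda,\alpha}(\vec{W},\vec{H},0) - F_{\lambda,\alpha}(\vec{W},\vec{H},r_{\max})\big] \\
&\quad + \eta\big[G_{\lambda,\alpha}(\vec{W},\vec{U}(r_1),r_1) - G_{\lambda,\alpha}(\vec{W},\vec{U}(r_{\max}),r_{\max})\big].
\end{align*}
Lemma~\ref{lemma:behavior_f} bounds the first bracket below by $-C_2 r_{\max}^2$, and Lemma~\ref{lemma:behavior_g}, applied at $r_1$ (legitimate since for small $\eta$ one has $r_1 \in [r_{\max}/\sqrt{2},r_{\max}]$ and $(r_{\max}-r_1)/\eta < 1$), bounds the second by $C_1(r_{\max}-r_1)^2/\eta$. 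As $r_{\max}-r_1 = r_{\max}C'\eta^{1/2}$, the latter equals $C_1(C')^2 r_{\max}^2$, whence
\begin{align*}
\mathcal{R}(r) - \mathcal{R}(r_{\max}) \ge \big(C_1 (C')^2 - C_2\big)\, r_{\max}^2 .
\end{align*}
Evaluating Lemma~\ref{lemma:behavior_g} precisely at $r_1$, rather than at the running $r$, is what lets a single comparison cover all $r<r_1$ at once, sidestepping the fact that Lemma~\ref{lemma:behavior_g} is only available on $[r_{\max}/\sqrt{2},r_{\max}]$.

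It then remains to verify the purely algebraic inequality $C_1(C')^2 \ge 2C_2$, which yields $\mathcal{R}(r)-\mathcal{R}(r_{\max}) \ge C_2 r_{\max}^2 > 0$ and finishes the proof. Writing $B$ for the common factor $e^{\sprod{\vec{w}_2-\vec{w}_1,\vec{h}_1}}\norm{\vec{w}_2-\vec{w}_1}^2 / \big(2(1+e^{\sprod{\vec{w}_2-\vec{w}_1,\vec{h}_1}})^2\big)$, the explicit constants from Lemmas~\ref{lemma:behavior_f} and \ref{lemma:behavior_g} give $C_2/C_1 = A^2 \tfrac{B+\lambda}{B+2\lambda}\, \norm{\vec{h}_1-\vec{h}_2}^2/C_{MD}^2 \le A^2\norm{\vec{h}_1-\vec{h}_2}^2/C_{MD}^2 = (C')^2/2$, using the stated $C' = A\sqrt{2}\,\norm{\vec{h}_1-\vec{h}_2}/C_{MD}$; hence $C_1(C')^2 \ge 2C_2$, strictly since $\lambda>0$. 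I expect this constant bookkeeping to be the main obstacle: one must correctly match the Hessian/sigmoid prefactors appearing in $C_1$ and $C_2$, use the \emph{lower} estimate $c$ from Lemma~\ref{lemma:problem_u3} inside $C_1$, and confirm that the smallness of $\eta$ guaranteed by Assumption~\ref{Assumption:MD_model} suffices to place $r_1$ in the range where Lemma~\ref{lemma:behavior_g} applies. Combining $r_* \le r_{\max}$ with $r_* \ge r_1$ is exactly the assertion.
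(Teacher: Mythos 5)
Your proposal is correct and follows essentially the same route as the paper's own proof: both show $r_*\le r_{\max}$ via Remark \ref{remark:h1_opt}, then rule out every $r$ below the threshold $r_{\max}(1-C'\eta^{1/2})$ by comparing against $r_{\max}$, bounding the increase of $F$ with Lemma \ref{lemma:behavior_f}, the decrease of $\eta G$ with Lemma \ref{lemma:behavior_g} together with the monotonicity of $r\mapsto G_{\lambda,\alpha}(\vec{W},\vec{U}(r),r)$, and finishing with the constant comparison $C_2/C_1\le (C')^2/2$. The only differences are presentational: you fix the comparison point $r_1$ outright (where the paper quantifies over $\epsilon\ge C'\eta^{1/2}$), and you make explicit two steps the paper uses implicitly, namely the justification of the monotonicity of $G$ and the verification that the stated $C'$ dominates $\sqrt{C_2/C_1}$.
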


\begin{proof}

To shorten notations, we define
\begin{align*}
    f(r) := F_{\lambda,\alpha}(\vec{W},\vec{H},r) \quad \text{and} \quad g(r) := G_{\lambda,\alpha} \big(\vec{W},\vec{U}(r),r\big).
\end{align*}
Then the problem (\ref{Problem:MD_reduced_1}) can be rewritten as
\begin{align}\label{Problem:MD_reduced_2}
    \min_{r} \quad f(r) + \eta g(r). 
\end{align}

First we observe that $r_* \leq r_{\max}$ because for any $r> r_{\max}$ we have that $f(r) > f(r_{\max})$ while $g(r) = g(r_{\max})$ (see Remark \ref{remark:h1_opt}). Thus we only need to show the lower bound on $r_*$. 

Let $\epsilon \in \left(0,\frac{C_{MD}}{\norm{h_1-h_2}^2}\right) $, we will show that the solution to the reduced $\mathcal{MD}$ problem (\ref{Problem:MD_reduced_2}) cannot be $r$ for any $r <(1-\epsilon) r_{\max}$, provided that $\epsilon$ is sufficiently large (this condition will be later specified more precisely). To see this we will show that for any such $r$ it holds
\begin{align}\label{eq:goal}
    f(r)+ \eta g(r) > f(r_{\max}) + \eta g(r_{\max}).
\end{align}

By Lemma \ref{lemma:behavior_f} we have that 
\begin{align*}
    f(r_{\max}) - f(r) \leq f(r_{\max}) -f(0) \leq C_2 r_{\max}^2.
\end{align*}

On the other hand, from Lemma \ref{lemma:behavior_g} it follows that
\begin{align*}
    g(r) - g(r_{\max}) \geq g\Big( (1-\epsilon)r_{\max}\Big) - g(r_{\max}) \geq C_1 \frac{\epsilon^2 r_{\max}^2}{\eta^2}
\end{align*}
holds for some constant $c_2 >0$. 

Combining the above observations we see that (\ref{eq:goal}) will hold if 
\begin{align*}
    C_1 \frac{\epsilon^2 r_{\max}^2}{\eta} \geq C_2 r_{\max}^2,
\end{align*}
which holds provided that
\begin{align*}
    \epsilon \geq C' \eta^{1/2}
\end{align*}
with 
\begin{align*}
    C' =  \frac{A\sqrt{2}\norm{\vec{h}_1-\vec{h}_2}}{C_{MD}}.
\end{align*}

Since any candidate outside the interval $\Big[ r_{\max}(1-C' \eta^{1/2}), r_{\max}\Big]$ is worse than $r_{\max}$, we conclude that $r_*$ must be in this interval. 

\end{proof}

\subsection{Finalizing the proof}\label{section:finalize_proof}
In previous subsections we have approximately estimated the optimal dilation $r_*$ of the $\mathcal{MD}$ problem in general, i.e. the LS parameter $\alpha$ can take any value in $\{0, \alpha_0\}$. Now we distinguish between the two values of $\alpha$ by adding the superscripts $LS$ (corresponding to $\alpha = \alpha_0$) and $CE$ (corresponding to $\alpha = 0$), and we will finalize the proof of Theorem \ref{theorem:MD_model} by showing 
\begin{align}\label{eq:desired_MD}
    \frac{r^{CE}_*}{\norm{\vec{h}_1^{CE} - \vec{h}_2^{CE}}} > \frac{r^{LS}_*}{\norm{\vec{h}_1^{LS} - \vec{h}_2^{LS}}}.
\end{align}

By Assumption \ref{Assumption:MD_model} we have $\norm{\vec{h}_1^{CE} - \vec{h}_2^{CE}} = \gamma \norm{\vec{h}_1^{LS} - \vec{h}_2^{LS}}$, hence
\begin{align*}
    \frac{r^{CE}_{\max}}{\norm{\vec{h}_1^{CE} - \vec{h}_2^{CE}}} = \frac{\eta \norm{\vec{h}_1^{CE} - \vec{h}_2^{CE}}}{C_{MD}}= \gamma \frac{\eta \norm{\vec{h}_1^{LS} - \vec{h}_2^{LS}}}{C_{MD}} = \gamma \frac{r^{LS}_{\max}}{\norm{\vec{h}_1^{LS} - \vec{h}_2^{LS}}}.
\end{align*}
Combining this and Lemma \ref{lemma:optimal_dilation} we obtain
\begin{align*}
    \quad \frac{r^{CE}_{*}}{\norm{\vec{h}_1^{CE} - \vec{h}_2^{CE}}} 
    &> \frac{\Big(1-C'\eta^{1/2}\Big)r^{CE}_{\max}}{\norm{\vec{h}_1^{CE} - \vec{h}_2^{CE}}} \\
    &= \gamma \Big(1-C'\eta^{1/2}\Big)\frac{r^{LS}_{\max}}{\norm{\vec{h}_1^{LS} - \vec{h}_2^{LS}}}\\
    &\geq \gamma \Big(1-C'\eta^{1/2}\Big)\frac{r^{LS}_{*}}{\norm{\vec{h}_1^{LS} - \vec{h}_2^{LS}}}.
\end{align*}
Hence (\ref{eq:desired_MD}) holds provided that $\gamma \Big(1-C'\eta^{1/2}\Big) \geq 1$, which follows from the third statement in Assumption \ref{Assumption:MD_model}. 
\end{document}